\newcommand{\nc}{\newcommand}
\theoremstyle{definition}
\newtheorem{theorem}{Theorem}[section]
\newtheorem{definition}[theorem]{Definition}
\newtheorem{lemma}[theorem]{Lemma}
\theoremstyle{remark}
\nc\remove[1]{}
\nc\bfa{{\boldsymbol a}}\nc\bfA{{\mathbf A}}\nc\cA{{\mathcal A}}
\nc\bfb{{\boldsymbol b}}\nc\bfB{{\mathbf B}}\nc\cB{{\mathcal B}}
\nc\bfc{{\boldsymbol c}}\nc\bfC{{\mathbf C}}\nc\cC{{\mathcal C}}
\nc\bfd{{\boldsymbol d}}\nc\bfD{{\mathbf D}}\nc\cD{{\mathcal D}}\nc\sD{{\mathscr D}}
\nc\bfe{{\boldsymbol e}}\nc\bfE{{\mathbf E}}\nc\cE{{\EuScript E}}
\nc\bff{{\boldsymbol f}}\nc\bfF{{\mathbf F}}\nc\cF{{\mathcal F}}
\nc\bfg{{\boldsymbol g}}\nc\bfG{{\mathbf G}}\nc\cG{{\mathcal G}}
\nc\bfh{{\boldsymbol h}}\nc\bfH{{\mathbf H}}\nc\cH{{\mathcal H}}
\nc\bfi{{\boldsymbol i}}\nc\bfI{{\mathbf I}}\nc\cI{{\mathcal I}}
\nc\bfj{{\boldsymbol j}}\nc\bfJ{{\mathbf J}}\nc\cJ{{\mathcal J}}
\nc\bfk{{\boldsymbol k}}\nc\bfK{{\mathbf K}}\nc\cK{{\mathcal K}}
\nc\bfl{{\boldsymbol l}}\nc\bfL{{\mathbf L}}\nc\cL{{\mathcal L}}\nc\sL{{\mathscr L}}
\nc\bfm{{\boldsymbol m}}\nc\bfM{{\mathbf M}}\nc\cM{{\mathcal M}}
\nc\bfn{{\boldsymbol n}}\nc\bfN{{\mathbf N}}\nc\cN{{\mathcal N}}
\nc\bfo{{\boldsymbol o}}\nc\bfO{{\mathbf O}}\nc\cO{{\mathcal O}}
\nc\bfp{{\boldsymbol p}}\nc\bfP{{\mathbf P}}\nc\cP{{\mathcal P}}
\nc\bfq{{\boldsymbol q}}\nc\bfQ{{\mathbf Q}}\nc\cQ{{\mathcal Q}}\nc\sQ{{\mathscr Q}}
\nc\bfr{{\boldsymbol r}}\nc\bfR{{\mathbf R}}\nc\cR{{\mathcal R}}
\nc\bfs{{\boldsymbol s}}\nc\bfS{{\mathbf S}}\nc\cS{{\mathcal S}}
\nc\bft{{\boldsymbol t}}\nc\bfT{{\mathbf T}}\nc\cT{{\mathcal T}}\nc\sT{{\mathscr T}}
\nc\bfu{{\boldsymbol u}}\nc\bfU{{\mathbf U}}\nc\cU{{\mathcal U}}
\nc\bfv{{\boldsymbol v}}\nc\bfV{{\mathbf V}}\nc\cV{{\mathcal V}}
\nc\bfw{{\boldsymbol w}}\nc\bfW{{\mathbf W}}\nc\cW{{\mathcal W}}\nc\sW{{\mathscr W}}
\nc\bfx{{\boldsymbol x}}\nc\bfX{{\mathbf Z}}\nc\cX{{\EuScript X}}
\nc\bfy{{\boldsymbol y}}\nc\bfY{{\mathbf Y}}\nc\cY{{\EuScript Y}}\nc\sY{{\mathscr Y}}
\nc\bfz{{\boldsymbol z}}\nc\bfZ{{\mathbf Z}}\nc\cZ{{\mathcal Z}}\nc\sZ{{\mathscr Z}}
\def\h_q{\qopname\relax{no}{h_q}}
\def\h{\qopname\relax{no}{h}}
\def\sgn{\qopname\relax{no}{sgn}}
\begin{document}
%
%\twocolumn[
%
%\aistatstitle{Efficient Rank Aggregation via Lehmer Codes}
%
%\aistatsauthor{ Anonymous Author 1 \And Anonymous Author 2 \And Anonymous Author 3 }
%
%\aistatsaddress{ Unknown Institution 1 \And Unknown Institution 2 \And Unknown Institution 3 } ]

%\title{Spectral Clustering of Inhomogenous Hypergraphs with p-Laplacians}
%\renewcommand*{\thefootnote}{\fnsymbol{footnote}}
\title{Submodular Hypergraphs: $p$-Laplacians, Cheeger Inequalities and Spectral Clustering}
\author{ Pan~Li \quad Olgica Milenkovic}
\footnotetext[1]{The authors are with the Coordinated Science Laboratory, Department of Electrical and Computer Engineering, University of Illinois at Urbana-Champaign (email: panli2@illinois.edu, milenkov@illinois.edu)}
\footnotetext[2]{The code for Algorithm 2 (inverse power method) can be found in https://github.com/lipan00123/IPM-for-submodular-hypergraphs.}
\date{}
\maketitle
\renewcommand*{\thefootnote}{\arabic{footnote}}

\begin{abstract}
We introduce submodular hypergraphs, a family of hypergraphs that have different submodular weights associated with different cuts of hyperedges. Submodular hypergraphs arise in clustering applications in which higher-order structures carry relevant information. For such hypergraphs, we define the notion of $p$-Laplacians and derive corresponding nodal domain theorems and $k$-way Cheeger inequalities. We conclude with the description of algorithms for computing the spectra of $1$- and $2$-Laplacians that constitute the basis of new spectral hypergraph clustering methods. 
\end{abstract}

\section{Introduction}
Spectral clustering algorithms are designed to solve a relaxation of the graph cut problem based on graph Laplacians that capture pairwise dependencies between vertices, and produce sets with small conductance that represent clusters. Due to their scalability and provable performance guarantees, spectral methods represent one of the most prevalent graph clustering approaches~\cite{chung1997spectral,ng2002spectral}. 

Many relevant problems in clustering, semisupervised learning and MAP inference~\cite{zhou2007learning,hein2013total,pmlr-v70-zhang17d} involve higher-order vertex dependencies that require one to consider hypergraphs instead of graphs. To address spectral hypergraph clustering problems, several approaches have been proposed that typically operate by first projecting the hypergraph onto a graph via \emph{clique expansion} and then performing spectral clustering on graphs~\cite{zhou2007learning}. Clique expansion involves transforming a weighted hyperedge into a weighted clique such that the graph cut weights approximately preserve the cut weights of the hyperedge. Almost exclusively, these approximations have been based on the assumption that each hyperedge cut has the same weight, in which case the underlying hypergraph is termed \emph{homogeneous}.

However, in image segmentation, MAP inference on Markov random fields~\cite{arora2012generic, shanu2016min}, network motif studies~\cite{li2017inhomogeneous,benson2016higher,tsourakakis2017scalable} and rank learning~\cite{li2017inhomogeneous}, higher order relations between vertices captured by hypergraphs are typically associated with different cut weights. In~\cite{li2017inhomogeneous}, Li and Milenkovic generalized the notion of hyperedge cut weights by assuming that different hyperedge cuts have different weights, and that consequently, each hyperedge is associated with a vector of weights rather than a single scalar weight. If the weights of the hyperedge cuts are submodular, then one can use a graph with nonnegative edge weights to efficiently approximate the hypergraph, provided that the largest size of a hyperedge is a relatively small constant. This property of the projected hypergraphs allows one to leverage spectral hypergraph clustering algorithms based on clique expansions with provable performance guarantees. Unfortunately, the clique expansion method in general has two drawbacks: The spectral clustering algorithm for graphs used in the second step is merely quadratically optimal, while the projection step can cause a large distortion. 

To address the quadratic optimality issue in graph clustering, Amghibech~\cite{amghibech2003eigenvalues} introduced the notion of $p$-Laplacians of graphs and derived Cheeger-type inequalities for the second smallest eigenvalue of a $p$-Laplacian, $p>1$, of a graph. These results motivated B$\ddot{\text{u}}$hler and Hein's work~\cite{buhler2009spectral} on spectral clustering based on $p$-Laplacians that provided tighter approximations of the Cheeger constant. Szlam and Bresson~\cite{szlam2010total} showed that the $1$-Laplacian allows one to exactly compute the Cheeger constant, but at the cost of computational hardness~\cite{chang2016spectrum}. Very little is known about the use of $p$-Laplacians for hypergraph clustering and their spectral properties. 

To address the clique expansion problem, Hein et al.~\cite{hein2013total} introduced a clustering method for homogeneous hypergraphs that avoids expansions and works directly with the total variation of homogeneous hypergraphs, without investigating the spectral properties of the operator. The only other line of work trying to mitigate the projection problem is due to Louis~\cite{louis2015hypergraph}, who used a natural extension of $2$-Laplacians for homogeneous hypergraphs, derived quadratically-optimal Cheeger-type inequalities and proposed a semidefinite programing (SDP) based algorithm whose complexity scales with the size of the largest hyperedge in the hypergraph. 
 
Our contributions are threefold. First, we introduce submodular hypergraphs. Submodular hypergraphs allow one to perform hyperedge partitionings that depend on the subsets of elements involved in each part, thereby respecting higher-order and other constraints in graphs (see~\cite{li2017inhomogeneous,  arora2012generic, fix2013structured} for applications in food network analysis, learning to rank, subspace clustering and image segmentation). Second, we define $p$-Laplacians for submodular hypergraphs and generalize the corresponding discrete nodal domain theorems~\cite{tudisco2016nodal,chang2017nodal} and higher-order Cheeger inequalities. Even for homogeneous hypergraphs, nodal domain theorems were not known and only one low-order Cheeger inequality for $2$-Laplacians was established by Louis~\cite{louis2015hypergraph}. An analytical obstacle in the development of such a theory is the fact that $p$-Laplacians of hypergraphs are operators that act on vectors and produce \emph{sets of values}. Consequently, operators and eigenvalues have to be defined in a set-theoretic manner. Third, based on the newly established spectral hypergraph theory, we propose two spectral clustering methods that learn the second smallest eigenvalues of $2$- and $1$-Laplacians. The algorithm for $2$-Laplacian eigenvalue computation is based on an SDP framework and can provably achieve quadratic optimality with an $O(\sqrt{\zeta(E)})$ approximation constant, where $\zeta(E)$ denotes the size of the largest hyperedge in the hypergraph. The algorithm for $1$-Laplacian eigenvalue computation is based on the inverse power method (IPM)~\cite{hein2010inverse} that only has convergence guarantees. The key novelty of the IPM-based method is that the critical inner-loop optimization problem of the IPM is efficiently solved by algorithms recently developed for decomposable submodular minimization~\cite{jegelka2013reflection, ene2015random, li2018revisiting}. Although without performance guarantees, given that the $1$-Laplacian provides the tightest approximation guarantees, the IPM-based algorithm -- as opposed to the clique expansion method~\cite{li2017inhomogeneous} -- performs very well empirically even when the size of the hyperedges is large. This fact is illustrated on several UC Irvine machine learning datasets available from~\cite{asuncion2007uci}.

\subsection{Other related works and applications}
In this work, we concentrated on rigorously characterizing the spectra of $p$-Laplacians of submodular hypergraphs. The obtained Cheeger inequalities and nodal domain theorems provide strong implication for spectral clustering. However, one should note that the Laplacians of submodular hypergraphs potentially hold a much wider range of usage. First, the spectrum of graph Laplacian can be used to construct the wavelet and Fourier frames of non-Euclidean topology. These frames have been widely leveraged in graph signal processing tasks~\cite{hammond2011wavelets, shuman2013emerging,zou2015nonparametric}. It is interesting to investigate whether the proposed submodular structures may coincide with some topology and induce other applicable frames for signal processing. Second, from Laplacian Eigenmap~\cite{belkin2003laplacian} to the graph convolutional neural network~\cite{defferrard2016convolutional}, graph Laplacians are the fundamental tools of network embedding that is widely used for many downstream machine learning jobs~\cite{tang2015line}. The proposed Laplacians potentially lead to a new class of graph embedding methods that may better capture complex high-order structures of  networks via the submodular assumption. Some new follow-up works along this line can be found in~\cite{yang2018meta, yadati2018hypergcn}. Also, our submodular hypergraphs consist of an important subclass of a more general concept ``submodularity over edges''. The latter imposes submodularity on the set of edges instead of vertices. Although many combinatorial optimization problems has been looked into in this more general settings~\cite{jegelka2017graph,mitrovic2018submodularity}, it is not clear whether it has a Laplacian formulation and some underlying spectral theory can be derived. 

The authors also would like to refer some follow-up works on efficient algorithms to compute the min-cut and PageRank for submodular hypergraphs, which essentially correspond to the decomposable submodular function minimization problems (DSFM) with incidence relations~\cite{li2018revisiting} and quadratic DSFM problems~\cite{li2018quadratic}.  

In an independent work, Yoshida also considered the spectral graph theory related to submodular hypergraphs~\cite{yoshida2017cheeger}. That work focused on 2-Laplacian and its first non-trivial eigenpair, and thus did not establish the most general $k$-way Cheeger inequalities and discrete nodal domain theorems shown in this work. 

The paper is organized as follows. Section~\ref{sec:preliminaries} contains an overview of graph Laplacians and introduces the notion of submodular hypergraphs. The section also contains a description of hypergraph Laplacians, and relevant concepts in submodular function theory. Section~\ref{sec:plap} presents the fundamental results in the spectral theory of $p$-Laplacians, while Section~\ref{sec:algs} introduces two algorithms for evaluating the second largest eigenvalue of $p$-Laplacians needed for $2$-way clustering. Section~\ref{sec:data} presents experimental results. All proofs are relegated to the Appendix.

\section{Mathematical Preliminaries} \label{sec:preliminaries}

A weighted graph $G=(V,E,w)$ is an ordered pair of two sets, the vertex set $V=[N]=\{{1,2,\ldots,N\}}$ and the edge set $E \subseteq V \times V$, equipped with a weight function $w: E \to \mathbb{R}^{+}$. 

A cut $C=(S,\bar{S})$ is a bipartition of the set $V$, while the cut-set (boundary) of the cut $C$ is defined as the set of edges that have one endpoint in $S$ and one in the complement of $S$, $\bar{S}$, i.e., 
$\partial S={\displaystyle \{(u,v)\in E\mid u\in S,v\in \bar{S}\}}$. The weight of the cut induced by $S$ equals $\text{vol}(\partial S)=\sum_{u \in S,\,v\in \bar{S}}\, w_{uv}$, while the conductance of the cut is defined as
$$c(S)=\frac{\text{vol}(\partial S)}{\min\{{\text{vol}(S), \text{vol}(\bar{S})\}}},$$
where $\text{vol}(S)=\sum_{u \in S}\, \mu_u$, and $\mu_u=\sum_{v \in V} w_{uv}$. Whenever clear from the context, for $e=(uv)$, we write $w_e$ instead of $w_{uv}$. Note that in this setting, the vertex weight values $\mu_u$ are determined based on the weights of edges $w_e$ incident to $u$. Clearly, one can use a different choice for these weights and make them independent from the edge weights, which is a generalization we pursue in the context of submodular hypergraphs. The smallest conductance of any bipartition of a graph $G$ is denoted by $h_2$ and referred to as the Cheeger constant of the graph. 
%\textcolor{blue}{Pan: I think we can remove the following discussion. The original motivation of this paragraph is to illustrate that we can either choose $\mu_u = 1$ or $\mu_u = d_u$ to connect the current definition with the well-known balanced cut. Since $\mu$ is proposed in a different way, we can remove this part for simplicity. } The Cheeger constant is closely related to other graph partition metrics, such as the RatioCut and NCut~\cite{von2007tutorial}, defined as
%\begin{align}
%\;\text{RatioCut}= \frac{N\,w(S)}{|S||\bar{S}|} \in  [\phi, 2\phi]\,; \; \text{NCut}= \frac{\text{vol}(V)\,w(S)}{\text{vol}(S)\,\text{vol}(\bar{S})} \in [\phi, 2\phi]. \label{RCutNCut} 
%\end{align}

A generalization of the Cheeger constant is the \emph{$k-$way Cheeger constant} of a graph $G$. Let $P_k$ denote the set of all partitions of $V$ into $k$-disjoint nonempty subsets, i.e.,
$P_k=\{(S_1, S_2, ..., S_k)| S_i \subset V, S_i \neq \emptyset, S_i \cap S_j = \emptyset, \forall i,j \in [k], i\neq j\}$. The $k-$way Cheeger constant is defined as
\begin{align*}
h_k=\min_{(S_1, S_2, ..., S_k)\in P_k} \max_{i \in[k]} \, c(S_i).
\end{align*}

Spectral graph theory provides a means for bounding the Cheeger constant using the (normalized) Laplacian matrix of the graph, defined as $L=D-A$ and $L=I-D^{-1/2}AD^{-1/2}$, respectively. Here, $A$ stands for the adjacency matrix of the graph, $D$ denotes the diagonal degree matrix, while $I$ stands for the identity matrix. The graph Laplacian is an operator $\triangle_2^{(g)}$~\cite{chung1997spectral} that satisfies
\begin{align*}
\langle x, \triangle_2^{(g)}(x)\rangle = \sum_{(uv) \in E} w_{uv} (x_u - x_v)^2. 
\end{align*}
A generalization of the above operator termed the $p$-Laplacian operator of a graph $\triangle_p^{(g)}$ was introduced by Amghibech in~\cite{amghibech2003eigenvalues}, where
\begin{align*}
\langle x, \triangle_p^{(g)} (x)\rangle = \sum_{(uv) \in E}  w_{uv}  |x_u - x_v|^p.
\end{align*}

The well known Cheeger inequality asserts the following relationship between $h_2$ and $\lambda$, the second smallest eigenvalue of the normalized Laplacian $\triangle_2^{(g)}$ of a graph: 
\begin{equation} \label{cheeger}
h_2 \leq \sqrt{2\lambda} \leq 2\sqrt{h}_2. \notag
\end{equation}
It can be shown that the cut $\hat{h}_2$ dictated by the elements of the eigenvector associated with $\lambda$ satisfies $\hat{h}_2 \leq  \sqrt{2\lambda}$, which implies $\hat{h}_2 \leq 2\sqrt{h}_2$. Hence, spectral clustering provides a quadratically optimal graph partition. 

\subsection{Submodular Hypergraphs}

A weighted hypergraph $G=(V,E,w)$ is an ordered pair of two sets, the vertex set $V=[N]$ and the hyperedge set $E \subseteq 2^V$, equipped with a weight function $w: E \to \mathbb{R}^{+}$. The relevant notions of cuts, boundaries and volumes for hypergraphs can be defined in a similar manner as for graphs. If each cut of a hyperedge $e$ has the same weight $w_e$, we refer to the cut as a homogeneous cut and the corresponding hypergraph as a homogeneous hypergraph. 
%Otherwise, we refer to the hypergraph as inhomogeneous~\cite{li2017inhomogeneous}. \textcolor{blue}{Pan: I am trying to avoid defining ``inhomogeneous'', as all the rest of work focuses on submodular weights... }
%\textcolor{blue}{Pan: My version of this transition: Traditionally,  each cut of a given hyperedge $e$ is supposed to incur the same cost $w_e$, while the recent work~\cite{li2017inhomogeneous} considers that different cuts may yield different cost. } 

For a ground set $\Omega$, a set function $f: 2^{\Omega} \to \mathbb{R}$ is termed submodular if for all $S,T \subseteq \Omega$, one has $f(S)+f(T) \geq f(S \cup T)+f(S \cap T)$. 

A weighted hypergraph $G=(V,E,\boldsymbol{\mu},\mathbf{w})$ is termed a \emph{submodular hypergraph} with vertex set $V$, hyperedge set $E$ and positive vertex weight vector $\boldsymbol{\mu} \triangleq \{\mu_v\}_{v\in V},$ if each hyperedge $e\in E$ is associated with a submodular weight function $w_e(\cdot): 2^e\rightarrow [0, 1]$. In addition, we require the weight function $w_e(\cdot)$ to be:

1) Normalized, so that $w_e(\emptyset)= 0$, and all cut weights corresponding to a hyperedge $e$ are normalized by $\vartheta_e= \max_{S\subseteq e} w_e(S)$. In this case, $w_e(\cdot) \in [0,1]$; 

2) Symmetric, so that $w_e(S)=w_e(e\backslash S)$ for any $S\subseteq e$;

The submodular hyperedge weight functions are summarized in the vector  $\mathbf{w} \triangleq \{(w_e, \vartheta_e)\}_{e\in E}.$
%, where the normalization constants $\vartheta_e$ are chosen so as to ensure that $\max_{S\subseteq e} w_e(S) = 1$ for all $e \in E$. \textcolor{red}{clarify;notation is really difficult to distinguish}. 
If $w_e(S)=1$ for all $S\in 2^e\backslash\{\emptyset, e\}$, submodular hypergraphs reduce to homogeneous hypergraphs. We omit the designation homogeneous whenever there is no context ambiguity. %\textcolor{blue}{Pan: Does the statement means ``hypergraph'' means ``homogeneous hypergraph''? Maybe omitting ``submodular'' for ``submodular hypergraph'' is more proper? And use ``homogeneous'' when we specify a homogeneous hypergraph. }. 
%Actually, $w_e$ can also be defined over power set $2^V$ consistently by letting $w_e(S) = w_e(S\cap e)$ for $S\subseteq V$, so later we will use $w_e(S)$ for brevity. Next, we need to define the degree of a vertex. 

Clearly, a vertex $v$ is in $e$ if and only if $w_e(\{v\}) > 0$: If $w_e(\{v\}) = 0$, the submodularity property implies that $v$ is not \emph{incident} to $e$, as for any $S\subseteq e\backslash\{v\}$, $|w_e(S\cup\{v\})-w_e(S)|\leq w_e(\{v\})=0$. 
%We then can denote the size of $e$ by $|e|$ that includes all vertices incident to $e$.

We define the degree of a vertex $v$ as $d_v = \sum_{e\in E:\, v\in e} \vartheta_e$, i.e., as the sum of the max weights of edges incident to the vertex $v$. Furthermore, for any vector  $y\in\mathbb{R}^N$, we define the projection weight of $y$ onto any subset $S \subseteq V$ as $y(S) = \sum_{v\in S} y_v$. The volume of a subset of vertices $S\subseteq V$ equals $\text{vol}(S)=\sum_{v\in S} \mu_v.$
%\begin{align}\label{volumn}
%\text{vol}(S) = 
%\boldsymbol{\mu}(S) =
%\sum_{v\in S} \mu_v. 
%\end{align}  
%We may have different definitions of the degree of a vertex $v$:
%\begin{itemize}
%\item[$I$] $d_{v, I} =  \sum_{e:\, v\in e} w_e(\{v\})$. 
%\item[$II$] $d_{v, II} =  \sum_{e:\, v\in e} \max_{S\in e}w_e(S)$. 
%\item[$III$] $d_{v, III} =  \sum_{e:\, v\in e} 1_{v\in e}$.
%\end{itemize}
%Actually, the definition $I$ was used in \cite{li2017inhomogoenous}, as the authors thought $w_e(\{v\})\geq |w_e(S\cup\{v\}) - w_e(S)|$ over any $S$, which indicates the most contribution of $v$ to $e$. It is easy to check that the definitions 1) and 2) both include weighted homogeneous hypergraphs as one special case, while the third definition adopted by Yoshida~\cite{yoshida2017cheeger} does not. In the following, we use $d_v$ to denote the degree of vertex $v$ for brevity and will add the subscript $I,II, III$ when it is necessary to clarify which definition is used. The volume of a subset of vertices $S\subseteq V$ is defined as
%The cut of InH-hypergraph is easier to be defined: Let $(S,\,\bar{S})$ be a partition of the vertices $V$.

For any $S \subseteq V$, we generalize the notions of the boundary of $S$ and the volume of the boundary of $S$ according to $\partial S = \{e\in E | e\cap S\neq \emptyset,   e\cap \bar{S} \neq \emptyset\}$, and 
\begin{align}\label{cut} 
\text{vol}(\partial S) = \sum_{e\in \partial S} \vartheta_e w_{e}(S) = \sum_{e\in E} \vartheta_e w_{e}(S),
\end{align}
%Also, define the \emph{cheeger constant} for $S$ as 
respectively. Then, the normalized cut induced by $S$, the Cheeger constant and the $k$-way Cheeger constant for hypergraphs are defined in an analogous manner as for graphs.

\subsection{Laplacian Operators for Hypergraphs}
We introduce next $p$-Laplacians of hypergraphs and a number of relevant notions associated with Laplacian operators.
%\textcolor{blue}{Pan: I think the following paragraph can be removed, as we has introduced this idea in the introduction.
%``For hypergraphs, no quadratic guarantees of the form of a Cheeger inequality are known for methods that use projections of arbitrarily weighted hyperedges into (complete) graphs, due to the distortion caused by the projection step~\cite{zhou2007learning, agarwal2005beyond, agarwal2006higher,li2017inhomogeneous}. The only known result establishing such guarantees is available for submodula hypergraphs~\cite{li2017inhomogeneous}, guaranteeing an $O(\zeta(E))$ Cheeger-type approximation with $\zeta(E)$ denoting the size of the largest hyperedge in $G$. To mitigate this problem, several approaches were suggested to directly generalize the notion of Laplacian operators for homogeneous hypergraphs in order to avoid working with projections.'' A simple transition like ``let us introduce the Laplacian of hypergraphs'' is fine. } 
 %Later, some efforts were given to define the Laplacian of homogeneous hypergraph~\cite{zhou2007learning, hein2013total}. Zhou's definition~\cite{zhou2007learning} is essentially based on a clique-expansion strategy where the hypergraph is first transformed into a regular graph and then use the Laplacian of the obtained graph instead. 

Hein et al.\cite{hein2013total} connected $p$-Laplacians $\triangle_p^{(h)}$ for homogeneous hypergraphs with the total variation via
\begin{align*}
\langle x, \triangle_p^{(h)}(x)\rangle = \sum_{e\in E}  w_{e} \max_{u,v\in e}|x_u - x_v|^p,
\end{align*}
where $w_{e}$ denotes the weight of a homogeneous hyperedge $e$. They also introduced the Inverse Power Method (IPM) to evaluate the spectrum of the hypergraph $1$-Laplacian $\triangle_1^{(h)}$~\cite{hein2013total}, but did not establish any performance guarantees. In an independent line of work, Louis~\cite{louis2015hypergraph} introduced a quadratic variant of a hypergraph Laplacian
\begin{align*}
\langle x, \triangle_2^{(h)}(x)\rangle = \sum_{e\in E} w_{e}\max_{u,v\in e}(x_u - x_v)^2.
\end{align*}
He also derived a Cheeger-type inequality relating the second smallest eigenvalue $\lambda$ of $\triangle_2^{(h)}$ and the Cheeger constant of the hypergraph $h_2$ that reads as $\hat{h}_2\leq O(\sqrt{\log \zeta(E)})\sqrt{\lambda} \leq O(\sqrt{\log \zeta(E)})\sqrt{h_2}$. Compared to the result of graph~\eqref{cheeger}, for homogeneous hypergraphs, $\log\zeta(E)$ plays as some additional difficulty to approximate $h_2$. Learning the spectrum of generalizations of hypergraph Laplacians can be an even more challenging task.

% but instead demonstrating empirically improved performance when compared to projection methods when $\zeta(E)$ is large. 
% \textcolor{blue}{Pan: I think we should discuss this part once more.}
%Computing the exact spectrum of the generalized Laplacian is hard, and hence approximations are sought instead~\cite{louis2015hypergraph}. 
\subsection{Relevant Background on Submodular Functions}

Given an arbitrary set function $F: 2^V \rightarrow \mathbb{R}$ satisfying $F(V)=0$, the \emph{Lov{\'a}sz extension}~\cite{lovasz1983submodular} $f:  \mathbb{R}^N \rightarrow  \mathbb{R}$ of $F$ is defined as follows: For any vector $x\in \mathbb{R}^N$, we order its entries in nonincreasing order $x_{i_1}\geq x_{i_2}\geq \cdots \geq x_{i_n}$ while breaking the ties arbitrarily, and set 
\begin{align}\label{lovazexp}
f(x) = \sum_{j=1}^{N-1}F(S_j)(x_{i_j}-x_{i_{j+1}}), 
\end{align}
with $S_{j} = \{i_1, i_2,...,i_j\}$. For submodular $F$, the Lov{\'a}sz extension is a convex function~\cite{lovasz1983submodular}.

Let $\mathbf{1}_S\in \mathbb{R}^N$ be the indicator vector of the set $S$. Hence, for any $S\subseteq V$, one has $F(S) = f(\mathbf{1}_S)$. For a submodular $F$, we define a convex set termed the \emph{base polytope} 
\begin{align*}
\mathcal{B} \triangleq \{y\in\mathbb{R}^N| y(S)\leq F(S),\;\text{for all }S\subseteq V,  \text{and such that } y(V)=F(V)=0\}.
\end{align*}
According to the defining property of submodular functions~\cite{lovasz1983submodular}, we may write $f(x) = \max_{y\in \mathcal{B}} \langle y, x\rangle$.
%The above formulation also shows that $f$ is a convex function.  
%Consider a function $g: X\rightarrow \mathbb{R}$, define its subgradient denoted as $\partial g$ in Clarke's sense~\cite{clarke1976new}
%\begin{align}\label{subgradientgeneral}
%\partial g(x) \triangleq \{y\in X^* | g(x')-g(x) \geq \langle y, x'-x \rangle, \, \forall x'\in X\}.
%\end{align}
 %For the Lov{\'a}sz extension $f$, this definition can be simplified as $\nabla f(x) \triangleq \{y\in \mathbb{R}^N | f(x')-f(x) \geq \langle y, x'-x \rangle, \, \forall x'\in \mathbb{R}^N\}$.
 
The subdifferential $\nabla f(x)$ of $f$ is defined as 
\begin{align*}
 \{y\in \mathbb{R}^N \, | \, f(x')-f(x)  \geq \langle y, x'-x \rangle, \, \forall x'\in \mathbb{R}^N\}.
\end{align*} 
%When we consider some $y\in \nabla f(x)$, we may use $y(x)$ to indicate that it can be viewed as a function of $x$.

%The following lemma characterizes the subdifferentials $\nabla f(x)$ of a Lov{\'a}sz extension $f$ of a submodular function.
An important result from~\cite{bach2013learning} characterizes the subdifferentials $\nabla f(x)$: If $f(x)$ is the Lov{\'a}sz extension of a submodular function $F$ with base polytope $\mathcal{B}$, then 
\begin{align}\label{subgradient}
\nabla f(x) =\arg\max_{y \in \mathcal{B}} \langle y, x \rangle.
\end{align}
Observe that $\nabla f(x)$ is a set and that the right hand side of the definition represents a set of maximizers of the objective function. If $f(x)$ is the Lov{\'a}sz extension of a submodular function, then $\langle q, x\rangle  = f(x)$ for all $q\in \nabla f(x)$. 

For each hyperedge $e\in E$ of a submodular hypergraph, following the above notations, we let $\mathcal{B}_e$, $\mathcal{E}(\mathcal{B}_e)$, $f_e$ denote the base polytope, the set of extreme points of the base polytope, and the Lov{\'a}sz extension of the submodular hyperedge weight function $w_e$, respectively. Note that for any $S\subseteq V$, $w_e(S) = w_e(S\cap e)$. 
Consequently, for any $y\in \mathcal{B}_e$, $y_v = 0$ for $v\not\in e$. Since $\nabla f_e \subseteq \mathcal{B}_e$, it also holds that $(\nabla f_e)_v = 0$ for $v\notin e$. When using formula~\eqref{lovazexp} to explicitly describe the Lov{\'a}sz extension $f_e$, we can either use a vector $x$ of dimension $N$ or only those of its components that lie in $e$. Furthermore, in the later case, $|\mathcal{E}(\mathcal{B}_e)|=|e|!$.
%Any operation $\mathcal{T}: \Omega \rightarrow \mathbb{R}$ operates over a subset $S$ of $\Omega$  yields a set denoted by $\mathcal{T}(S) \triangleq \{\mathcal{T}(y)| y\in S\}$. 

\section{$p$-Laplacians for submodular hypergraphs and the spectra}\label{sec:plap}
We start our discussion by defining the notion of a $p$-Laplacian operator for submodular hypergraphs. We find the following definitions useful for our subsequent exposition. 

Let $\text{sgn}(\cdot)$ be the sign function defined as $\text{sgn}(a)=1,$ for $a > 0$, $\text{sgn}(a)=-1,$ for $a <0$, and $\text{sgn}(a)=[-1,1],$ for $a = 0$. For all $v\in V$, define the entries of a vector $\varphi_p$ over $\mathbb{R}^N$ according to $(\varphi_p(x))_v=|x_v|^{p-1}\text{sgn}(x_v)$. Let $\|x\|_{\ell_p, \mu}= (\sum_{v\in V}\mu_v |x_{v}|^p)^{1/p}$ and $\mathcal{S}_{p,\mu} \triangleq \{x\in \mathbb{R}^N | \|x\|_{\ell_p, \mu}=1\}$. For a function $\Phi$ over $\mathbb{R}^N$, let $\Phi|_{\mathcal{S}_{p,\mu}}$ stand for $\Phi$ restricted to $\mathcal{S}_{p,\mu}$.

\begin{definition}\label{lapdef}
The $p$-Laplacian operator of a submodular hypergraph, denoted by $\triangle_p$ ($p\geq 1$), is defined for all $x\in \mathbb{R}^N$ according to
\begin{align} \label{lap1}
%\langle x,\triangle_p(x)\rangle= \langle \nabla x, W|\nabla x|^{p-2}\nabla x \rangle=\sum_{e\in E} w_e^*\left(\sum_{k=1}^{n-1} \frac{w_e(S_{k, e})}{w_e^*} (x_{i_k}-x_{i_{k+1}})\right)^{p}
\langle x,\triangle_p(x)\rangle\triangleq Q_p(x) = \sum_{e\in E} \vartheta_e f_e(x)^p .
\end{align}
Hence, $\triangle_p(x)$ may also be specified directly as an operator over $\mathbb{R}^N$ that reads as
\begin{align*}
\triangle_p(x)& = \left\{ \begin{array}{cc} \sum_{e\in E} \vartheta_e f_e(x)^{p-1} \nabla f_e(x) & p>1, \\  \sum_{e\in E} \vartheta_e \nabla f_e(x) & p=1.  \end{array}\right. %\{\sum_{e\in E} t_{e, i_j}f_e(x)^{p-1}| t_{e} \in \nabla f_e(x)  \}\\
%(\mathcal{L}_p(x))_{i_j}&= \{\frac{1}{d_{i_j}}\sum_{e\in E} t_{e, i_j}\left(\sum_{k=1}^{n-1}\frac{w_e(S_{k, e})}{w_e^*} (x_{i_k}-x_{i_{k+1}})\right)^{p-1} | t_{e} \in \nabla f_e(x)\}
\end{align*}
%here $(\cdot)_v$ is the operation to choose the component corresponding to $v$.
\end{definition}
%The eigenpairs of $\triangle_p$ is defined as 
\begin{definition}\label{eigendef}
A pair $(\lambda, x)\in \mathbb{R}\times \mathbb{R}^N/\{\mathbf{0}\}$ is called an eigenpair of the $p$-Laplacian $\triangle_p$ if $\triangle_p(x) \cap \lambda U \, \varphi_p(x) \neq \emptyset$.
\end{definition}
As $f_e(\mathbf{1}) = 0$, we have $\triangle_p(\mathbf{1})=0$, so that $(0,\mathbf{1})$ is an eigenpair of the operator $\triangle_p$.
%\begin{lemma}
%$(0,\mathbf{1})$ is an eigenpair of $\triangle_p$.
%\end{lemma}
A $p$-Laplacian operates on vectors and produces sets. In addition, since for any $t>0$, $\triangle_p(tx) = t^{p-1}\triangle_p(x)$ and $\varphi_p(tx) = t^{p-1} \varphi_p(x)$, $(tx, \lambda)$ is an eigenpair if and only if $(x, \lambda)$ is an eigenpair. 
%\textcolor{blue}{Actually, it is also true for $t <0$. But it requires Lemma~\ref{basicprop}...} 
Hence, one only needs to consider normalized eigenpairs: In our setting, we choose eigenpairs that lie in $\mathcal{S}_{p,\mu}$ for a suitable choice for the dimension of the space. 

For linear operators, the Rayleigh-Ritz method~\cite{gould1966variational} allows for determining approximate solutions to eigenproblems and provides a variational characterization of eigenpairs based on the critical points of functionals. To generalize the method, we introduce two even functions,
\begin{align*}
 \tilde{Q}_p(x)\triangleq Q_p(x)|_{\mathcal{S}_{p,\mu} }, \quad R_p(x) \triangleq \frac{Q_p(x)}{\|x\|_{\ell_p, \mu}^p}.
 \end{align*}
 \begin{definition}
A point $x \in \mathcal{S}_{p,\mu}$ is termed a \emph{critical point} of $R_p(x)$ if $0\in \nabla R_p(x)$. Correspondingly, $R_p(x)$ is termed a \emph{critical value} of $R_p(x)$. Similarly, $x$ is termed a \emph{critical point} of $\tilde{Q}_p$ if there exists a $\sigma \in \nabla Q_p(x)$ such that $P(x) \sigma = 0$, where $P(x) \sigma$ stands for the projection of $\sigma$ onto the tangent space of $\mathcal{S}_{p,\mu}$ at the point $x$. Correspondingly, $\tilde{Q}_p(x)$ is termed a \emph{critical value} of $\tilde{Q}_p$. %Let $K_c$ is the set of all critical points with critical value $\tilde{Q}_p(x)=c$.
\end{definition}
The relationships between the critical points of $\tilde{Q}_p(x)$ and $R_p(x)$ and the eigenpairs of $\triangle_p$ relevant to our subsequent derivations are listed in Theorem~\ref{eigenpairthm}. 
\begin{theorem}\label{eigenpairthm}
A pair $(\lambda, x)$ ($x\in \mathcal{S}_{p,\mu}$) is an eigenpair of the operator $\triangle_p$\\
1) if and only if $x$ is a critical point of $\tilde{Q}_p$ with critical value $\lambda$, and provided that $p\geq 1$. \\
2) if and only if $x$ is a critical point of $R_p$ with critical value $\lambda$, and provided that $p>1$. \\
3) if $x$ is a critical point of $R_p$ with critical value $\lambda$, and provided that $p=1$. 
\end{theorem}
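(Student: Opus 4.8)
\textbf{Proof proposal for Theorem~\ref{eigenpairthm}.}

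The plan is to establish all three claims by computing subdifferentials and translating the critical-point conditions into the set-theoretic eigenvalue equation of Definition~\ref{eigendef}. The common starting point is the observation that $Q_p(x) = \sum_{e\in E}\vartheta_e f_e(x)^p$ is a sum of compositions of convex functions (the Lov\'asz extensions $f_e$, which are nonnegative by symmetry and normalization of $w_e$, so $t\mapsto t^p$ is applied to a nonnegative argument) and is therefore convex, with subdifferential $\nabla Q_p(x) = \sum_{e\in E}\vartheta_e\, p\, f_e(x)^{p-1}\nabla f_e(x)$ for $p>1$ (chain rule for convex functions, valid since $f_e(x)\ge 0$) and $\nabla Q_1(x)=\sum_{e\in E}\vartheta_e\nabla f_e(x)$; this is exactly $p\,\triangle_p(x)$ up to the scalar $p$. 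Similarly $\|x\|_{\ell_p,\mu}^p = \sum_v \mu_v|x_v|^p$ has gradient (or subdifferential when some $x_v=0$ and $p=1$) equal to $p\,U\varphi_p(x)$ where $U=\mathrm{diag}(\mu_v)$; note that $U$ is the operator referenced by the symbol $U$ in Definition~\ref{eigendef}, and $\langle x,\triangle_p(x)\rangle=Q_p(x)$ together with $\langle x, U\varphi_p(x)\rangle=\|x\|_{\ell_p,\mu}^p$ gives the needed homogeneity/Euler bookkeeping.

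For part~1 (the $\tilde Q_p$ characterization, all $p\ge1$): by definition $x\in\mathcal S_{p,\mu}$ is a critical point of $\tilde Q_p$ iff some $\sigma\in\nabla Q_p(x)$ has zero projection onto the tangent space of $\mathcal S_{p,\mu}$ at $x$, i.e. $\sigma$ is parallel (in the set-valued sense) to the normal direction of the level set $\{\|\cdot\|_{\ell_p,\mu}=1\}$, which is spanned by $U\varphi_p(x)$. Hence criticality means there is $\sigma\in\nabla Q_p(x)$ and a scalar $c$ with $\sigma = c\,U\varphi_p(x)$. Dividing by $p$ and writing $\sigma/p\in\triangle_p(x)$ shows $\triangle_p(x)\cap \lambda U\varphi_p(x)\neq\emptyset$ with $\lambda=c/p$; conversely the eigenvalue equation immediately yields such a $\sigma$. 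It remains to identify the constant $\lambda$ with the critical value $\tilde Q_p(x)=Q_p(x)$: pairing $\sigma=c\,U\varphi_p(x)$ with $x$ gives $\langle x,\sigma\rangle = c\langle x,U\varphi_p(x)\rangle = c\|x\|_{\ell_p,\mu}^p = c$ on $\mathcal S_{p,\mu}$, while $\langle x,\sigma\rangle = \langle x,\,p\,\triangle_p(x)\rangle = p\,Q_p(x)$ (using $\langle q,x\rangle=f_e(x)$ for $q\in\nabla f_e(x)$ from the Lov\'asz-extension identity quoted before Definition~\ref{lapdef}), so $c=p\,Q_p(x)$ and $\lambda = Q_p(x)=\tilde Q_p(x)$, as claimed.

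For parts~2 and~3 (the $R_p$ characterization) I would use the quotient rule for subdifferentials of $R_p = Q_p/\|x\|_{\ell_p,\mu}^p$. When $p>1$ both numerator and denominator are differentiable away from $\mathbf 0$ in the relevant directions, so $0\in\nabla R_p(x)$ is equivalent to $\|x\|_{\ell_p,\mu}^p\,\nabla Q_p(x) \ni Q_p(x)\,\nabla\|x\|_{\ell_p,\mu}^p$, i.e. (on $\mathcal S_{p,\mu}$) $\nabla Q_p(x)\ni Q_p(x)\,p\,U\varphi_p(x)$, which after dividing by $p$ is exactly $\triangle_p(x)\cap R_p(x)U\varphi_p(x)\neq\emptyset$, giving the ``if and only if'' with $\lambda=R_p(x)$. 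For $p=1$ the map $x\mapsto\|x\|_{\ell_1,\mu}$ is only subdifferentiable, so the quotient rule gives a one-directional inclusion: a critical point of $R_1$ still yields a valid choice of subgradient realizing the eigen-equation, hence $(\lambda,x)$ is an eigenpair, but the converse can fail because an eigen-subgradient need not be the one selected by the subdifferential of the quotient. This asymmetry is precisely why part~3 is only an implication.

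I expect the main obstacle to be the careful handling of the $p=1$ case and, more generally, making the projection/normal-cone computation on the manifold $\mathcal S_{p,\mu}$ rigorous when coordinates of $x$ vanish: there $\varphi_p(x)$ and $\nabla f_e(x)$ are genuinely set-valued, $\mathcal S_{1,\mu}$ is not smooth, and one must verify that ``$P(x)\sigma=0$ for some $\sigma\in\nabla Q_p(x)$'' is the correct notion of criticality and that it matches the normal cone rather than a single normal line. The convexity of each $f_e$ (hence of $Q_p$) and the explicit description $\nabla f_e(x)=\arg\max_{y\in\mathcal B_e}\langle y,x\rangle$ from~\eqref{subgradient} are the tools that keep these subdifferential manipulations under control; everything else is the homogeneity bookkeeping sketched above.
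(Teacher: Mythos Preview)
Your approach mirrors the paper's proof: compute $\nabla Q_p = p\,\triangle_p$, identify criticality of $\tilde Q_p$ with the normal-direction condition $\sigma\in cU\varphi_p(x)$ on $\mathcal S_{p,\mu}$ (and recover $\lambda=\tilde Q_p(x)$ by pairing with $x$), then handle $R_p$ via the quotient rule, which is an equality for $p>1$ and only the inclusion $\nabla R_p(x)\subseteq \|x\|_{\ell_p,\mu}^{-2p}\bigl(\|x\|_{\ell_p,\mu}^p\nabla Q_p(x)-pQ_p(x)U\varphi_p(x)\bigr)$ for $p=1$. The obstacle you correctly flag --- making the projection/normal-cone computation rigorous on the non-smooth manifold $\mathcal S_{1,\mu}$ --- is dispatched in the paper by citing Theorem~4.2 of Chang~(2016) for the piecewise-linear tangent-cone characterization, and the one-sided quotient rule for $p=1$ by Proposition~2.3.14 of Clarke~(1990); with those two references in hand your sketch is complete.
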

%\begin{proof}
%Statement 1 is postponed (The proof of this theorem depends on the analysis of tangent cone in piecewise linear space~\cite{chang2016spectrum}.). Statements 2 and 3 are special cases of Theorem 2.2~\cite{hein2010inverse}.
%\end{proof}
The critical points of $\tilde{Q}_p$ bijectively characterize eigenpairs for all choices of $p\geq 1$. However, $R_p$ has the same property only if $p>1$. This is a consequence of the nonsmoothness of the set $\mathcal{S}_{1,\mu}$, which has been observed for graphs as well (See the examples in Section 2.2 in~\cite{chang2016spectrum}).

Once Theorem~\ref{eigenpairthm} has been established, a standard way to analyze the spectrum of $\triangle_p$ is to study the critical points of $\tilde{Q}_p = Q_p(x)|_{\mathcal{S}_{p,\mu}}$. A crucial component within this framework is the Lusternik-Schnirelman theory that allows to characterize a series of these critical points. As $Q_p$ and $\mathcal{S}_{p,\mu}$ are symmetric, one needs to use the notion of a Krasnoselski genus, defined below. This type of approach has also been used to study the spectrum of $p$-Laplacians of graphs, and the readers interested in the mathematical theory behind the derivations are referred to~\cite{chang2016spectrum, tudisco2016nodal} and references therein for more details. 
 
 \begin{definition}
Let $A\subset \mathbb{R}^N/\{0\}$ be a closed and symmetric set. The Krasnoselski genus of $A$ is defined as
\begin{equation}
\mathcal{\gamma}(A)=\left\{ 
\begin{array}{l}
0 ,\quad \text{if}\;A=\emptyset, \\
\inf \{k\in \mathbb{Z}^+| \exists\;\text{odd continuous}\;h:\;A\rightarrow \mathbb{R}^k\backslash \{0\}\} \\
\infty \quad \text{if for any finite $k\in \mathbb{Z}^+$, no such $h$ exists.}
\end{array}
\right.
\end{equation}
 \end{definition}
We now focus on a particular subset of $\mathcal{S}_{p,\mu}$, defined as $$\mathcal{F}_k(\mathcal{S}_{p,\mu})\triangleq\{A \subseteq \mathcal{S}_{p,\mu}| A=-A, \text{closed}, \gamma(A)\geq k\}.$$ As $Q_p$ may not be differentiable, we apply Chang's generalization of the Lusternik-Schnirelman theorem for \emph{locally Lipschitz continuous} functionals defined on smooth Banach-Finsler manifolds (corresponding to the case $p>1$) and those defined on piecewise linear manifolds (corresponding to the case $p=1$). 

\begin{definition}
We say $g: \mathcal{S}_{p,\mu}\rightarrow \mathbb{R}$ is \emph{locally Lipschitz}: if for each $x\in \mathcal{S}_{p,\mu}$, there exists a neighborhood $\mathcal{N}_x$ of $x$ and a constant $C$ depending on $\mathcal{N}_x$ such that $|g(x') - g(x)|\leq C \|x'-x\|_{\ell_2}$ for any $x'\in \mathcal{S}_{p,\mu} \cap \mathcal{N}_x$.
\end{definition}

\begin{theorem}[Theorem 3.2~\cite{chang1981variational} and Theorem 4.9~\cite{chang2016spectrum}]
Suppose function $g: \mathcal{S}_{p,\mu}\rightarrow \mathbb{R}$ is \emph{locally Lipschitz}, even, bounded below, then
\begin{align*}
\min_{A: \mathcal{F}_k(\mathcal{S}_{p,\mu})}\max_{x\in A} g(x) \quad \quad k =1,2,...,N
\end{align*}
characterize the critical values of $g$. 
\end{theorem}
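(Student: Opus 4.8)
The statement is a specialization of the classical Lusternik--Schnirelman minimax principle, so the plan is not to rebuild that machinery from scratch but to check that our objects fall within its scope and then invoke the deformation-based argument. Write $c_k \triangleq \min_{A \in \mathcal{F}_k(\mathcal{S}_{p,\mu})} \max_{x\in A} g(x)$ for $k=1,\dots,N$. First I would record the elementary facts: $\mathcal{S}_{p,\mu}$ is a compact subset of $\mathbb{R}^N$ (a closed, bounded $\ell_p$-sphere), it is symmetric, and for $p>1$ it is a $C^1$ Banach--Finsler manifold while for $p=1$ it is a piecewise-linear manifold --- these are exactly the two categories covered by the cited theorems. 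Since $g$ is even and locally Lipschitz on the compact set $\mathcal{S}_{p,\mu}$, it is bounded, attains its bounds, and (trivially, by compactness) satisfies the Palais--Smale condition; hence each $c_k$ is a finite real number, and $c_1 \le c_2 \le \cdots \le c_N$ because $\mathcal{F}_{k+1}(\mathcal{S}_{p,\mu}) \subseteq \mathcal{F}_k(\mathcal{S}_{p,\mu})$.

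The core is the equivariant deformation lemma: if $c$ is a regular value of $g$ (no critical point at level $c$), then for some $\varepsilon>0$ there is an odd continuous map $\eta: \mathcal{S}_{p,\mu} \to \mathcal{S}_{p,\mu}$ with $\eta(\{g\le c+\varepsilon\}) \subseteq \{g \le c-\varepsilon\}$. For $p>1$ this is obtained from a locally Lipschitz pseudo-gradient vector field for $g$ relative to the Finsler structure --- built from the Clarke subdifferential in place of a gradient, with the projection $P(x)\sigma$ onto the tangent space playing the role of the descent direction --- and then symmetrized using $x\mapsto -x$; for $p=1$ one uses the piecewise-linear analogue. Granting this, I run the standard contradiction: suppose $c_k$ is not a critical value, pick $A\in\mathcal{F}_k(\mathcal{S}_{p,\mu})$ with $\max_{x\in A} g(x) \le c_k+\varepsilon$; then $\eta(A)$ is closed, symmetric (since $\eta$ is odd), and $\gamma(\eta(A)) \ge \gamma(A) \ge k$ by monotonicity of the genus under odd continuous maps, so $\eta(A)\in\mathcal{F}_k(\mathcal{S}_{p,\mu})$, yet $\max_{x\in\eta(A)} g(x) \le c_k-\varepsilon < c_k$, contradicting the definition of $c_k$ as the infimum. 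Hence every $c_k$ is a critical value of $g$. To complete the characterization I would also include the multiplicity refinement: if $c_k = c_{k+1} = \cdots = c_{k+m} = c$, then the critical set $K_c = \{x\in\mathcal{S}_{p,\mu}: x \text{ critical},\ g(x)=c\}$ satisfies $\gamma(K_c) \ge m+1$, which again follows from the deformation lemma together with the excision/continuity property of the genus (a neighborhood of $K_c$ cannot be deformed away if $\gamma(K_c)$ is too small).

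The main obstacle lies entirely in the deformation lemma for the nonsmooth regimes: for $p>1$ one must construct the pseudo-gradient field for a merely locally Lipschitz functional on a Finsler manifold and verify the flow stays on $\mathcal{S}_{p,\mu}$, while for $p=1$ the sphere $\mathcal{S}_{1,\mu}$ is not a smooth manifold at all, so the flow must be replaced by the piecewise-linear deformation machinery. Both ingredients are precisely what Theorem 3.2 of \cite{chang1981variational} and Theorem 4.9 of \cite{chang2016spectrum} provide; accordingly, in the write-up I would first state as lemmas the genus properties actually used (monotonicity, subadditivity, invariance under odd homeomorphisms, behavior under odd continuous maps, and that the genus of a finite union of antipodal pairs of points equals the number of pairs), then cite those two theorems for the deformation step, and finally assemble the minimax argument above --- no new estimate is needed beyond compactness of $\mathcal{S}_{p,\mu}$.
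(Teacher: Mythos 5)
This theorem is imported by the paper as an external result (Theorem 3.2 of Chang 1981 and Theorem 4.9 of Chang 2016); the paper gives no proof of it, so there is nothing internal to compare against. Your outline is the standard Lusternik--Schnirelman minimax argument and is correct in its essentials: the monotonicity $c_1\le\cdots\le c_N$ from the nesting of the $\mathcal{F}_k$, the contradiction via the odd deformation $\eta$ together with monotonicity of the Krasnoselski genus under odd continuous maps, and the multiplicity refinement via $\gamma(K_c)\ge m+1$. You also correctly locate the entire technical burden in the equivariant deformation lemma for locally Lipschitz functionals --- on a $C^1$ Finsler manifold for $p>1$ and on the piecewise-linear sphere for $p=1$ --- and defer exactly that ingredient to the two cited theorems, which is the same division of labor the paper implicitly adopts by quoting the result wholesale. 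One small caution: compactness of $\mathcal{S}_{p,\mu}$ gives the Palais--Smale condition only once one fixes the appropriate nonsmooth notion of critical point (via the Clarke subdifferential and the projection onto the tangent space, as in the paper's own definition of critical points of $\tilde{Q}_p$); it is worth stating that the notion of ``critical value'' in the theorem is the one matching that definition, since for $p=1$ the tangent object is a cone rather than a linear space and the naive gradient-flow picture does not apply.
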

It is easy to check $\tilde{Q}_p$ is locally Lipschitz, even and bounded below. By invoking the Lusternik-Schnirelman theorem, we claim that there are at least $n$ critical values of $\tilde{Q}_p$ equaling
\begin{align}\label{defcrit}
\lambda_k^{(p)} = \min_{A: \mathcal{F}_k(\mathcal{S}_{p,\mu})}\max_{x\in A} \tilde{Q}_p, \quad \quad k =1,2,...,N.
\end{align}
Note that as $\mathcal{F}_{k+1}(\mathcal{S}_{p,\mu}) \subseteq \mathcal{F}_{k}(\mathcal{S}_{p,\mu})$, $\lambda_{k+1}^{(p)} \geq \lambda_{k}^{(p)}$. %As $\mathcal{F}_{k+1}(\mathcal{S}_{p,\mu}) \subset \mathcal{F}_{k}(\mathcal{S}_{p,\mu})$, $\lambda_{k+1}^{(p)} \geq \lambda_{k}^{(p)}$. Note that there can be multiple $k$'s such that $\lambda_k^{(p)}$ equals some constant. The multiplicity of such eigenvalues 
Combining~\eqref{defcrit} and Theorem~\ref{eigenpairthm}, $\{\lambda_k^{(p)}\}_{k\in[N]}$ are a collection of eigenvalues of p-Laplacian operators $\triangle_p$.

\subsection{Discrete Nodal Domain Theorem for $p-$Laplacians}

Nodal domain theorems are essential for understanding the structure of eigenvectors of operators and they have been the subject of intense study in geometry and graph theory alike~\cite{biyikoglu2007laplacian}. The eigenfunctions of a Laplacian operator may take positive and negative values. The signs of the values induce a partition of the vertices in $V$ into maximal connected components on which the sign of the eigenfunction does not change: These components represent the nodal domains of the eigenfunction and approximate the clusters of the graphs.

Davies et al.~\cite{briandavies2001discrete} derived the first discrete nodal domain theorem for the $\triangle_2^{(g)}$ operator. Chang et al.~\cite{chang2017nodal} and Tudisco et al.~\cite{tudisco2016nodal} generalized these theorem for $\triangle_1^{(g)}$ and $\triangle_p^{(g)}$ ($p>1$) of graphs. In what follows, we prove that  the discrete nodal domain theorem applies to $\triangle_p$ of submodular hypergraphs. 

As every nodal domain theorem depends on some underlying notion of connectivity, we first define the relevant notion of connectivity for submodular hypergraphs. In a graph or a homogeneous hypergraph, vertices on the same edge or hyperedge are considered to be connected. However, this property does not generalize to submodular hypergraphs, as one can merge two nonoverlapping hyperedges into one without changing the connectivity of the hyperedges. To see why this is the case, consider two hyperedges $e_1$ and $e_2$ that are nonintersecting. One may transform the submodular hypergraph so that it includes a  hyperedge $e=e_1\cup e_2$ with weight $w_e = w_{e_1}+w_{e_2}$. This transformation essentially does not change the submodular hypergraph, but in the newly obtained hypergraph, according to the standard definition of connectivity, the vertices in $e_1$ and $e_2$ are connected. This problem may be avoided by defining connectivity based on the volume of the boundary set.  
\begin{definition}
Two distinct vertices $u,v \in V$ are said to be \emph{connected} if for any $S$ such that $u\in S$ and $v \notin S$, $\text{vol}(\partial S)> 0$. A submodular hypergraph is \emph{connected} if for any non-empty $S\subset V$, one has $\text{vol}(\partial S)> 0$.
\end{definition}
According to the following lemma, it is always possible to transform the weight functions of submodular hypergraph in such a way as to preserve connectivity.  
\begin{lemma} \label{reduce}
Any submodular hypergraph $G=(V,E, \mathbf{w}, \boldsymbol{\mu})$ can be reduced to another submodular hypergraph $G'=(V, E', \mathbf{w}', \boldsymbol{\mu})$ without changing $\text{vol}(\partial S)$ for any $S\subseteq V$ and ensuring that for any $e\in E'$, and $u,v\in e$, $u$ and $v$ are connected. %The weight of any hyperedge in $G'$ also satisfies $w_e(S)>0$ when $S\cap e \notin \{\emptyset, e\}$. 
\end{lemma}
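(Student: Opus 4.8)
The plan is to decompose each hyperedge into "connected pieces" defined by an equivalence relation, and to show that this decomposition preserves all boundary volumes. First I would fix a hyperedge $e\in E$ with submodular weight function $w_e$ and define a relation on $e$: say $u \sim_e v$ if $u$ and $v$ cannot be separated by any subset of $e$ with zero $w_e$-value, i.e.\ there is no $S\subseteq e$ with $u\in S$, $v\notin S$ and $w_e(S)=0$. I claim $\sim_e$ is an equivalence relation. Reflexivity and symmetry (the latter using $w_e(S)=w_e(e\setminus S)$) are immediate; transitivity is the point where submodularity enters. If $w_e(S)=0$ separates $u$ from $v$ and $w_e(T)=0$ separates $v$ from $w$, then one of the four sets $S\cap T$, $S\cup T$, $S\setminus T$, $T\setminus S$ separates $u$ from $w$, and by submodularity together with normalization ($w_e\geq 0$, $w_e(\emptyset)=w_e(e)=0$) the relevant one of these has $w_e$-value $0$. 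Concretely, $0 = w_e(S)+w_e(T)\geq w_e(S\cup T)+w_e(S\cap T)\geq 0$ forces $w_e(S\cap T)=w_e(S\cup T)=0$, and a short case analysis on where $u,v,w$ sit shows one of these (or, after complementing, one of $S\setminus T$, $T\setminus S$) does the separating.

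Next, let $e = e^{(1)}\sqcup e^{(2)}\sqcup\cdots\sqcup e^{(m)}$ be the partition of $e$ into $\sim_e$-equivalence classes, and replace the single hyperedge $e$ by the $m$ hyperedges $e^{(1)},\dots,e^{(m)}$. I would define the new weight function on the piece $e^{(j)}$ as the restriction $w_{e^{(j)}}(S) \triangleq w_e(S)$ for $S\subseteq e^{(j)}$ — this is well-defined because, as I will argue, $w_e$ restricted to any union of blocks is determined by its values on the individual blocks; more precisely the key structural claim is that $w_e(S) = \sum_{j=1}^m w_e(S\cap e^{(j)})$ for every $S\subseteq e$. Granting this claim, the restriction $w_{e^{(j)}}$ is submodular (restrictions of submodular functions to sub-ground-sets are submodular), normalized, and symmetric: $w_{e^{(j)}}(e^{(j)}\setminus S)$ can be checked equal to $w_{e^{(j)}}(S)$ using the additive decomposition and symmetry of $w_e$. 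One must also carry along the scaling constant $\vartheta_e$; since the vertex degrees $d_v = \sum_{e\ni v}\vartheta_e$ and the vertex weights $\boldsymbol\mu$ should be unchanged, I would keep $\vartheta_{e^{(j)}} = \vartheta_e$ after absorbing the normalization into $w_e$ (or, equivalently, track $\vartheta_{e^{(j)}}$ so that the products $\vartheta_{e^{(j)}}w_{e^{(j)}}$ reproduce the pieces of $\vartheta_e w_e$). Finally, for any $S\subseteq V$ the contribution of $e$ to $\mathrm{vol}(\partial S)$ is $\vartheta_e w_e(S\cap e) = \vartheta_e\sum_j w_e(S\cap e^{(j)}) = \sum_j \vartheta_{e^{(j)}} w_{e^{(j)}}(S\cap e^{(j)})$, which is exactly the total contribution of the new pieces; summing over $e\in E$ gives $\mathrm{vol}(\partial S)$ unchanged. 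And by construction any two vertices in a common new hyperedge $e^{(j)}$ lie in the same $\sim_e$-class, hence cannot be separated within $e^{(j)}$ by a zero-weight cut, which (unwinding the definitions of $\mathrm{vol}(\partial\cdot)$ restricted to that piece) means they are connected in $G'$.

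The main obstacle is proving the additive decomposition $w_e(S) = \sum_j w_e(S\cap e^{(j)})$ — everything else is bookkeeping. I expect to obtain it as follows. Pick any block $e^{(j)}$; I want $w_e(S) = w_e(S\setminus e^{(j)}) + w_e(S\cap e^{(j)})$, and then iterate. Both $S\setminus e^{(j)}$ and $S\cap e^{(j)}$, together with their union $S$ and intersection $\emptyset$, appear in a submodularity inequality $w_e(S\setminus e^{(j)}) + w_e(S\cap e^{(j)}) \geq w_e(S) + w_e(\emptyset) = w_e(S)$, so one direction is free. For the reverse, the idea is that there is "no interaction" across blocks: I would show that for a block $e^{(j)}$ and any $A\subseteq e\setminus e^{(j)}$, $B\subseteq e^{(j)}$, the marginal $w_e(A\cup B) - w_e(A)$ depends only on $B$ (not on $A$). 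If this modularity-across-blocks fails, submodularity gives a strict inequality somewhere, and one can produce a set $T$ with $w_e(T)=0$ (by minimizing a suitable submodular function of the form $w_e(\cdot)$ on an interval of the lattice, whose minimum is $0$ because the relevant marginals force it) that separates two vertices in different blocks — but those vertices, being in different $\sim_e$-classes, are *supposed* to be separable, so this is not yet a contradiction. The actual contradiction I would aim for runs the other way: I would show that if $w_e$ were *not* additive over the blocks, then two vertices currently in the *same* block could be separated by a zero-weight set, contradicting the definition of the blocks. Carefully, the cleanest route is probably to prove the contrapositive structural fact directly: define $e^{(j)}$ as a block, let $P$ be the poset of subsets $T\subseteq e$ with $w_e(T)=0$ (a lattice, by the argument in the transitivity step), note that each block is an "atom" of the associated partition lattice generated by $P$, and deduce the decomposition from the fact that $w_e$ is constant ($=0$) on $P$ and submodular. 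I anticipate this lattice-theoretic step — identifying the partition into blocks with the finest partition such that $w_e$ is additive, and proving that submodularity plus the zero-set structure forces additivity — to be the technically delicate core of the proof.
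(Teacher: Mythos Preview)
Your framework is sound and, in a sense, more structural than the paper's: you partition each hyperedge into equivalence classes all at once, whereas the paper proceeds iteratively, splitting one zero-weight set at a time. Both land on the same destination, and your transitivity argument (zero-weight sets form a sublattice) is correct and useful. However, the step you flag as the ``main obstacle'' --- the additive decomposition $w_e(S)=\sum_j w_e(S\cap e^{(j)})$ --- is not actually proved in your proposal; the lattice-theoretic discussion you offer for the reverse inequality is a sketch, not an argument.

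The paper closes exactly this gap with a short direct computation, and it is worth seeing because it plugs straight into your setup. For \emph{any} set $S_1\subseteq e$ with $w_e(S_1)=0$ and any $S\subseteq e$, write (using submodularity twice, then symmetry, then regrouping, then submodularity twice more on disjoint unions)
\[
2w_e(S)\;\ge\;\bigl[w_e(S_1\cup S)+w_e(S_1\cap S)\bigr]+\bigl[w_e(S\setminus S_1)+w_e(S_1\setminus S)\bigr]\;\ge\;2w_e(S),
\]
so every inequality is tight and in particular $w_e(S)=w_e(S\cap S_1)+w_e(S\setminus S_1)$. Since your lattice argument already shows each block $e^{(j)}$ is itself a zero-weight set, applying this with $S_1=e^{(1)},e^{(2)},\dots$ gives your additive decomposition immediately. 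So the paper's route is more elementary (no equivalence relation, just iterate the binary split), while yours is more conceptual; but you need the paper's two-line inequality chain (or something equivalent) to finish. One minor bookkeeping point: keeping $\vartheta_{e^{(j)}}=\vartheta_e$ will in general violate the normalization $\max w_{e^{(j)}}=1$; the paper instead sets $\vartheta_{e^{(j)}}=\max_{S\subset e^{(j)}}\vartheta_e w_e(S)$ and rescales $w_{e^{(j)}}$ accordingly.
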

\begin{definition}
Let $x\in \mathbb{R}^N$. A \emph{positive (respectively, negative) strong nodal domain} is the set of vertices of a maximally connected induced subgraph of $G$ such that $\{v\in V | x_v> 0\}$ (respectively, $\{v\in V | x_v<0\}$). A \emph{positive (respectively, negative) weak nodal domain} is defined in the same manner, except for changing the strict inequalities as $\{v\in V | x_v\geq 0\}$ (\emph{respectively}, $\{v\in V | x_v\leq 0\}$). 
\end{definition}
The following lemma establishes that for a connected submodular hypergraph $G$, all nonconstant eigenvectors of the operator $\triangle_p$ correspond to nonzero eigenvalues. 
\begin{lemma}\label{nontrivial}
If $G$ is connected, then all eigenvectors associated with the zero eigenvalue have constant entries. 
\end{lemma}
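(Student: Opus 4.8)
The plan is to show that an eigenvector $x$ associated with the zero eigenvalue must make the Rayleigh-type quotient vanish, and then to translate the vanishing of $Q_p(x)$ into a statement that forces $x$ to be constant whenever $G$ is connected. First I would recall from Definition~\ref{eigendef} and Theorem~\ref{eigenpairthm} that if $(\lambda, x)$ is an eigenpair with $x\in\mathcal{S}_{p,\mu}$, then $\lambda = \tilde Q_p(x) = Q_p(x) = \sum_{e\in E}\vartheta_e f_e(x)^p$; in particular, $\lambda = 0$ together with $\vartheta_e > 0$ and $f_e(x)\geq 0$ (recall $w_e\geq 0$, so $f_e$ is nonnegative on all of $\mathbb{R}^N$ since $w_e(S)\geq 0 = w_e(\emptyset)$) forces $f_e(x) = 0$ for every hyperedge $e\in E$. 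So the whole content of the lemma reduces to: if $f_e(x) = 0$ for all $e$, then $x$ is constant on a connected submodular hypergraph.

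Next I would connect $f_e(x) = 0$ to the boundary volume. Using the definition~\eqref{lovazexp} of the Lov\'asz extension, order the entries of $x$ as $x_{i_1}\geq x_{i_2}\geq\cdots\geq x_{i_N}$ and write $f_e(x) = \sum_{j=1}^{N-1} w_e(S_j)(x_{i_j}-x_{i_{j+1}})$ with $S_j = \{i_1,\dots,i_j\}$. Every term is nonnegative, so $f_e(x) = 0$ implies that for each $j$, either $x_{i_j} = x_{i_{j+1}}$ or $w_e(S_j) = 0$. Equivalently, for every threshold $t$ that is not equal to some $x_v$, if we set $S = \{v : x_v > t\}$, then each level set $S$ arising as one of the $S_j$ with $x_{i_j} > x_{i_{j+1}}$ satisfies $w_e(S\cap e) = w_e(S) = 0$. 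Summing over $e$ and using~\eqref{cut}, this yields $\text{vol}(\partial S) = \sum_{e\in E}\vartheta_e w_e(S) = 0$ for every such superlevel set $S$.

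Now suppose, for contradiction, that $x$ is not constant. Then there exist two vertices with distinct $x$-values, so there is a threshold $t$ strictly between the minimum and maximum of $x$ with $S := \{v : x_v > t\}$ both nonempty and a proper subset of $V$; moreover we can choose $t$ to avoid all values $x_v$, so $S$ is exactly one of the level sets $S_j$ above with a strict gap, hence $\text{vol}(\partial S) = 0$. But $G$ connected means $\text{vol}(\partial S) > 0$ for every nonempty proper $S\subset V$ (Definition of connectivity), a contradiction. Therefore $x$ is constant, which is what we wanted to prove. The only step that requires a little care — and which I'd expect to be the main technical point — is the bookkeeping in the previous paragraph: making sure that the superlevel sets picked out by the chosen threshold really are among the sets $S_j$ that appear with a \emph{strictly positive} coefficient $x_{i_j}-x_{i_{j+1}}$ in the Lov\'asz extension, so that the nonnegativity argument genuinely forces $w_e(S) = 0$ for that particular $S$ rather than merely for sets separated by a gap in $x$. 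The symmetry assumption $w_e(S) = w_e(e\setminus S)$ is not needed here, nor is the reduction Lemma~\ref{reduce}; the argument works directly from connectivity of $G$ as defined.
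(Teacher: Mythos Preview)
Your proof is correct. Both you and the paper first reduce to showing that $Q_p(x)=0$ forces $f_e(x)=0$ for every $e$, via nonnegativity of each term. From there the routes diverge slightly. The paper invokes Lemma~\ref{reduce} to replace $G$ by an equivalent submodular hypergraph in which every nontrivial cut of every hyperedge has strictly positive weight; then $f_e(x)=0$ forces $x$ to be constant on each hyperedge, and connectivity (in the form of hyperedge paths between any two vertices) propagates this to all of $V$. You instead stay with the original hypergraph and read off from $f_e(x)=0$ that $w_e(S_j)=0$ for each level set $S_j$ at which $x$ has a strict gap, sum over $e$ to get $\text{vol}(\partial S_j)=0$, and contradict the boundary-volume definition of connectivity directly. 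Your argument is a little more streamlined: it avoids the reduction lemma and the passage through ``$x$ constant on hyperedges plus path-connectedness,'' using only the global statement $\text{vol}(\partial S)>0$. The paper's detour through Lemma~\ref{reduce} is not wasted, though, since that reduction is reused later (e.g.\ in the nodal domain analysis) where the hyperedge-local positivity is genuinely needed.
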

We next state new nodal domain theorems for submodular hypergraph $p-$Laplacians. The results imply the bounds for the numbers of nodal domains induced from eigenvectors of $p$-Laplacian do not essentially change compared to those for graphs~\cite{tudisco2016nodal}.
%The high level idea behind our approach is to use the Lusternik-Schnirelman theory~\cite{josellis2012lusternik} for characterizing the critical values and critical points of $R_p(x),$ which essentially correspond to the eigenpairs of $\triangle_p$~\eqref{eigenpairthm}. %The main challenge here is caused by the fact that $R_p(x)$ is non-differentiable even when $p>1$, so we should carefully handle the subgradients.
We do not consider the case $p=1$, although it is possible to adapt the methods for analyzing the $\triangle_1^{(g)}$ operators of graphs to $\triangle_1$ operators of submodular hypergraphs. Such a generalization requires extensions of the critical-point theory to piecewise linear manifolds~\cite{chang2016spectrum}.  %%The case $p=1$ becomes much more involved even for regular graph~\cite{chang2017nodal}, as we leave that of submodular hypergraphs in future works.
\begin{theorem} \label{nodal}
Let $p>1$ and assume that $G$ is a connected submodular hypergraph. Furthermore, let the eigenvalues of $\triangle_p$ be ordered as $0=\lambda_1^{(p)}<\lambda_2^{(p)}\leq\cdots\leq \lambda_{k-1}^{(p)}<\lambda_k^{(p)} =\cdots=\lambda_{k+r-1}^{(p)}<\lambda_{k+r}^{(p)}\leq \cdots \leq \lambda_n^{(p)}$, with $\lambda_k^{(p)}$ having multiplicity $r$. Let $x$ be an arbitrary eigenvector associated with $\lambda_k^{(p)}$. Then $x$ induces at most $k+r-1$ strong and at most $k$ weak nodal domains.% when $p>1$ while $k+r-1$ weak nodal domains when $p=1$.
\end{theorem}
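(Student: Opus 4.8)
The plan is to follow the strategy used for the graph $p$-Laplacian nodal domain theorems in~\cite{tudisco2016nodal, chang2017nodal}, adapting each step so that it works with the set-valued operator $\triangle_p$ and the submodular notion of connectivity. Throughout I would work with the characterization of $\lambda_k^{(p)}$ via the min-max over $\mathcal{F}_k(\mathcal{S}_{p,\mu})$ in~\eqref{defcrit}, together with the identification of critical points with eigenpairs from Theorem~\ref{eigenpairthm}. The connectivity notion for submodular hypergraphs, Lemma~\ref{reduce} (which lets me assume every hyperedge consists of pairwise connected vertices) and Lemma~\ref{nontrivial} (no nonconstant eigenvector at eigenvalue $0$) are the technical inputs that replace the naive ``vertices on an edge are connected'' statement.

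\textbf{Strong nodal domains (upper bound $k+r-1$).} Suppose $x$ is an eigenvector for $\lambda_k^{(p)}$ with $m$ strong nodal domains $D_1,\dots,D_m$; I want to show $m\le k+r-1$. For each $i$, let $x^{(i)}$ be the restriction of $x$ to $D_i$ (zero elsewhere). The first step is the ``locality'' computation: because $\triangle_p$ decomposes as a sum over hyperedges and, by the reduced form from Lemma~\ref{reduce}, each hyperedge that meets the open set $D_i$ and has a vertex where $x$ has the opposite sign or is zero contributes an $f_e$ value that only uses $x$ on $D_i$, one shows $Q_p(x^{(i)}) \le \lambda_k^{(p)} \|x^{(i)}\|_{\ell_p,\mu}^p$ while the $x^{(i)}$ have disjoint supports and span an $m$-dimensional space $W$. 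Normalizing inside $\mathcal{S}_{p,\mu}$, the set $A = W\cap\mathcal{S}_{p,\mu}$ is closed, symmetric and has genus $m$, and on it $\tilde Q_p\le\lambda_k^{(p)}$ with equality only on a subspace of directions that are themselves eigenvectors. Using the ordering of the eigenvalues and a dimension/genus count exactly as in the graph case, strict inequality $\lambda_{k+r}^{(p)}>\lambda_k^{(p)}$ forces $m\le k+r-1$. The one genuinely new point to check carefully is the locality inequality $Q_p(x^{(i)})\le\lambda_k^{(p)}\|x^{(i)}\|_{\ell_p,\mu}^p$: one needs the eigenvalue inclusion $\triangle_p(x)\cap\lambda_k^{(p)}U\varphi_p(x)\ne\emptyset$, picks a witnessing selection $q_e\in\nabla f_e(x)$, pairs it with $x^{(i)}$, and uses the Lovász-extension identity $\langle q_e, x\rangle = f_e(x)$ plus the fact that for a hyperedge supported on $D_i$ the selection's restriction still lies in $\nabla f_e(x^{(i)})$ (here is where symmetry $w_e(S)=w_e(e\setminus S)$ and the zero-entry property $(\nabla f_e)_v=0$ for $v\notin e$ are used to handle hyperedges straddling $D_i$ and its complement).

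\textbf{Weak nodal domains (upper bound $k$).} Here I would enlarge each strong domain to a weak domain by also including vertices where $x=0$, obtaining weak domains $\tilde D_1,\dots,\tilde D_{\tilde m}$; these overlap on the zero set, so the restricted vectors $\tilde x^{(i)}$ no longer have disjoint supports, but they are still linearly independent and nonnegative combinations are controlled. The standard trick (following Tudisco--Hein) is: if $\tilde m\ge k+1$ then some nontrivial linear combination $y=\sum c_i\tilde x^{(i)}$ lies in a ``bad'' position relative to the first $k-1$ eigenvectors, and one shows $R_p(y)\le\lambda_k^{(p)}$; combined with the min-max this forces $y$ to be a critical point, hence an eigenvector, whose support structure (constant sign on each $\tilde D_i$ but vanishing on the overlaps) contradicts Lemma~\ref{nontrivial} / the connectedness of $G$ — essentially one produces an eigenvector that is zero on a cut with $\mathrm{vol}(\partial S)>0$, which is impossible. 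Hence $\tilde m\le k$.

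\textbf{Main obstacle.} The routine parts (genus counting, the Lusternik--Schnirelman min-max bookkeeping) transfer verbatim from the graph case. The delicate step is the locality/decomposition of $Q_p$ under restriction to a nodal domain in the \emph{submodular} setting: unlike the homogeneous case where $\max_{u,v\in e}|x_u-x_v|^p$ obviously splits, here $f_e(x)^p$ is a Lovász extension and one must argue — using submodularity, symmetry, the reduced form of Lemma~\ref{reduce}, and the subgradient characterization~\eqref{subgradient} — that restricting $x$ to $D_i$ does not increase the relevant hyperedge contributions and that a compatible subgradient selection survives restriction. I expect that to be the crux of the proof, and I would isolate it as a separate lemma (a ``restriction inequality'' $Q_p(x|_S)\le\langle x|_S, \triangle_p(x)|_S\rangle$-type statement) before running the genus argument.
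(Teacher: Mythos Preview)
Your overall architecture (nodal space $\Rightarrow$ genus argument via~\eqref{defcrit}) is the same as the paper's, but the ``locality'' step you single out is formulated too weakly to close the argument, and the weak-domain part relies on tools that are not available for nonlinear $p$-Laplacians.

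\textbf{Strong nodal domains.} Your separate lemma is stated for a \emph{single} restriction: $Q_p(x|_{D_i})\le\lambda_k^{(p)}\|x|_{D_i}\|_{\ell_p,\mu}^p$. That is not enough: the genus argument needs $\tilde Q_p(y)\le\lambda_k^{(p)}$ for \emph{every} $y=\sum_i\alpha_i\,x|_{D_i}$ in $W\cap\mathcal{S}_{p,\mu}$, and for $p\ne 2$ this does not follow from the inequality on the basis vectors (there is no superposition). The paper proves the full nodal-space inequality directly. The two ingredients you are missing are: (i) in the reduced hypergraph of Lemma~\ref{reduce}, each hyperedge meets \emph{at most two} strong nodal domains, necessarily of opposite sign; this gives the order compatibility $x|_{D_i\cap e}\rightharpoonup x|_e$ and $x|_{D_i\cap e}\rightharpoonup \text{sgn}(\alpha_i)y|_e$, from which one obtains the exact identity $f_e(y)=\sum_i|\alpha_i|\,f_e(x|_{D_i})$ (not just an inequality); (ii) for a hyperedge meeting two domains with values $a=f_e(x|_{D_{i_1}})$, $b=f_e(x|_{D_{i_2}})$, the comparison between $Q_p(y)$ and $\lambda_k^{(p)}\|y\|_{\ell_p,\mu}^p$ reduces to the elementary inequality
\[
\bigl(|\alpha_{i_1}|a+|\alpha_{i_2}|b\bigr)^p\ \le\ \bigl(|\alpha_{i_1}|^p a+|\alpha_{i_2}|^p b\bigr)(a+b)^{p-1},
\]
with equality iff $ab=0$ or $\alpha_{i_1}=\alpha_{i_2}$. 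Without the ``at most two domains per hyperedge'' observation there is no reason the Lov\'asz contributions split this way, and without the two-term inequality the bound on all of $W$ fails.

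\textbf{Weak nodal domains.} ``Bad position relative to the first $k-1$ eigenvectors'' is a linear-orthogonality idea that has no analogue here: eigenvectors of $\triangle_p$ need not be orthogonal, and having $R_p(y)\le\lambda_k^{(p)}$ for one specific $y$ does not make $y$ a critical point. The paper instead splits the weak nodal space as $\xi(x)=\mathrm{Span}\{x\}\oplus\xi(x)'$, observes that $\xi(x)'\cap\mathcal{S}_{p,\mu}$ is a \emph{minimizing set} in $\mathcal{F}_k$ (by the same nodal-space inequality), and invokes the deformation theorem to extract a critical point $y\in\xi(x)'\cap\mathcal{S}_{p,\mu}$, which is then an eigenvector for $\lambda_k^{(p)}$. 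The contradiction is obtained by proving all coefficients $\alpha_i$ in $y=\sum_i\alpha_i x|_{A_i}$ coincide (so $y\in\mathrm{Span}\{x\}$), and this uses precisely the equality case of the two-term inequality above when a hyperedge straddles the open parts of two overlapping weak domains, together with a direct computation of the eigenvalue relation at a shared zero vertex when no such hyperedge exists. Your sketch ``produces an eigenvector that is zero on a cut with $\text{vol}(\partial S)>0$'' is not the mechanism; it is the forced equality $\alpha_i=\alpha_j$ across every pair of overlapping weak domains that does the work.
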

%Note that the above theorem implies the bounds for the numbers of nodal domains induced from eigenvectors of $p$-Laplacian  do not essentially change compared to the nodal domain theorem for graphs~\cite{tudisco2016nodal}.
The next lemma derives a general lower bound on the number of nodal domains of connected submodular hypergraphs. 
\begin{lemma}\label{atleasttwo}
Let $G$ be a connected submodular hypergraph. For $p>1$, any nonconstant eigenvector has at least two weak (strong) nodal domains. Hence, the eigenvectors associated with the second smallest eigenvalue $\lambda_2^{(p)}$ have exactly two weak (strong) nodal domains. For $p=1$, the eigenvectors associated with the second smallest eigenvalue $\lambda_2^{(1)}$ may have only one single weak (strong) nodal domain.
\end{lemma}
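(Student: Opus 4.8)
The plan is to squeeze a sign--balance identity out of the eigenpair relation, use it to force a sign change when $p>1$, read off the $\lambda_2^{(p)}$ statement by pairing this lower bound with Theorem~\ref{nodal}, and then exhibit a minimal example that defeats the same argument at $p=1$. For the first part, let $x$ be a nonconstant eigenvector with eigenvalue $\lambda$ and fix a selection $\sigma\in\triangle_p(x)$ lying in $\lambda U\varphi_p(x)$, where $U=\operatorname{diag}(\boldsymbol\mu)$; write $\sigma=\sum_{e\in E}\vartheta_e f_e(x)^{p-1}y_e$ (read $\sigma=\sum_{e}\vartheta_e y_e$ when $p=1$) with $y_e\in\nabla f_e(x)\subseteq\mathcal B_e$. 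Since every point of a base polytope $\mathcal B_e$ has zero coordinate sum --- because $y_e(V)=w_e(e)=0$ by symmetry and normalization --- we get $\sum_{v\in V}\sigma_v=0$. Moreover, $\langle q,x\rangle=f_e(x)$ for $q\in\nabla f_e(x)$ gives $\langle\sigma,x\rangle=\sum_e\vartheta_e f_e(x)^{p}=Q_p(x)\ge 0$, while $\sigma\in\lambda U\varphi_p(x)$ gives $\langle\sigma,x\rangle=\lambda\|x\|_{\ell_p,\mu}^{p}$; hence $\lambda\ge 0$, and Lemma~\ref{nontrivial} --- applicable since $G$ is connected and $x$ is nonconstant --- upgrades this to $\lambda>0$.

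Now suppose $p>1$. Then $\varphi_p(x)_v=|x_v|^{p-1}\operatorname{sgn}(x_v)$ is a single vector vanishing exactly on $\{v:x_v=0\}$, so coordinatewise $\sigma_v=\lambda\mu_v|x_v|^{p-1}\operatorname{sgn}(x_v)$; summing and using $\sum_v\sigma_v=0$ and $\lambda>0$ yields $\sum_{v:x_v>0}\mu_v x_v^{p-1}=\sum_{v:x_v<0}\mu_v|x_v|^{p-1}$. Since $x\ne\mathbf 0$, neither side vanishes, so $\{v:x_v>0\}$ and $\{v:x_v<0\}$ are both nonempty; hence $x$ has at least one positive and one negative strong nodal domain, and therefore at least one positive and one negative weak nodal domain --- at least two of each. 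For $\lambda_2^{(p)}$, its eigenvectors are nonconstant (constant vectors carry only the eigenvalue $0\ne\lambda_2^{(p)}$) and, since $0=\lambda_1^{(p)}<\lambda_2^{(p)}$ for connected $G$ (a consequence of Lemma~\ref{nontrivial}), Theorem~\ref{nodal} with $k=2$ caps the number of weak nodal domains at two (and that of strong nodal domains at $r+1$, where $r$ is the multiplicity of $\lambda_2^{(p)}$); combined with the lower bound this gives exactly two weak nodal domains.

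For $p=1$ the argument collapses precisely where $\varphi_1(x)_v=\operatorname{sgn}(x_v)$ expands to the interval $[-1,1]$ at a zero coordinate: the balance identity becomes $\sum_{v:x_v\ne0}\mu_v\operatorname{sgn}(x_v)=-\sum_{v:x_v=0}\mu_v\gamma_v$ for some admissible $\gamma_v\in[-1,1]$, which is solvable with $x$ of a single sign once there are enough vanishing coordinates. A minimal witness is the single--edge graph on $\{1,2\}$ with unit edge weight and $\mu_1=\mu_2=1$, viewed as a (homogeneous, hence submodular) hypergraph: here $Q_1(x)=|x_1-x_2|$ on $\mathcal S_{1,\mu}=\{|x_1|+|x_2|=1\}$, whose only critical values are $0$ (at the constant vectors) and $1$ (at the coordinate vectors), so $\lambda_2^{(1)}=1$; and $x=\mathbf 1_{\{1\}}\in\mathcal S_{1,\mu}$ satisfies $\triangle_1(x)=\{(1,-1)\}$ and $U\varphi_1(x)=\{1\}\times[-1,1]\ni(1,-1)$, so $(1,\mathbf 1_{\{1\}})$ is an eigenpair at the second smallest eigenvalue whose unique strong nodal domain is $\{1\}$.

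I expect the only real friction to be careful bookkeeping with the set--valued operators: using the eigenvector property only through the existence of one compatible pair of selections in $\triangle_p(x)$ and $\lambda U\varphi_p(x)$, transporting the zero--coordinate--sum property from each $\mathcal B_e$ to the Minkowski sum $\triangle_p(x)$, and securing $\lambda>0$ via Lemma~\ref{nontrivial} before dividing by $\lambda$ in the $p>1$ step. Everything after that is an elementary sign count plus a one--line appeal to Theorem~\ref{nodal}; the conceptual content lies entirely in the contrast between the single--valued $\varphi_p$ for $p>1$ and the interval--valued $\varphi_1$, which the last example makes explicit.
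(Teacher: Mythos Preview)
Your argument is correct and follows essentially the same route as the paper: pair the eigenpair relation with the fact that every element of each base polytope $\mathcal{B}_e$ has zero coordinate sum to obtain $\langle\mathbf{1},U\varphi_p(x)\rangle=0$, use Lemma~\ref{nontrivial} to secure $\lambda\neq 0$, and then exploit the single\nobreakdash-valuedness of $\varphi_p$ for $p>1$ to force a sign change, finishing with Theorem~\ref{nodal}. Your treatment is in fact a bit more careful than the paper's (you correctly note that Theorem~\ref{nodal} only caps the strong count at $r+1$, so ``exactly two'' is cleanly established for weak domains), and where the paper merely cites an external example for the $p=1$ failure you supply a self\nobreakdash-contained one.
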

We define next the following three functions: 
\begin{align*}
\mu_p^+(x) &\triangleq \sum_{v\in V: x_v>0} \mu_v|x_v|^{p-1}, \\
\mu^0(x) &\triangleq \sum_{v\in V: x_v=0} \mu_v, \\
\mu_p^-(x) &\triangleq \sum_{v\in V: x_v<0} \mu_v |x_v|^{p-1}.
\end{align*}

The following lemma characterizes eigenvectors from another perspective that might be useful latter. 
\begin{lemma}\label{medianproperty}
Let $G$ be a connected submodular hypergraph. Then, for any nonconstant eigenvector $x$ of $\triangle_p$, one has $\mu_p^+(x) - \mu_p^-(x) = 0$ for $p>1$, and $|\mu_1^+(x) - \mu_1^-(x)| \leq \mu^0(x)$ for $p=1$. Consequently, $0 \in \arg\min_{c\in \mathbb{R}}\|x - c\mathbf{1}\|_{\ell_p,\mu}^p$ for any $p\geq 1$.
\end{lemma}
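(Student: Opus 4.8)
The plan is to distill a single scalar identity out of the eigenequation by exploiting the fact that every vector in a base polytope $\mathcal B_e$ has coordinate sum zero, and then to read the two stated ``median'' relations directly off that identity. Concretely, fix a nonconstant eigenvector $x\in\mathcal S_{p,\mu}$ with eigenvalue $\lambda$. Since $G$ is connected, Lemma~\ref{nontrivial} gives $\lambda\neq 0$. By Definition~\ref{eigendef} there is $\sigma\in\triangle_p(x)$ with $\sigma=\lambda U\varphi_p(x)$, where $U=\mathrm{diag}(\{\mu_v\}_{v\in V})$ is the diagonal matrix of vertex weights; unwinding Definition~\ref{lapdef}, $\sigma=\sum_{e\in E}\vartheta_e f_e(x)^{p-1}q_e$ for some selection $q_e\in\nabla f_e(x)$, with the convention $f_e(x)^{0}:=1$ when $p=1$.

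Next I would sum all $N$ coordinates of $\sigma$. Since each $w_e$ is normalized and symmetric, $w_e(e)=w_e(\emptyset)=0$, so every $q_e\in\nabla f_e(x)\subseteq\mathcal B_e$ satisfies $\sum_v(q_e)_v=q_e(V)=0$; hence $\sum_v\sigma_v=0$. Equating this with the coordinate sum of $\lambda U\varphi_p(x)$ and dividing by $\lambda\neq 0$ yields
\begin{align*}
\sum_{v\in V}\mu_v\,(\varphi_p(x))_v=0,
\end{align*}
where $(\varphi_p(x))_v=|x_v|^{p-1}\mathrm{sgn}(x_v)$ and $\mathrm{sgn}(0)$ denotes a fixed element of $[-1,1]$. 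For $p>1$ the terms with $x_v=0$ drop out because $|x_v|^{p-1}=0$, giving $\mu_p^+(x)-\mu_p^-(x)=0$. For $p=1$ one has $(\varphi_1(x))_v\in\{\pm1\}$ on $\{x_v\neq 0\}$ and $(\varphi_1(x))_v\in[-1,1]$ on $\{x_v=0\}$, so $\mu_1^+(x)-\mu_1^-(x)=-\sum_{v:\,x_v=0}\mu_v(\varphi_1(x))_v$, whose absolute value is at most $\sum_{v:\,x_v=0}\mu_v=\mu^0(x)$.

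For the final assertion I would examine the convex function $g(c)=\|x-c\mathbf 1\|_{\ell_p,\mu}^p=\sum_{v\in V}\mu_v|x_v-c|^p$. When $p>1$ it is differentiable and $g'(0)=-p\sum_v\mu_v|x_v|^{p-1}\mathrm{sgn}(x_v)=-p\big(\mu_p^+(x)-\mu_p^-(x)\big)=0$, so $0$ is a global minimizer; when $p=1$, $0\in\partial g(0)$ amounts to choosing $s_v\in[-1,1]$ on $\{x_v=0\}$ with $\sum_{v:\,x_v=0}\mu_v s_v=\mu_1^+(x)-\mu_1^-(x)$, which is feasible exactly because $|\mu_1^+(x)-\mu_1^-(x)|\le\mu^0(x)$. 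The only point demanding care is bookkeeping for the set-valued quantities --- one must carry the \emph{same} selection $q_e\in\nabla f_e(x)$ and the same value $\mathrm{sgn}(x_v)\in[-1,1]$ that certify the eigenpair through the coordinate summation --- but beyond the identity $q_e(V)=0$ and convexity of $t\mapsto|t|^p$ no genuine estimate is involved, so I expect no real obstacle.
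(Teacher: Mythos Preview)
Your proposal is correct and follows essentially the same route as the paper: both arguments pair the eigenequation with the identity $\langle q_e,\mathbf 1\rangle=q_e(V)=0$ for every $q_e\in\nabla f_e(x)\subseteq\mathcal B_e$ (the paper phrases this as Lemma~\ref{basicprop}(3) and channels it through the proof of Lemma~\ref{atleasttwo}), divide by $\lambda\neq 0$, and then read off the two median relations. The only cosmetic difference is in the last step for $p=1$: you invoke the subdifferential of $c\mapsto\|x-c\mathbf 1\|_{\ell_1,\mu}$, whereas the paper verifies $\|x-c\mathbf 1\|_{\ell_1,\mu}\ge\|x\|_{\ell_1,\mu}$ by a direct computation; both are equally short.
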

The nodal domain theorem characterizes the structure of the eigenvectors of the operator, and the number of nodal domains determines the approximation guarantees in Cheeger-type inequalities relating the spectra of graphs and hypergraphs and the Cheeger constant. These observations are rigorously formalized in the next section.

\subsection{Higher-Order Cheeger Inequalities}
%\begin{lemma}
%\begin{align*}
%h_1\leq h_2 \leq \cdots\leq h_n = 1.
%\end{align*}
%\begin{proof}
%...
%\end{proof}
%\end{lemma}
%Another important section of spectral graph theory is about the connection between cheeger constants and the eigenvalues of graph Laplacians. 
In what follows, we analytically characterize the relationship between the Cheeger constants and the eigenvalues $\lambda_k^{(p)}$ of $\triangle_p$ for submodular hypergraphs. 
%\textcolor{blue}{The next sentence was also mentioned in the introduction and thus can be removed. }
%Our results extend the findings for $\triangle_p^{(g)}$ of graphs~\cite{amghibech2003eigenvalues, hein2005graphs, tudisco2016nodal} and generalize the only known results available for hypergraph, concerned with the operator $\lambda_2^{(2)}$ in a homogeneous setting~\cite{louis2015hypergraph}. 
%\begin{theorem} \label{cheeger}
%Suppose $p\geq 1$. Let $(\lambda_k^{(p)}, x_k)$ be the $k-$th eigenpair of $\triangle_p$ and $m$ is the number of strong nodal domains of $x_k$. Then,
%\begin{align*}
%\left(\frac{h_m}{p}\right)^p \leq \lambda_k^{(p)} \leq (\min\{\max_{e\in E} |e|, k\})^{p-1} h_k
%\end{align*}
%When the hypergraph is homogeneous, then a tighter result stands:
%\begin{align*}
%2^{p-1} \left(\frac{h_m}{p}\right)^p \leq \lambda_k^{(p)} \leq 2^{p-1} h_k.
%\end{align*}
%\end{theorem}
\begin{theorem} \label{cheeger}
Suppose that $p\geq 1$ and let $(\lambda_k^{(p)}, x_k)$ be the $k-$th eigenpair of the operator $\triangle_p$, with 
$m_k$ denoting the number of strong nodal domains of $x_k$. Then,
\begin{align*}
\left(\frac{1}{\tau}\right)^{p-1} \left(\frac{h_{m_k}}{p}\right)^p \leq \lambda_k^{(p)} \leq (\min\{{\zeta(E), k\}})^{p-1} \; h_k,
\end{align*}
where $\tau=\max_{v}\, d_v / \mu_v$.
For homogeneous hypergraphs, a tighter bound holds that reads as
\begin{align*}
\left(\frac{2}{\tau}\right)^{p-1} \left(\frac{h_{m_k}}{p}\right)^p \leq \lambda_k^{(p)} \leq 2^{p-1} \; h_k.
\end{align*}
\end{theorem}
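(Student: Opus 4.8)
The plan is to prove the two-sided bound by treating the upper and lower inequalities separately, following the classical Cheeger template but adapted to the set-valued $p$-Laplacian and submodular weights.

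\textbf{Upper bound.} I would start from the variational characterization $\lambda_k^{(p)} = \min_{A \in \mathcal{F}_k(\mathcal{S}_{p,\mu})} \max_{x \in A} \tilde{Q}_p(x)$ from \eqref{defcrit}. Given a partition $(S_1,\dots,S_k) \in P_k$ achieving (or nearly achieving) $h_k$, I would build a test set $A$ of genus at least $k$ from the indicator-type vectors supported on the $S_i$: specifically, take the span of the normalized indicators $\mathbf{1}_{S_i}/\|\mathbf{1}_{S_i}\|_{\ell_p,\mu}$ intersected with $\mathcal{S}_{p,\mu}$. Since these have disjoint supports, this set is homeomorphic (by an odd map) to an $(k-1)$-sphere, so its genus is $k$. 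The main computation is to bound $Q_p(x) = \sum_{e} \vartheta_e f_e(x)^p$ for $x$ a combination $\sum_i c_i \mathbf{1}_{S_i}$ with disjoint supports. Here I would use that $f_e$ is the Lovász extension: for such $x$, $f_e(x)$ telescopes over the level sets, and because each hyperedge $e$ is split among at most $\min\{\zeta(E),k\}$ of the parts, one gets $f_e(x)^p \le (\min\{\zeta(E),k\})^{p-1}\sum_i |c_i|^p w_e(S_i)$ by a power-mean / Jensen inequality applied to the at most $\min\{\zeta(E),k\}$ nonzero increments. Summing over $e$ and using $\text{vol}(\partial S_i) = \sum_e \vartheta_e w_e(S_i)$ together with $\|x\|_{\ell_p,\mu}^p = \sum_i |c_i|^p \text{vol}(S_i)$, the ratio $\max_i \tilde Q_p$ over this $A$ is at most $(\min\{\zeta(E),k\})^{p-1} \max_i c(S_i)$, and optimizing over the partition gives the $(\min\{\zeta(E),k\})^{p-1} h_k$ bound. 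For homogeneous hypergraphs every hyperedge contributes to the cut with all its mass concentrated on one nonempty proper subset, so the relevant count drops from $\min\{\zeta(E),k\}$ to $2$, yielding the $2^{p-1}$ refinement.

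\textbf{Lower bound.} This is the harder direction and where I expect the main obstacle. Let $x = x_k$ be an eigenvector with critical value $\lambda_k^{(p)}$ and $m_k$ strong nodal domains $D_1,\dots,D_{m_k}$. On each nodal domain restrict $x$ and consider $y^{(i)} = |x|$ supported on $D_i$ (with a single sign). The classical approach is: for a nonnegative function $y$ supported on one nodal domain, there is a threshold-set (co-area / layer-cake) argument showing $\min_t c(\{y > t\}) \le \frac{p}{(\min_v \mu_v/d_v)^{(p-1)/p}} \cdot \frac{R_p'}{\cdots}$ where $R_p'$ is an appropriate Rayleigh quotient of $y$ localized to $D_i$. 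The subtlety for submodular hypergraphs is controlling $f_e(|x|)$ in terms of $f_e(x)$ and the increments $|x_u| - |x_v|$ across the cut: one needs an inequality of the form $f_e(y) \le$ (something involving the ordered differences of $y$) that survives the Lovász-extension max-representation $f_e(x) = \max_{\nu \in \mathcal{B}_e}\langle \nu, x\rangle$, and one must pay the factor $\tau = \max_v d_v/\mu_v$ when converting $d_v$-weighted quantities (arising from $\sum_{e \ni v}\vartheta_e$) to $\mu_v$-weighted volumes. The key lemma is a "one-nodal-domain Cheeger" statement: for $y \ge 0$ supported on a nodal domain, $\left(\frac{1}{\tau}\right)^{p-1}\left(\frac{\min_t c(\{y>t\})}{p}\right)^p \le R_p(y)$, proved via Hölder (splitting the numerator $\sum_e \vartheta_e f_e(y)^p$ using $f_e(y) = \sum_j w_e(S_j)(y_{i_j} - y_{i_{j+1}})$, bounding by $\big(\sum_j \vartheta_e w_e(S_j)\big)^{1-1/p}$ times a $p$-th-power term, then the co-area formula $\sum_j \vartheta_e w_e(S_j)(y_{i_j}^? - \cdots)$) — this is the place to grind carefully. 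Once this holds for each $D_i$, taking the domain minimizing $R_p$ and using $R_p(y^{(i)}) \le \lambda_k^{(p)}$ (because $x$ is an eigenvector and its restriction to a strong nodal domain has Rayleigh quotient bounded by $\lambda_k^{(p)}$ — this last fact itself needs the eigenvector equation $\triangle_p(x) \cap \lambda_k^{(p)} U\varphi_p(x) \neq \emptyset$ restricted coordinatewise to $D_i$) gives $c(D_i) \le p(\tau)^{(p-1)/p}(\lambda_k^{(p)})^{1/p}$ for some $i$. Since the $D_i$ are $m_k$ disjoint nonempty sets, $h_{m_k} \le \max_i c(D_i)$ after a standard argument that among $m_k$ disjoint sets one can extract the Cheeger-$m_k$ configuration, and rearranging yields $\left(\frac{1}{\tau}\right)^{p-1}\left(\frac{h_{m_k}}{p}\right)^p \le \lambda_k^{(p)}$. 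For homogeneous hypergraphs the analogous one-domain bound improves the constant because $\sum_j w_e(S_j) \le 2\vartheta_e$ along any monotone path (each hyperedge's cut weight $1$ is counted at most twice), giving the factor $2^{p-1}$.

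The main obstacle, concretely, is the one-nodal-domain inequality and in particular handling $f_e$ of a thresholded/absolute-valued vector: unlike the graph case where $f_e(x) = |x_u - x_v|$ decomposes trivially across a threshold, here $f_e$ is a general Lovász extension and the co-area decomposition $\vartheta_e f_e(y) = \int_0^\infty \vartheta_e w_e(\{y > t\})\,dt$ must be combined with a Hölder step in exactly the right way to produce the $p$-th power and the $\tau$ factor without losing more. I would isolate this as a standalone lemma, prove it by the integral (co-area) identity plus Hölder's inequality with exponents $p$ and $p/(p-1)$, and then assemble the theorem from it.
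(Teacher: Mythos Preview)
Your plan matches the paper's proof almost exactly: the upper bound via the span of partition indicators and a H\"older/power-mean count of nonzero pairings $\langle\nabla f_e(x),\mathbf 1_{S_i}\rangle$, and the lower bound via a one-domain Cheeger lemma (co-area on $Q_1(x^p)$ plus H\"older, paying the factor $\tau$) combined with the fact that the restriction of an eigenvector to each strong nodal domain has Rayleigh quotient at most $\lambda_k^{(p)}$ (this is exactly Lemma~\ref{nodalanalysis} in the paper).

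Two places need correction. First, in the assembly of the lower bound you write ``for some $i$'' after ``taking the domain minimizing $R_p$''; this is the wrong quantifier. You need $R_p(x|_{A_i})\le\lambda_k^{(p)}$ for \emph{every} nodal domain $A_i$, and you need the one-domain lemma to produce a threshold set $B_i\subseteq A_i$ with small $c(B_i)$ for \emph{every} $i$, so that $(B_1,\dots,B_{m_k})$ is a legitimate $m_k$-tuple of disjoint sets and $h_{m_k}\le\max_i c(B_i)$. A bound on a single domain does not control $\max_i c(B_i)$.

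Second, your explanation of the homogeneous lower-bound improvement is wrong: for a homogeneous hyperedge $e$ one has $w_e(S_j)=1$ for every proper nonempty level set, so $\sum_j w_e(S_j)=|e|-1$, not $\le 2$. The factor $2^{p-1}$ does not come from counting level-set contributions. It comes from the fact that in the homogeneous case $f_e(x^p)=x_{\max}^p-x_{\min}^p$ is a \emph{single} difference, so one can replace the crude bound $a^p-b^p\le p(a-b)a^{p-1}$ by Amghibech's sharper inequality $a^p-b^p\le \frac{p}{2^{1-1/p}}(a-b)(a^p+b^p)^{1-1/p}$ before the H\"older step; the extra $2^{-(1-1/p)}$, raised to the $p$-th power, yields exactly the $2^{p-1}$ improvement. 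Your description would not lead to the stated constant.
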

It is straightforward to see that setting $p = 1$ produces the tightest bounds on the eigenvalues, while the case $p=2$ reduces to the classical Cheeger inequality. This motivates an in depth study of algorithms for evaluating the spectrum of $p=1,2$-Laplacians, described next.

\section{Spectral Clustering Algorithms for Submodular Hypergraphs} \label{sec:algs}

The Cheeger constant is frequently used as an objective function for (balanced) graph and hypergraph partitioning~\cite{zhou2007learning, buhler2009spectral,szlam2010total,hein2010inverse, hein2013total,li2017inhomogeneous}. Theorem~\ref{cheeger} implies that $\lambda_k^{(p)}$ is a good approximation for the $k$-way Cheeger constant of submodular graphs. Hence, to perform accurate hypergraph clustering, one has to be able to efficiently learn $\lambda_k^{(p)}$~\cite{ng2002spectral,von2007tutorial}. We outline next how to do so for $k=2$.
%\textcolor{blue}{Pan: The following paragraph can be removed or shortened, as most of the information has been mentioned in section 1. }
%Standard spectral clustering algorithm includes learning $\lambda_2^{(2)}$ of graph Laplacians $\triangle_2^{(g)}$, and has performance guarantees based on the quadratic optimality of $h_2$ (see Theorem~\ref{cheeger}). The spectrum of $\triangle_2^{(g)}$ is easy to learn since $\triangle_2^{(g)}$ is a linear operator. A small set of recent results has also provided methods for evaluating $\lambda_2^{(1)}$ of graphs~\cite{szlam2010total,hein2010inverse}, and for learning $\lambda_2^{(1)}$ and $\lambda_2^{(2)}$ of hypergraphs~\cite{louis2015hypergraph,hein2013total}. 

In Theorem~\ref{algoform}, we describe an objective function that allows us to characterize $\lambda_2^{(p)}$ in a computationally tractable manner; the choice of the objective function is related to the objective developed for graphs in~\cite{buhler2009spectral,szlam2010total}. Minimizing the proposed objective function produces a real-valued output vector $x\in \mathbb{R}^N$. Theorem~\ref{thresholding} describes how to round the vector $x$ and obtain a partition which provably upper bounds $c(S)$. Based on the theorems, we propose two algorithms for evaluating $\lambda_2^{(2)}$ and $\lambda_2^{(1)}$. Since $\lambda_2^{(1)}=h_2$, the corresponding partition corresponds to the tightest approximation of the $2$-way Cheeger constant. The eigenvalue $\lambda_2^{(2)}$ can be evaluated in polynomial time with provable performance guarantees. The problem of devising good approximations for values $\lambda_k^{(p)}$, $k \neq 2$, is still open. 
%but there is strong evidence that it can be solved using the methods described in~\cite{buhler2009spectral} \textcolor{blue}{Pan: My fault: The methods in~\cite{buhler2009spectral} is to approximate $\lambda_2^{(p)}$ for general $p>1$. It does not have performance guarantee other than convergence. Also for general $\lambda_k^{(p)}$ when $k$ is not 2, the problem is still open...}. 
%\textcolor{blue}{Pan: As I said, I am sure the methods in~\cite{buhler2009spectral} can be generalized to this case. Maybe, we use a stronger statement.} 

Let $Z_{p, \mu}(x,c) \triangleq \|x - c\mathbf{1}\|_{\ell_p,\mu}^p$ and $Z_{p, \mu}(x) \triangleq \min_{c\in \mathbb{R}} Z_{p, \mu}(x,c)$, and define 
\begin{align} \label{eq:rdef}
\mathcal{R}_p(x) \triangleq \frac{Q_p(x)}{Z_{p, \mu}(x)}.
\end{align}
%The next theorem indicates to learn $\lambda_2^{(p)}$, one may try to minimize $\mathcal{R}_p(x)$. 
\begin{theorem}\label{algoform}
For $p>1$, $\lambda_2^{(p)}=\inf_{x\in \mathbb{R}^N} \mathcal{R}_p(x)$. Moreover, $\lambda_2^{(1)} = \inf_{x\in \mathbb{R}^N} \mathcal{R}_1(x) = h_2$.%A corresponding eigenvector $x_p^{(2)}$ of $\triangle_p$ is then given as $u^* - c^* \mathbf{1}$ for any global minimizer $u^*$ of $F_p^{(2)}(x)$ where $c^*=\arg\min_c ||u^*- c\mathbf{1}||_{\ell_p, \mu}^p$. Furthermore, $F_p^{(2)}$ satisfies $F_p^{(2)}(tu+c\mathbf{1})=F_p^{(2)}(u)$, for all $t,c\in \mathbb{R}$. 
\end{theorem}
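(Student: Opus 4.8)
The plan is to establish the identity $\lambda_2^{(p)} = \inf_{x \in \mathbb{R}^N} \mathcal{R}_p(x)$ by relating $\mathcal{R}_p$ to the functional $R_p$ whose critical points (for $p > 1$) are exactly the eigenpairs of $\triangle_p$ via Theorem~\ref{eigenpairthm}. The key observation is Lemma~\ref{medianproperty}: for any nonconstant eigenvector $x$, one has $0 \in \arg\min_{c \in \mathbb{R}} \|x - c\mathbf{1}\|_{\ell_p,\mu}^p$, so that $Z_{p,\mu}(x) = \|x\|_{\ell_p,\mu}^p$ and hence $\mathcal{R}_p(x) = R_p(x)$ on eigenvectors. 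Thus $\inf_x \mathcal{R}_p(x) \le R_p(x_2) = \lambda_2^{(p)}$, giving one inequality. For the reverse inequality, I would argue that the infimum of $\mathcal{R}_p$ is attained (by a compactness/homogeneity argument: $\mathcal{R}_p$ is scale-invariant and translation-invariant by $\mathbf{1}$, so one may restrict to the compact set $\mathcal{S}_{p,\mu} \cap \{x : \langle \varphi_p(x), \mathbf{1}\rangle = 0\}$ or similar, using continuity of $Q_p$ and of $Z_{p,\mu}$), and that any minimizer $x^*$ is nonconstant and satisfies the optimality condition $0 \in \arg\min_c Z_{p,\mu}(x^*,c)$ automatically. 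Then I would show the minimizer is a critical point of $R_p$ restricted to the appropriate set with critical value equal to the min, hence by Theorem~\ref{eigenpairthm}(2) an eigenpair; since it is nonconstant and by Lemma~\ref{atleasttwo} has exactly two nodal domains, it must correspond to $\lambda_2^{(p)}$ (it cannot be $\lambda_1^{(p)} = 0$, and a standard genus/min-max argument via \eqref{defcrit} shows no eigenvalue strictly between $0$ and $\inf \mathcal{R}_p$ can exist).

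For the case $p = 1$, the argument that $\lambda_2^{(1)} = \inf_x \mathcal{R}_1(x)$ follows the same template but must be handled with care because of the nonsmoothness issues flagged after Theorem~\ref{eigenpairthm}: here I would instead directly use the variational characterization \eqref{defcrit} with $k = 2$, showing $\lambda_2^{(1)} = \min_{A \in \mathcal{F}_2(\mathcal{S}_{1,\mu})} \max_{x \in A} \tilde{Q}_1(x)$, and check that this min-max equals $\inf_x \mathcal{R}_1(x)$ by an explicit two-point (antipodal) construction $A = \{x, -x\}$ combined with the translation reduction $Z_{1,\mu}(x) = \min_c \|x - c\mathbf{1}\|_{\ell_1,\mu}$. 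The identity $\lambda_2^{(1)} = h_2$ is then proved by sandwiching: the upper bound $h_2 \ge \lambda_2^{(1)}$ comes from plugging indicator-type vectors $x = \mathbf{1}_S - \frac{\text{something}}{\,}\mathbf{1}$ (or simply $\mathbf{1}_S$ after the translation-minimization, giving $\mathcal{R}_1(\mathbf{1}_S) = \text{vol}(\partial S)/\min\{\text{vol}(S),\text{vol}(\bar S)\} = c(S)$, using that $f_e(\mathbf{1}_S) = w_e(S)$ and \eqref{cut}), while the lower bound $\lambda_2^{(1)} \ge h_2$ comes from Theorem~\ref{thresholding} (the rounding theorem), which guarantees any real vector $x$ can be thresholded to a set $S$ with $c(S) \le \mathcal{R}_1(x)$; this is essentially the $p = 1$ case of Theorem~\ref{cheeger}.

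The main obstacle I anticipate is the reverse inequality for $p > 1$, specifically verifying that a minimizer of $\mathcal{R}_p$ is genuinely a \emph{critical point} in the sense required by Theorem~\ref{eigenpairthm}(2) and that its critical value is the \emph{second} eigenvalue rather than some higher one --- i.e., ruling out that $\inf \mathcal{R}_p$ overshoots $\lambda_2^{(p)}$. This requires knowing that the constraint $Z_{p,\mu}(x) = \min_c Z_{p,\mu}(x,c)$ does not introduce spurious stationary behavior, which is why Lemma~\ref{medianproperty} (the "median/mean-zero" property of eigenvectors) is doing the real work: it certifies that the constrained problem $\inf \mathcal{R}_p$ and the Rayleigh quotient $\inf_{x \perp_\mu \mathbf{1}} R_p(x)$ over the eigenvector-admissible set have the same value. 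A secondary technical point is the differentiability of $c \mapsto Z_{p,\mu}(x,c)$ and the interchange of $\min_c$ with the subdifferential when computing $\nabla \mathcal{R}_p$; for $p > 1$ this is smooth and routine, but I would state it as a lemma rather than inline it. The $p = 1$ identity $\lambda_2^{(1)} = h_2$ is conceptually the cleanest part, resting entirely on the exact (not merely quadratic) rounding available at $p = 1$.
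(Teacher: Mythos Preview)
Your direction $\lambda_2^{(p)} \geq \inf_x \mathcal{R}_p(x)$ via Lemma~\ref{medianproperty} matches the paper exactly. The problem is the reverse inequality for $p>1$: your primary route---show the minimizer $x^*$ of $\mathcal{R}_p$ is a critical point of $R_p$, hence an eigenpair by Theorem~\ref{eigenpairthm}(2)---does not deliver $\inf\mathcal{R}_p \ge \lambda_2^{(p)}$. Even granting that $(\inf\mathcal{R}_p, x^*)$ is an eigenpair, the variational values $\lambda_k^{(p)}$ from \eqref{defcrit} need not exhaust all eigenvalues of $\triangle_p$, so knowing $\inf\mathcal{R}_p$ is \emph{some} nonzero eigenvalue does not locate it relative to $\lambda_2^{(p)}$; and the nodal-domain count from Lemma~\ref{atleasttwo} does not pin this down either (two nodal domains does not force the eigenvalue to be $\lambda_2^{(p)}$). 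The genuine content is the genus argument you relegate to a parenthesis, and that is exactly what the paper does: after extracting a minimizer $x_*\in\mathcal{S}_{p,\mu}\cap\{x:\mu_p^+(x)=\mu_p^-(x)\}$ by compactness, it forms the two-dimensional subspace $\mathcal{A}'=\mathrm{span}(x_*,\mathbf{1})$, notes $\gamma(\mathcal{A}'\cap\mathcal{S}_{p,\mu})=2$, and computes directly
\[
\max_{x\in\mathcal{A}'\cap\mathcal{S}_{p,\mu}}\tilde Q_p(x)=\max_{t_1,t_2}\frac{Q_p(t_1x_*)}{\|t_1x_*+t_2\mathbf{1}\|_{\ell_p,\mu}^p}=\mathcal{R}_p(x_*),
\]
giving $\lambda_2^{(p)}\le\inf\mathcal{R}_p$ immediately. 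The critical-point detour is never needed.

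For $p=1$, your antipodal set $A=\{x,-x\}$ has Krasnoselski genus $1$, not $2$, so as stated it cannot be fed into \eqref{defcrit} with $k=2$; a genus-$2$ test set must be at least a circle (e.g.\ $\mathrm{span}(x,\mathbf{1})\cap\mathcal{S}_{1,\mu}$). The paper in fact sidesteps the genus argument for $p=1$ entirely: it proves $\inf\mathcal{R}_1=h_2$ directly by identifying $Q_1$ and $Z_{1,\mu}$ as the Lov\'asz extensions of $\mathrm{vol}(\partial S)$ and $\min\{\mathrm{vol}(S),\mathrm{vol}(\bar S)\}$ respectively (invoking a result of B\"uhler--Hein), and then sandwiches using $\lambda_2^{(1)}\le h_2$ from Theorem~\ref{cheeger} together with $\lambda_2^{(1)}\ge\inf\mathcal{R}_1$ from the first direction. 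Your alternative for $\inf\mathcal{R}_1=h_2$---indicator vectors $\mathbf{1}_S$ for one side and Theorem~\ref{thresholding} at $p=1$ for the other---is correct and arguably more self-contained than the paper's external citation, so the $p=1$ case goes through once you drop the antipodal construction and rely on the sandwich.
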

%Trying to learn $\lambda_2^{(p)}$ only provides us with approximation of the cheeger constant $h_2$ of $G$,  but the final target of a clustering algorithm is to partition the vertex set that holds a small cheeger constant. Actually, after obtaining a vector $x\in \mathbb{R}^N$ via minimizing $\mathcal{R}_p$, a simply rounding procedure based on thresholding the components of $x$ will yield a good partitioning result that formly in Theorem~\ref{thresholding}.
\begin{definition}
Given a nonconstant vector $x\in \mathbb{R}^N$, and a threshold $\theta$, set $\Theta(x, \theta) = \{v: x_{v} > \theta\}$. The optimal conductance obtained from thresholding vector $x$ equals 
%\textcolor{red}{you used $\lambda$ as a threshold, which was already used up for notation. I changed in to $\theta$.
%I am assuming that in Algorithm 2 you are referring to the threshold here, which then should be $\theta$?} \textcolor{blue}{Pan: Yes. $\theta$ here is a better notation. Actually, in Algorithm two, it is $\lambda$ instead of $\theta$. As Algorithm 2 is to minimize $\mathcal{R}_1$. $\lambda^k$ denotes the temperate $\mathcal{R}_1$ that is achieved in $k$th iteration. }
\begin{align}
c(x) = \inf_{\theta \in [x_{\min}, x_{\max}) } \frac{\text{vol}(\partial \Theta(x, \theta))}{\min\{\text{vol}(\Theta(x, \theta)),\text{vol}(V/\Theta(x, \theta))\}}. \notag
\end{align} 
\end{definition}
\begin{theorem}\label{thresholding}
For any $x\in \mathbb{R}^N$ that satisfies $0 \in \arg\min_c Z_{p,\mu}(x,c)$, i.e., such that $Z_{p,\mu}(x,0) = Z_{p,\mu}(x)$, one has $c(x) \leq p\, \tau^{(p-1)/p} \, \mathcal{R}_p(x)^{1/p}$, where $\tau = \max_{v\in V} d_v/\mu_v$. 
\end{theorem}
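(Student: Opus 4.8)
The plan is to prove the inequality by a standard "co-area"/layer-cake argument adapted to submodular hyperedge weight functions, combined with a Hölder-type estimate to pass from the $\ell_p$ normalization to the conductance of a threshold set. First I would reduce to the case $x \geq 0$ componentwise: since $0 \in \arg\min_c Z_{p,\mu}(x,c)$, Lemma~\ref{medianproperty} tells us $0$ is a median of $x$ in the appropriate weighted sense, and one can treat the positive part $x^+$ and negative part $x^-$ separately; the threshold set $\Theta(x,\theta)$ for $\theta \geq 0$ depends only on $x^+$, and for $\theta < 0$ on $x^-$, and the volume-of-boundary term splits by symmetry of $w_e$. So it suffices to bound, say, the positive side, where one works with a nonnegative vector $y$ with $y(S)=0$ forced behavior replaced by $Z_{p,\mu}(x,0) = \|x\|_{\ell_p,\mu}^p$ appearing in the denominator.

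The heart of the argument is the following chain. Writing $g = \varphi_p(x)$ (so $g_v = |x_v|^{p-1}\mathrm{sgn}(x_v)$), I would compare $\text{vol}(\partial\Theta(x,\theta))$ integrated against $d\theta$ through the levels of $g$ rather than of $x$. For a single hyperedge $e$ with Lovász extension $f_e$, the key identity is the co-area formula for the Lovász extension: $f_e(x) = \int_{-\infty}^{\infty} w_e(\{v : x_v > \theta\})\, d\theta$ (using normalization $w_e(\emptyset)=0=w_e(e)$ in the symmetric, $V$-grounded sense). Raising to avoid $p$: the quantity to control is $\int c(\Theta(x,\theta))^{-1}$-type expressions; instead I would bound $c(x) \cdot Z_{p,\mu}(x)^{1/p}$ from above by showing that for \emph{some} threshold $\theta$,
\begin{align*}
\frac{\text{vol}(\partial\Theta(x,\theta))}{\min\{\text{vol}(\Theta),\text{vol}(\bar\Theta)\}} \;\leq\; p\,\tau^{(p-1)/p}\,\frac{Q_p(x)^{1/p}}{Z_{p,\mu}(x)^{1/p}}.
\end{align*}
By an averaging argument it is enough to show the integral inequality $\int_{x_{\min}}^{x_{\max}} \text{vol}(\partial\Theta(x,\theta))\,d(\psi(\theta)) \leq (\text{RHS}) \cdot \int \min\{\text{vol}(\Theta),\text{vol}(\bar\Theta)\}\,d(\psi(\theta))$ for a well-chosen monotone weight $\psi$ (morally $\psi(\theta) = |\theta|^{p-1}\mathrm{sgn}(\theta)$, i.e. integrating in the $\varphi_p$ variable), and then the threshold achieving the ratio exists. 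The left side, by the co-area formula applied hyperedge-by-hyperedge and the identity $\int w_e(\Theta(x,\theta))\,d\psi(\theta)$, reduces to something of the form $\sum_e \vartheta_e f_e(\text{something involving } \varphi_p(x))$, which one relates to $f_e(x)^p$ and hence $Q_p(x)$ via the elementary inequality $|a^{p-1}\mathrm{sgn}(a) - b^{p-1}\mathrm{sgn}(b)| \leq (p-1)\max(|a|,|b|)^{p-2}|a-b|$ combined with $f_e$-Lipschitzness in $\ell_\infty$; the factor $p$ enters here. The denominator, $\int \min\{\text{vol}(\Theta),\text{vol}(\bar\Theta)\}\,d\psi(\theta)$, is bounded below using that $0$ is a median (so each level set on the minority side is genuinely the minority) by a constant times $\sum_v \mu_v|x_v|^p = Z_{p,\mu}(x,0) = Z_{p,\mu}(x)$; and the $\tau = \max_v d_v/\mu_v$ factor enters when converting the edge-degree normalization implicit in $Q_p$ (which uses $\vartheta_e$ summed into $d_v$) into the vertex-weight normalization $\mu_v$.

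I expect the main obstacle to be the precise bookkeeping in the co-area step for \emph{submodular} (as opposed to homogeneous) hyperedges: one must verify that $\int w_e(\Theta(x,\theta))\, d\psi(\theta)$ is controlled by $f_e(x)^{p}$ up to the factor $p^p$, which is not a direct application of the co-area formula for $f_e$ (that gives $f_e(x)$, a power of $1$, not $p$) but requires interleaving the $\varphi_p$-change-of-variable with convexity/monotonicity of $f_e$ along the sorted order of $x$'s entries, using that $f_e$ is piecewise linear with the \emph{same} breakpoint ordering for $x$ and $\varphi_p(x)$ (since $\varphi_p$ is monotone increasing). Getting the constant to be exactly $p\,\tau^{(p-1)/p}$ rather than something weaker will require being careful to apply Hölder's inequality (with exponents $p$ and $p/(p-1)$) exactly once and in the right place — that is the step I would do most carefully. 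The homogeneous-case improvement (replacing $\tau$ by $\tau/2$, implicitly) and its relation to Theorem~\ref{cheeger} suggests the constant is tight, which is a good consistency check on the computation.
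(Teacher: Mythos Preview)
Your overall strategy---split $x=x^+-x^-$, reduce to a nonnegative vector, then combine a co-area/layer-cake formula with one application of H\"older---is exactly the paper's route. Two concrete details in your proposal would, however, make the computation fail.

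First, the change of variable is off by one in the exponent. You propose $\psi(\theta)=|\theta|^{p-1}\mathrm{sgn}(\theta)$, i.e.\ working with $\varphi_p(x)$. But then $\int \min\{\text{vol}(\Theta),\text{vol}(\bar\Theta)\}\,d\psi(\theta)$ yields $\sum_v\mu_v|x_v|^{p-1}$, which is not $\|x\|_{\ell_p,\mu}^p=Z_{p,\mu}(x)$, so the denominator does not match $\mathcal R_p$. The correct weight is $\psi(\theta)=|\theta|^{p}\,\mathrm{sgn}(\theta)$; equivalently, for nonnegative $x$ the paper simply considers the entrywise $p$th power $x^p$ and bounds $Q_1(x^p)/\|x\|_{\ell_p,\mu}^p$. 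The elementary inequality one then needs is $a^p-b^p\le p(a-b)\,a^{p-1}$ for $a\ge b\ge 0$ (producing the factor $p$), not the $(p-1)$--factor derivative bound for $\varphi_p$ that you quote. With this correction a single H\"older step with exponents $p$ and $p/(p-1)$ gives precisely $Q_1(x^p)\le p\,\tau^{(p-1)/p}\,Q_p(x)^{1/p}\,\|x\|_{\ell_p,\mu}^{p-1}$, and the co-area formula $Q_1(x^p)=\int_0^\infty \text{vol}(\partial\{v:x_v^p>\theta\})\,d\theta$ finishes the nonnegative case.

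Second, the reduction $\mathcal R_p(x)\ge\min\{R_p(x^+),R_p(x^-)\}$ is not merely ``symmetry of $w_e$''. In the submodular setting one shows $f_e(x)=f_e(x^+)+f_e(x^-)$ for every hyperedge by observing $x^+,\,-x^-\rightharpoonup x$ (so any subgradient at $x$ is also a subgradient at $x^+$ and at $-x^-$, cf.\ Lemma~\ref{dotproductpre}), and then uses $(a+b)^p\ge a^p+b^p$ to get $Q_p(x)\ge Q_p(x^+)+Q_p(x^-)$. Finally, you do not need Lemma~\ref{medianproperty}: that lemma concerns eigenvectors, whereas here $0\in\arg\min_c Z_{p,\mu}(x,c)$ is the hypothesis itself and directly gives $Z_{p,\mu}(x)=\|x\|_{\ell_p,\mu}^p$.
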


In what follows, we present two algorithms. The first algorithm describes how to minimize $\mathcal{R}_2(x)$, and hence provides a polynomial-time solution for submodular hypergraph partitioning with provable approximation guarantees, given that the size of the largest hyperedge is a constant. The result is concluded in Theorem~\ref{SDPfinal}. The algorithm is based on an SDP, and may be computationally too intensive for practical applications involving large hypergrpahs of even moderately large hyperedges. The second algorithm is based on IPM~\cite{hein2010inverse} and aims to minimize $\mathcal{R}_1(x)$. Although this algorithm does not come with performance guarantees, it provably converges (see Theorem~\ref{convergence}) and has good heuristic performance. Moreover, the inner loop of the IPM involves solving a version of the proximal-type decomposable submodular minimization problem (see Theorem~\ref{diffprob}), which can be efficiently performed using a number of different algorithms~\cite{kolmogorov2012minimizing,jegelka2013reflection, nishihara2014convergence, ene2015random, li2018revisiting}. 

\subsection{An SDP Method for Minimizing $\mathcal{R}_2(x)$}

The $\mathcal{R}_2(x)$ minimization problem introduced in Equation~(\ref{eq:rdef}) may be rewritten as 
\begin{align} \label{minimizeF2}
\min_{x: Ux \perp \mathbf{1}} \frac{Q_2(x)}{\|x\|_{\ell_2, \mu}^2},
\end{align}
where we observe that $Q_2(x) = \sum_{e\in E} \vartheta_e f_e^2(x) =\sum_{e\in E}\vartheta_e\max_{y\in \mathcal{E}(\mathcal{B}_e)} \langle y, x\rangle^2$. This problem is, in turn, equivalent to the nonconvex optimization problem
\begin{align}
\min_{x\in \mathbb{R}^{N}} &\sum_{e} \vartheta_e \left(\max_{y\in \mathcal{E}(\mathcal{B}_e)} \langle y, x \rangle\right)^2 \label{minimizeF22} \\
\quad \text{s.t.}&\sum_{v\in V} \mu_v x_v^2 = 1, \; \sum_{v\in V} \mu_v x_v = 0.  \nonumber
\end{align}
Following an approach proposed for homogeneous hypergraphs~\cite{louis2015hypergraph}, one may try to solve an SDP relaxation of~\eqref{minimizeF22} instead. To describe the relaxation, let each vertex $v$ of the graph be associated with a vector $x'_v \in \mathbb{R}^n$, $n\geq \zeta(E)$. The assigned vectors are collected into a matrix of the form $X = (x'_1,..,x'_N)$. The SDP relaxation reads as
 \begin{align}
\min_{X\in \mathbb{R}^{n\times N},\;\eta \in\mathbb{R}^{|E|} } &\sum_{e} \vartheta_e \eta_e^2 \label{minimizeF23} \\
\text{s.t.}\quad  \|Xy\|_2^2 \leq \eta_e^2& \quad \forall y \in \mathcal{E}(\mathcal{B}_e), e\in E   \nonumber \\
  \sum_{v\in V} \mu_v \|x'_v\|_2^2 = 1 &, \sum_{v\in V} \mu_v x'_v = 0. \nonumber
\end{align}
Note that $\mathcal{E}(\mathcal{B}_e)$ is of size $O(|e|!)$, and the above problem can be solved efficiently if $\zeta(E)$ is small. 

Algorithm 1 lists the steps of an SDP-based algorithm for minimizing $\mathcal{R}_2(x)$, and it comes with approximation guarantees stated in Lemma~\ref{SDPapprox}. In contrast to homogeneous hypergraphs~\cite{louis2015hypergraph}, for which the approximation factor equals $O(\log \zeta(E))$, the guarantees for general submodular hypergraphs are $O(\zeta(E))$. This is due to the fact that the underlying base polytope $\mathcal{B}_e$ for a submodular function is significantly more complex than the corresponding polytope for the homogeneous case. We conjecture that this approximation guarantee is optimal for SDP methods.

 \begin{table}[htb]
\centering
\begin{tabular}{l}
\hline
\label{approximation}
\textbf{Algorithm 1: } \textbf{Minimization of $\mathcal{R}_2(x)$ using SDP} \\
\hline
\ \textbf{Input}: A submodular hypergraph $G=(V,E, \mathbf{w},\boldsymbol{\mu})$\\
\ 1: Solve the SDP \eqref{minimizeF23}. \\
\ 2: Generate a random Gaussian vector $g\sim N(0, I_n)$, \\
where $I_n$ denotes the identity matrix of order $n$. \\
\ 3: Output $x = X^T g$.\\
%\ 4: Output $x$. \\
\hline
\end{tabular}
\end{table}

\begin{lemma}\label{SDPapprox}
Let $x$ be as in Algorithm 1, and let the optimal value of~\eqref{minimizeF23} be SDPopt. Then, with high probability, $\mathcal{R}_2(x) \leq O(\zeta(E)) \, \text{SDPopt} \, \leq O(\zeta(E)) \, \min  \mathcal{R}_2$.
\end{lemma}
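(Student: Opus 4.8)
The plan is to bound $\mathcal{R}_2(x)$ in two stages, mirroring the standard rounding analysis for SDP relaxations of spectral quantities. First I would relate $\mathcal{R}_2(x)$ for the rounded vector $x = X^T g$ to the SDP objective $\mathrm{SDPopt}$; then, since the SDP in~\eqref{minimizeF23} is a relaxation of~\eqref{minimizeF22} which computes exactly $\min \mathcal{R}_2$, the inequality $\mathrm{SDPopt} \le \min \mathcal{R}_2$ is immediate and closes the chain. So the real content is the first inequality, $\mathcal{R}_2(x) \le O(\zeta(E))\,\mathrm{SDPopt}$ with high probability.

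For that first step, recall $\mathcal{R}_2(x) = Q_2(x)/Z_{2,\mu}(x)$ with $Q_2(x) = \sum_{e\in E}\vartheta_e \max_{y\in\mathcal{E}(\mathcal{B}_e)}\langle y,x\rangle^2$ and $Z_{2,\mu}(x) = \min_c \sum_v \mu_v (x_v-c)^2$. Writing $x_v = \langle x'_v, g\rangle$, for a fixed extreme point $y\in\mathcal{E}(\mathcal{B}_e)$ we have $\langle y, x\rangle = \sum_v y_v \langle x'_v, g\rangle = \langle Xy, g\rangle$, a Gaussian with variance $\|Xy\|_2^2 \le \eta_e^2$. The numerator is the maximum over the $|e|! \le \zeta(E)!$ extreme points of such Gaussians squared, so I would use a union bound / maximal inequality for Gaussians to show that with high probability $\max_{y\in\mathcal{E}(\mathcal{B}_e)}\langle y,x\rangle^2 \le O(\log(|e|!))\,\eta_e^2 = O(\zeta(E)\log\zeta(E))\,\eta_e^2$ — actually, more carefully, one wants the bound to hold simultaneously over all hyperedges, which only costs an extra $\log|E|$ factor that can be absorbed; this gives $Q_2(x) \le O(\zeta(E)\log\zeta(E))\sum_e\vartheta_e\eta_e^2 = O(\zeta(E)\log\zeta(E))\,\mathrm{SDPopt}$ in expectation-over-the-event sense. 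Hmm — the claimed factor is $O(\zeta(E))$, not $O(\zeta(E)\log\zeta(E))$, so I would instead bound the numerator in expectation: $\mathbb{E}\big[\max_{y\in\mathcal{E}(\mathcal{B}_e)}\langle y,x\rangle^2\big]$. Since each $\langle Xy, g\rangle$ has variance at most $\eta_e^2$, and there are at most $|e|!$ of them, the expected maximum of their absolute values is $O(\sqrt{\log |e|!}\,\eta_e) = O(|e|\sqrt{\log|e|}\,\eta_e)$ — still not matching. The cleaner route, and the one I would actually pursue, is to note that $\mathcal{E}(\mathcal{B}_e)$ consists of the $|e|!$ vectors of partial sums of $w_e$ along permutations, and for the symmetric normalized weight function the Euclidean norm $\|y\|_2$ of each extreme point is $O(\sqrt{|e|})$; combined with $\|Xy\|_2 \le \eta_e$ this lets one control the max by a chaining argument over the polytope $\mathcal{B}_e$ (which has diameter $O(\sqrt{|e|})$), yielding $\mathbb{E}[\max_y \langle Xy,g\rangle^2] \le O(|e|)\,\eta_e^2 \le O(\zeta(E))\,\eta_e^2$. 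This is the key inequality.

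For the denominator, I would show $Z_{2,\mu}(x) \ge \Omega(1)$ with high probability, or rather bound it below in expectation and use concentration. We have $\mathbb{E}[Z_{2,\mu}(x)] = \mathbb{E}[\min_c \sum_v\mu_v(x_v-c)^2]$; taking $c = 0$ as a (suboptimal but convenient) upper estimate is the wrong direction, so instead one uses $\min_c\sum_v\mu_v(x_v-c)^2 = \sum_v\mu_v x_v^2 - (\sum_v\mu_v x_v)^2/\sum_v\mu_v$ and computes $\mathbb{E}[\sum_v\mu_v x_v^2] = \sum_v\mu_v\|x'_v\|_2^2 = 1$ by the SDP normalization, while $\mathbb{E}[(\sum_v\mu_v x_v)^2] = \|\sum_v\mu_v x'_v\|_2^2 = 0$ by the SDP centering constraint. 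Hence $\mathbb{E}[Z_{2,\mu}(x)] = 1$, and a standard anti-concentration / Gaussian concentration argument (the map $g\mapsto Z_{2,\mu}(X^Tg)$ is a quadratic form in Gaussians with bounded coefficients) shows $Z_{2,\mu}(x) = \Omega(1)$ with constant probability. Combining: with high probability $\mathcal{R}_2(x) = Q_2(x)/Z_{2,\mu}(x) \le O(\zeta(E))\,\mathrm{SDPopt}/\Omega(1) = O(\zeta(E))\,\mathrm{SDPopt}$, and then $\mathrm{SDPopt}\le\min\mathcal{R}_2$ finishes the proof.

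The main obstacle I anticipate is getting the numerator bound down to exactly $O(\zeta(E))$ rather than $O(\zeta(E)\log\zeta(E))$ — a naive union bound over the $|e|!$ extreme points of $\mathcal{B}_e$ is lossy, and one genuinely needs a geometric (chaining / metric-entropy) estimate on the Gaussian width of the base polytope $\mathcal{B}_e$, exploiting that $\mathcal{B}_e\subseteq [-1,1]^e$ (or more precisely that all extreme points have $\ell_2$-norm $O(\sqrt{|e|})$ because $w_e\in[0,1]$) so its Gaussian width is $O(\sqrt{|e|})$. A secondary technical point is ensuring the high-probability event holds simultaneously over all $e\in E$ while keeping the failure probability small; this is routine provided the per-edge tail bounds are sub-Gaussian, which they are. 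I would also need to invoke the known fact (e.g.\ from the homogeneous case in~\cite{louis2015hypergraph}) that the gap between the approximation factors — $O(\zeta(E))$ here versus $O(\log\zeta(E))$ there — stems precisely from the complexity of $\mathcal{B}_e$: for homogeneous hyperedges $\mathcal{B}_e$ is (a scaling of) the hypersimplex-type polytope with Gaussian width $O(\sqrt{\log|e|})$, whereas a general submodular base polytope only admits the weaker $O(\sqrt{|e|})$ bound.
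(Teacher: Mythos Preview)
Your overall architecture---bound the numerator in expectation, bound the denominator below via anti-concentration, then combine by Markov and a union bound---matches the paper's. The denominator step is essentially correct; in fact the SDP constraint $\sum_v\mu_vx_v'=0$ forces $\sum_v\mu_vx_v=\langle g,\sum_v\mu_vx_v'\rangle=0$ \emph{deterministically}, so $Z_{2,\mu}(x)=\|x\|_{\ell_2,\mu}^2$ and no expansion is needed. The paper then cites an anti-concentration lemma of Louis to get $\|x\|_{\ell_2,\mu}^2\ge 1/2$ with probability at least $1/12$.

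The gap is in your numerator bound. You argue that the Gaussian width of $\mathcal{B}_e$ is $O(\sqrt{|e|})$ because $\mathcal{B}_e\subseteq[-1,1]^{e}$, or equivalently because its extreme points have $\ell_2$-norm $O(\sqrt{|e|})$. Neither implication holds: the hypercube $[-1,1]^{|e|}$ has Gaussian width $\Theta(|e|)$, and the Euclidean ball of radius $\sqrt{|e|}$ in $\mathbb{R}^{|e|}$ likewise has Gaussian width $\Theta(|e|)$. A chaining argument driven only by the diameter of $\mathcal{B}_e$ cannot reach $O(\sqrt{|e|})$, so as written your route to $\mathbb{E}[\max_y\langle Xy,g\rangle^2]\le O(|e|)\eta_e^2$ does not close.

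The paper's argument avoids chaining altogether by using a dimension constraint you did not exploit. Since every $y\in\mathcal{B}_e$ satisfies $\sum_{v}y_v=0$ and is supported on $e$, the image $Xy$ always lies in the $(|e|-1)$-dimensional subspace $\mathbb{A}=\mathrm{Span}\{x'_{v_1}-x'_{v_{|e|}},\ldots,x'_{v_{|e|-1}}-x'_{v_{|e|}}\}$. One then factors
\[
\mathbb{E}\Bigl[\max_{y\in\mathcal{E}(\mathcal{B}_e)}\langle Xy,g\rangle^{2}\Bigr]
\;\le\;\Bigl(\max_{y}\|Xy\|_2^{2}\Bigr)\,
\mathbb{E}\Bigl[\max_{u\in\mathbb{A}\cap\mathbb{S}^{n}}\langle g,u\rangle^{2}\Bigr]
\;=\;\eta_e^{2}\cdot\dim(\mathbb{A})\;\le\;(|e|-1)\,\eta_e^{2},
\]
because $\max_{u\in\mathbb{A}\cap\mathbb{S}^{n}}\langle g,u\rangle^{2}=\|\mathrm{proj}_{\mathbb{A}}g\|_2^{2}$ is a $\chi^2$ variable with $\dim(\mathbb{A})$ degrees of freedom. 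Summing over $e$ and applying Markov then gives $Q_2(x)\le 13\,\zeta(E)\sum_e\vartheta_e\eta_e^2$ with probability at least $1-1/13$. The low-dimensionality of the \emph{image} $X\mathcal{B}_e$, not the diameter of $\mathcal{B}_e$ itself, is the missing ingredient.
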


This result immediately leads to the following theorem.
\begin{theorem}\label{SDPfinal}
Suppose that $x$ is the output of Algorithm 1. Then, $c(x) \leq  O(\sqrt{\zeta(E) \tau \, h_2})$ with high probability.
\end{theorem}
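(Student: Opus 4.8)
The plan is to chain together the three results already established: Lemma~\ref{SDPapprox}, Theorem~\ref{thresholding}, and Theorem~\ref{algoform}. First I would invoke Lemma~\ref{SDPapprox} to control the Rayleigh-type quotient of the output vector: with high probability $\mathcal{R}_2(x) \leq O(\zeta(E))\,\text{SDPopt} \leq O(\zeta(E))\,\min \mathcal{R}_2$. By Theorem~\ref{algoform}, $\min_{x}\mathcal{R}_2(x) = \lambda_2^{(2)}$, so we obtain $\mathcal{R}_2(x) \leq O(\zeta(E))\,\lambda_2^{(2)}$. Next, by Theorem~\ref{cheeger} (with $p=2$, $k=2$), $\lambda_2^{(2)} \leq 2\,h_2$ (or $\leq h_2$ up to the constant hidden in the big-$O$), so $\mathcal{R}_2(x) \leq O(\zeta(E)\, h_2)$.

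The second ingredient is the rounding guarantee of Theorem~\ref{thresholding}, which says that if $x$ satisfies $0 \in \arg\min_c Z_{2,\mu}(x,c)$, then $c(x) \leq p\,\tau^{(p-1)/p}\,\mathcal{R}_p(x)^{1/p}$, i.e., for $p=2$, $c(x) \leq 2\sqrt{\tau}\,\sqrt{\mathcal{R}_2(x)}$. Combining this with the bound on $\mathcal{R}_2(x)$ from the previous paragraph yields $c(x) \leq 2\sqrt{\tau}\cdot O(\sqrt{\zeta(E)\,h_2}) = O(\sqrt{\zeta(E)\,\tau\,h_2})$, which is exactly the claimed inequality. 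So the skeleton of the argument is a straightforward composition.

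The one genuine gap I need to close is the hypothesis of Theorem~\ref{thresholding}: it requires the output vector $x$ to be "centered," meaning $Z_{2,\mu}(x,0) = \min_c Z_{2,\mu}(x,c)$, equivalently $\sum_v \mu_v x_v = 0$ (for $p=2$ this is just the weighted mean being zero). The raw output $x = X^T g$ of Algorithm~1 need not satisfy this exactly. However, the SDP constraint $\sum_{v} \mu_v x'_v = 0$ forces $\avg_g[\sum_v \mu_v x_v] = \sum_v \mu_v \langle x'_v, g\rangle = \langle \sum_v \mu_v x'_v, g\rangle = 0$; more usefully, the constraint $\sum_v \mu_v x'_v = \mathbf 0$ as a vector identity gives $\sum_v \mu_v x_v = \langle \mathbf 0, g\rangle = 0$ deterministically. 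Hence $x$ automatically lies in the centered subspace and Theorem~\ref{thresholding} applies directly; I would spell this out as the first step. (Alternatively, one replaces $x$ by $x - c^*\mathbf 1$ for the optimal shift $c^*$, which only decreases $Z_{2,\mu}$ and leaves $Q_2$ unchanged, hence does not increase $\mathcal{R}_2$ and does not change $c(x)$ since thresholding is shift-equivariant.)

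The main obstacle is thus bookkeeping of the constants and the "with high probability" qualifiers rather than any deep step: Lemma~\ref{SDPapprox} holds whp, and its conclusion must be carried through the deterministic inequalities of Theorems~\ref{thresholding}, \ref{algoform}, and~\ref{cheeger}; since those are deterministic, the final bound inherits the same "whp" as Lemma~\ref{SDPapprox}. I would also note in passing that the approximation factor is $O(\sqrt{\zeta(E)})$ in the eigenvalue, which after the square-root Cheeger rounding still reads $O(\sqrt{\zeta(E)\,\tau\,h_2})$; no additional $\zeta(E)$-dependence enters from the rounding step, so the stated bound is tight with respect to this argument.
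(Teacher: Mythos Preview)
Your proposal is correct and follows exactly the paper's approach: chain Lemma~\ref{SDPapprox}, Theorem~\ref{algoform}, Theorem~\ref{cheeger}, and Theorem~\ref{thresholding} to obtain $c(x)\leq O(\sqrt{\tau})\sqrt{\mathcal{R}_2(x)}\leq O(\sqrt{\zeta(E)\tau})\sqrt{\lambda_2^{(2)}}\leq O(\sqrt{\zeta(E)\tau h_2})$ with high probability. Your observation about the centering hypothesis is also handled identically in the paper (the identity $\sum_v \mu_v x_v = \langle \sum_v \mu_v x'_v, g\rangle = 0$ from the SDP constraint is in fact the first line of the proof of Lemma~\ref{SDPapprox}).
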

%Note that Theorem~\ref{SDPfinal} implies that Algorithm 1 yields a 2-way partition with cheeger constant better than the one based on graph-cut approximation method~\cite{li2017inhomogoenous} where the approximation factor is $O(\max_e |e|)$. 
We describe next Algorithm 2 for optimizing $\mathcal{R}_1(x)$ which has guaranteed convergence properties.
\begin{table}[htb]
\centering
\begin{tabular}{l}
\hline
\label{IPM}
\textbf{Algorithm 2: } \textbf{IPM-based minimization of $\mathcal{R}_1(x)$} \\
\hline
\ \textbf{Input}: A submodular hypergraph $G=(V,E, \mathbf{w},\boldsymbol{\mu})$ \\
Find nonconstant $x^0\in \mathbb{R}^{N}$ s.t. $0 \in \arg\min_c \| x^0 - c\mathbf{1}\|_{\ell_1,\mu}$\\
\quad\quad initialize  $\hat{\lambda}^0 \leftarrow \mathcal{R}_1(x^{0})$, $k\leftarrow 0$ \\
\ 1: \textbf{Repeat}:\\
\ 2:  For $v\in V$, $g_v^{k} \leftarrow \left\{\begin{array}{cc} \text{sgn}(x_v^{k})\mu_v, &\text{if $x_v^{k}\neq 0$} \\ -\frac{\mu_1^+(x^{k}) -\mu_1^-(x^{k})}{\mu^0(x^{k})}\mu_v, &\text{if $x_v^{k}= 0$}  \end{array} \right. $ \\
\ 3:  $z^{k+1} \leftarrow \arg\min_{z:  \|z\|\leq 1} Q_1(z) - \hat{\lambda}^k \langle z, g^k\rangle$ \\
\ 4:   $c^{k+1} \leftarrow \arg\min_c \| z^{k+1} - c\mathbf{1}\|_{\ell_1,\mu}$ \\
\ 5:  $x^{k+1} \leftarrow  z^{k+1}  - c^{k+1}\mathbf{1}$ \\
\ 6:  $\hat{\lambda}^{k+1} \leftarrow \mathcal{R}_1(x^{k+1})$ \\
\ 7: {Until} $|\hat{\lambda}^{k+1} -\hat{\lambda}^{k}|/\hat{\lambda}^k <\epsilon$ \\
\ 8. \textbf{Output} $x^{k+1}$ \\
\hline
\end{tabular}
\end{table}
\begin{theorem}\label{convergence}
The sequence $\{x^{k}\}$ generated by Algorithm 2 satisfies $\mathcal{R}_1(x^{k+1}) \leq \mathcal{R}_1(x^{k})$. 
%Hence, at each step, the algorithm improves the solution of the optimization problem. 
\end{theorem}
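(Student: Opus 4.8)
The plan is to establish the monotonicity $\mathcal{R}_1(x^{k+1}) \leq \mathcal{R}_1(x^k)$ by tracking how the IPM objective decreases through steps 2--6 of Algorithm 2, exploiting the fact that $Q_1$ is positively homogeneous of degree $1$ and that $g^k$ is (minus) a subgradient of $Z_{1,\mu}(\cdot)$ at $x^k$. First I would record the key facts: $Q_1$ is convex and $1$-homogeneous, so $Q_1(z) = \langle \sigma, z\rangle$ for any $\sigma \in \nabla Q_1(z)$; and by Lemma~\ref{medianproperty}, the vector $g^k$ defined in step 2 satisfies $\langle g^k, x^k\rangle = \|x^k\|_{\ell_1,\mu} = Z_{1,\mu}(x^k,0) = Z_{1,\mu}(x^k)$ (the last equality because $0 \in \arg\min_c Z_{1,\mu}(x^k,c)$, which is maintained by steps 4--5), and moreover $g^k$ is a subgradient: $\langle g^k, z - x^k\rangle \leq \|z\|_{\ell_1,\mu} - \|x^k\|_{\ell_1,\mu}$ for all $z$, hence $\langle g^k, z\rangle \leq \|z\|_{\ell_1,\mu}$, and since $\|z\|_{\ell_1,\mu} \geq Z_{1,\mu}(z)$ we get $\langle g^k, z\rangle \leq Z_{1,\mu}(z)$ for all $z$. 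Finally, since $Z_{1,\mu}(z - c\mathbf{1}) = Z_{1,\mu}(z)$ is translation-invariant and $x^{k+1} = z^{k+1} - c^{k+1}\mathbf{1}$, we have $Q_1(x^{k+1}) = Q_1(z^{k+1})$ and $Z_{1,\mu}(x^{k+1}) = Z_{1,\mu}(z^{k+1})$, so it suffices to compare $z^{k+1}$ against $x^k$.

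The core chain of inequalities is then: using optimality of $z^{k+1}$ in step 3 against the feasible competitor $z = x^k / \|x^k\|$ (or, after rescaling via $1$-homogeneity, simply $z = x^k$ directly since the constraint $\|z\| \leq 1$ together with homogeneity makes the minimization scale-invariant in the relevant sense — here one must be slightly careful and normalize, then undo the normalization), we obtain
\begin{align*}
Q_1(z^{k+1}) - \hat{\lambda}^k \langle z^{k+1}, g^k\rangle \leq Q_1(x^k) - \hat{\lambda}^k \langle x^k, g^k\rangle = Q_1(x^k) - \hat{\lambda}^k Z_{1,\mu}(x^k) = 0,
\end{align*}
where the last equality is precisely the definition $\hat{\lambda}^k = \mathcal{R}_1(x^k) = Q_1(x^k)/Z_{1,\mu}(x^k)$. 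Hence $Q_1(z^{k+1}) \leq \hat{\lambda}^k \langle z^{k+1}, g^k\rangle \leq \hat{\lambda}^k Z_{1,\mu}(z^{k+1})$, using $\langle g^k, z\rangle \leq Z_{1,\mu}(z)$ from the previous paragraph and $\hat{\lambda}^k \geq 0$. Dividing by $Z_{1,\mu}(z^{k+1}) > 0$ (nonzero because $z^{k+1}$ is nonconstant — this needs a brief argument, see below) gives $\mathcal{R}_1(z^{k+1}) \leq \hat{\lambda}^k = \mathcal{R}_1(x^k)$, and translating back $\mathcal{R}_1(x^{k+1}) = \mathcal{R}_1(z^{k+1}) \leq \mathcal{R}_1(x^k)$.

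I expect the main obstacle to be the bookkeeping around homogeneity and the constraint $\|z\| \leq 1$ in step 3: one must verify that the minimizer $z^{k+1}$ can legitimately be compared against a suitably normalized $x^k$, and that the scaling cancels correctly given that $Q_1$ is $1$-homogeneous while $\langle z, g^k\rangle$ is also $1$-homogeneous — so the objective in step 3 is $1$-homogeneous and its minimum over $\{\|z\|\leq 1\}$ is attained on the boundary (or is $0$), which is where the normalization of $x^k$ enters. A secondary technical point is ensuring $Z_{1,\mu}(z^{k+1}) \neq 0$, i.e., that $z^{k+1}$ is nonconstant: if it were constant then $Q_1(z^{k+1}) = 0$ and $\langle z^{k+1}, g^k\rangle = 0$ (since $g^k \perp \mathbf{1}$ by construction — $\sum_v g_v^k = \mu_1^+(x^k) - \mu_1^-(x^k) - (\mu_1^+(x^k)-\mu_1^-(x^k)) = 0$), giving objective value $0$, which is consistent but degenerate; one argues that the algorithm either produces a nonconstant $z^{k+1}$ or has already converged to an eigenvector, so the monotonicity statement holds trivially in the degenerate case. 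Modulo these care points, the proof is a short convexity-plus-homogeneity argument.
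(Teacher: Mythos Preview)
Your approach is essentially the same as the paper's: both compare the step-3 minimizer against the current iterate to conclude $Q_1(z^{k+1}) - \hat\lambda^k\langle z^{k+1},g^k\rangle \le 0$, then bound $\langle g^k,\cdot\rangle$ by $Z_{1,\mu}(\cdot)$ to finish. The paper avoids your normalization discussion by comparing $z^{k+1}$ against the previous iterate $z^k$ (which is automatically feasible, $\|z^k\|\le 1$) and then using translation invariance plus $\langle\mathbf 1,g^k\rangle=0$ to pass from $z^k$ to $x^k$; this is cleaner than rescaling $x^k$, but logically equivalent.

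There is one genuine slip in your write-up. From $\langle g^k,z\rangle \le \|z\|_{\ell_1,\mu}$ and $\|z\|_{\ell_1,\mu}\ge Z_{1,\mu}(z)$ you \emph{cannot} conclude $\langle g^k,z\rangle \le Z_{1,\mu}(z)$: the inequality points the wrong way. The correct argument (which you have all the ingredients for) uses the orthogonality $\langle\mathbf 1,g^k\rangle=0$ that you note later: for every $c$, $\langle g^k,z\rangle=\langle g^k,z-c\mathbf 1\rangle\le \|z-c\mathbf 1\|_{\ell_1,\mu}$ (H\"older with $\|g^k\|_{\ell_\infty,\mu^{-1}}\le 1$, which follows from $|\mu_1^+(x^k)-\mu_1^-(x^k)|\le\mu^0(x^k)$ since $x^k$ is centered), and minimizing over $c$ gives $\langle g^k,z\rangle\le Z_{1,\mu}(z)$. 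The paper does exactly this, phrased as the H\"older step $\langle x^{k+1},g^k\rangle\le \|x^{k+1}\|_{\ell_1,\mu}\,\|g^k\|_{\ell_\infty,\mu^{-1}}\le \|x^{k+1}\|_{\ell_1,\mu}=Z_{1,\mu}(x^{k+1})$. With that correction your proof goes through.
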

The computationally demanding part of Algorithm 2 is the optimization procedure in Step 3. The optimization problem is closely related to the problem of submodular function minimization (SFM) due to the defining properties of the Lov$\acute{\text{a}}$sz extension. Theorem~\ref{diffprob} describes different equivalent formulations of the optimization problem in Step 3. 

\begin{theorem}\label{diffprob}
If the norm of the vector $z$ in Step 3 is $\|z\|_{2}$, the underlying optimization problem is the dual of the following $\ell_2$ minimization problem
\begin{align} \label{DualDSFM}
\min_{y_e} \|\sum_{e\in E} y_e -\hat{\lambda}^k g^k\|_2^2, \quad \text{$y_e\in \vartheta_e \mathcal{B}_e$, $\,\forall \, e\in E$},
\end{align}
where the primal and dual variables are related according to $z = \frac{\hat{\lambda}^k g^k - \sum_{e\in E} y_e}{\|\hat{\lambda}^k g^k - \sum_{e\in E} y_e\|_2} $. 

If the norm of the vector $z$ in Step 3 is $\|z\|_{\infty}$, the underlying optimization problem is equivalent to the following SFM problem
\begin{align} \label{discreteDSFM}
\min_{S\subseteq V} \sum_{e} \vartheta_e w_e(S) - \hat{\lambda}^k  g^k(S),
\end{align}
where the the primal and dual variables are related according to $z_v = 1$ if $v\in S,$ and $z_v=-1$ if $v\notin S$.
% Moreover, the optimization in step 3 of Algorithm 2 can also be obtained up to corredinate-wise accuracy $\epsilon$ by solving the discrete optimization problem 
%\begin{align} \label{discreteDSFM}
%\min_{S\subseteq V} \sum_{e} \vartheta_e w_e(S) - \lambda^k  g^k(S) + \nu |S|
%\end{align} 
% for some $\nu\in \mathbb{R}$ at most $\min\{|V|, \log{1/\epsilon} \}$ times. Here $ g^k(S)  = \sum_{v\in S} g_v^k$.
\end{theorem}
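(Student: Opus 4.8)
The plan is to attack the two cases separately, each time exploiting the fact that $Q_1(z) = \sum_{e\in E}\vartheta_e f_e(z)$ together with the identity $f_e(z) = \max_{y_e \in \mathcal{B}_e} \langle y_e, z\rangle$ (so $\vartheta_e f_e(z) = \max_{y_e \in \vartheta_e \mathcal{B}_e} \langle y_e, z\rangle$). First I would treat the $\ell_2$ case. Writing $Q_1(z) = \max_{y_e \in \vartheta_e\mathcal{B}_e,\,\forall e}\,\langle \sum_e y_e, z\rangle$, the Step~3 objective becomes
\begin{align*}
\min_{\|z\|_2 \le 1}\ \Bigl(\max_{\{y_e\}}\ \langle \textstyle\sum_e y_e - \hat\lambda^k g^k,\, z\rangle\Bigr).
\end{align*}
I would justify swapping $\min$ and $\max$ by a minimax theorem (the inner set $\prod_e \vartheta_e\mathcal{B}_e$ is convex and compact, the unit ball is convex and compact, and the bilinear form is continuous, so Sion's theorem applies), obtaining $\max_{\{y_e\}} \min_{\|z\|_2\le 1} \langle \sum_e y_e - \hat\lambda^k g^k, z\rangle = \max_{\{y_e\}} \bigl(-\|\sum_e y_e - \hat\lambda^k g^k\|_2\bigr) = -\min_{\{y_e\}} \|\sum_e y_e - \hat\lambda^k g^k\|_2$. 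Minimizing the norm is equivalent to minimizing its square, which gives exactly~\eqref{DualDSFM}; and the minimizing $z$ in the inner problem is the negative unit vector in the direction of $\sum_e y_e - \hat\lambda^k g^k$, i.e. $z = (\hat\lambda^k g^k - \sum_e y_e)/\|\hat\lambda^k g^k - \sum_e y_e\|_2$, matching the stated primal–dual relation (handling separately the degenerate case where $\sum_e y_e = \hat\lambda^k g^k$, in which any $z$ is optimal).

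For the $\ell_\infty$ case, the feasible set $\{z : \|z\|_\infty \le 1\}$ is the cube $[-1,1]^N$. Since $Q_1(z) - \hat\lambda^k\langle z, g^k\rangle$ is convex and piecewise linear in $z$, its minimum over the cube is attained at a vertex, i.e. at some $z \in \{-1,1\}^N$; such $z$ is in bijection with subsets $S\subseteq V$ via $z_v = 1 \iff v\in S$. I would then evaluate the objective at such a $z$: using symmetry and normalization of $w_e$ one has $f_e(z) = f_e(\mathbf{1}_S - \mathbf{1}_{\bar S}) = 2 w_e(S)$ when $z$ is $\pm1$-valued — more precisely, the Lov\'asz extension evaluated at $\mathbf{1}_S - \mathbf{1}_{\bar S}$ equals $w_e(S) + w_e(\bar S) = 2w_e(S)$ by symmetry — wait, I need to recheck the constant. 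Actually from the definition~\eqref{lovazexp} with a two-valued input ($+1$ on $S$, $-1$ on $\bar S$), $f_e(z) = w_e(S)\cdot(1-(-1)) = 2 w_e(S)$; similarly $\langle z, g^k\rangle = g^k(S) - g^k(\bar S) = 2g^k(S)$ since $g^k(V) = \mu_1^+(x^k) - \mu_1^-(x^k) - \frac{\mu_1^+(x^k)-\mu_1^-(x^k)}{\mu^0(x^k)}\mu^0(x^k) = 0$. Hence the objective at $z$ equals $2\bigl(\sum_e \vartheta_e w_e(S) - \hat\lambda^k g^k(S)\bigr)$, and minimizing over vertices $z$ is equivalent (up to the harmless factor $2$) to the SFM problem~\eqref{discreteDSFM}.

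The main obstacle I anticipate is the rigorous justification of the min–max exchange and, relatedly, ensuring the equivalence is genuine rather than merely an inequality: I must verify that the value of the cube-constrained (resp. ball-constrained) problem is actually attained and that strong duality holds with no gap, which requires checking compactness and convexity hypotheses carefully (both feasible regions are compact, all functions are continuous and convex/concave as needed, so Sion's minimax theorem applies cleanly, but this needs to be spelled out). A secondary technical point is the bookkeeping of constant factors — confirming that $g^k(V) = 0$ so that the reduction to a set function on $V$ (rather than an affine function) is exact, and tracking the factor of $2$ that appears from the $\pm 1$ encoding; these do not affect which problem must be solved but should be stated correctly. Finally, in the $\ell_2$ case I should note that the argmin $z$ is unique whenever $\hat\lambda^k g^k \neq \sum_e y_e$ at optimum, which is the generic situation and all that is needed for the algorithm.
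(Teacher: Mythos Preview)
Your $\ell_2$ argument is correct and essentially the same as the paper's (the paper introduces a Lagrange multiplier $\lambda$ for the constraint $\|z\|_2^2\le 1$ and swaps min and max, reaching $-\|\sum_e y_e - \hat\lambda^k g^k\|_2$ just as you do; your direct appeal to Sion is an equally valid route).

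Your $\ell_\infty$ argument, however, has a genuine gap. The assertion ``$Q_1(z) - \hat\lambda^k\langle z, g^k\rangle$ is convex and piecewise linear, hence its minimum over the cube is attained at a vertex'' is false as a general principle: $|x|$ on $[-1,1]$ is convex and piecewise linear, yet its unique minimizer is the interior point $0$. (Minima of convex functions over polytopes need not lie at extreme points; that is a property of \emph{concave} or linear functions.) So the step that reduces the continuous problem to the vertex set $\{-1,1\}^N$ is unjustified.

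The correct mechanism --- the one the paper invokes --- is specific to Lov\'asz extensions. After the affine change of variables $z' = (z+\mathbf{1})/2$ (valid because $\langle \mathbf{1}, g^k\rangle = 0$ and $Q_1$ is shift-invariant, as you already observed), the objective on $[0,1]^N$ is exactly the Lov\'asz extension of the set function $F(S) = \sum_e \vartheta_e w_e(S) - \hat\lambda^k g^k(S)$. For \emph{any} set function with $F(\emptyset)=0$, the Lov\'asz extension satisfies $\min_{z'\in[0,1]^N} f(z') = \min_{S\subseteq V} F(S)$, with a minimizer at an indicator $\mathbf{1}_{S^*}$; this follows from the level-set integral representation $f(z') = \int_0^1 F(\{v: z'_v > t\})\,dt$ (or Proposition~3.7 in Bach, as the paper cites). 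Your evaluation at $\pm 1$-vectors and the factor-of-$2$ bookkeeping are fine; you just need to replace the faulty convexity claim with this Lov\'asz-extension argument.
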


\begin{figure*}[t]
\centering
\includegraphics[trim={0cm 0cm 0cm 0cm},clip,width=.24\textwidth]{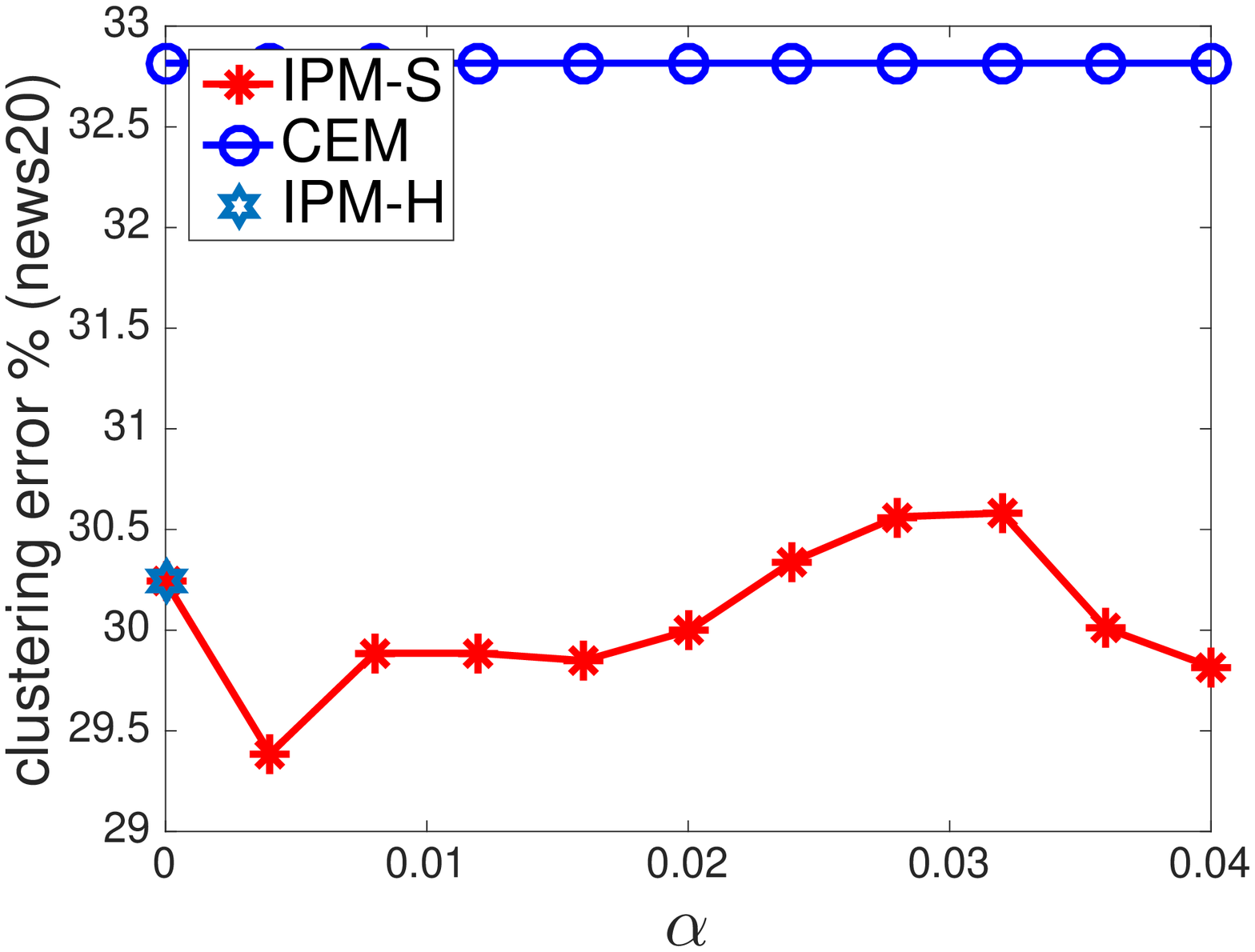}
\includegraphics[trim={0cm 0cm 0cm 0cm},clip, width=.24\textwidth]{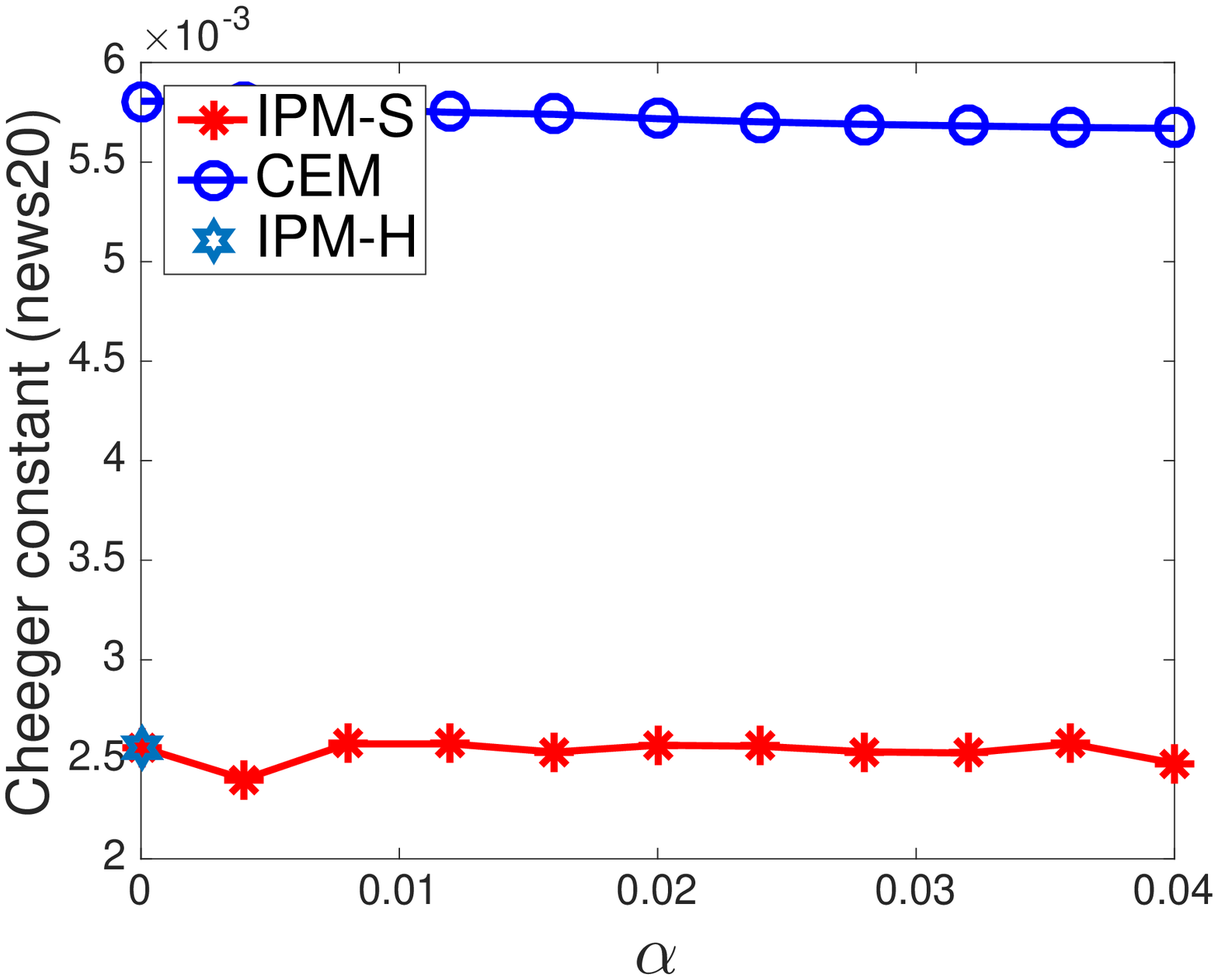}
\includegraphics[trim={0cm 0cm 0cm 0cm},clip,width=.24\textwidth]{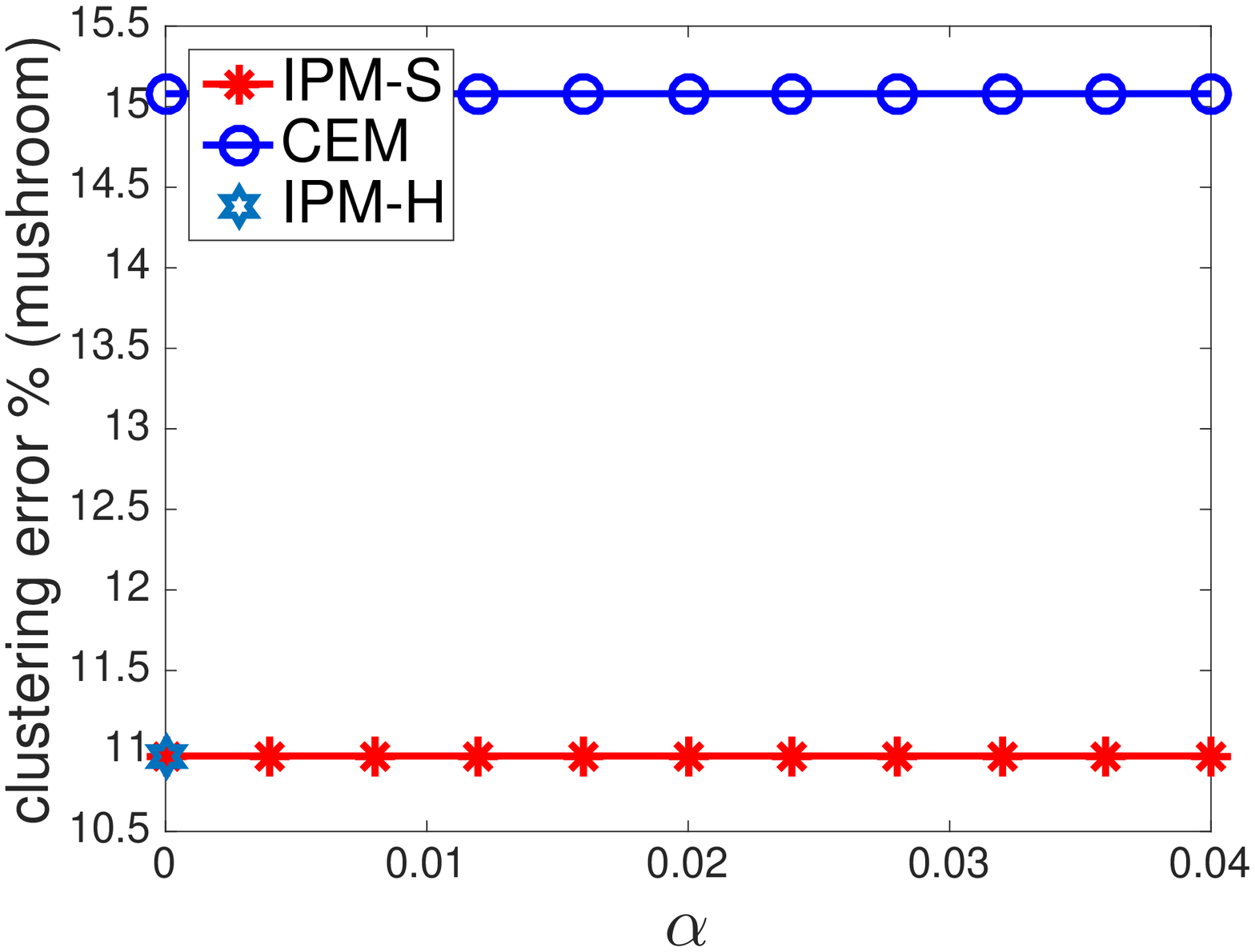}
\includegraphics[trim={0cm 0cm 0cm 0cm},clip, width=.24\textwidth]{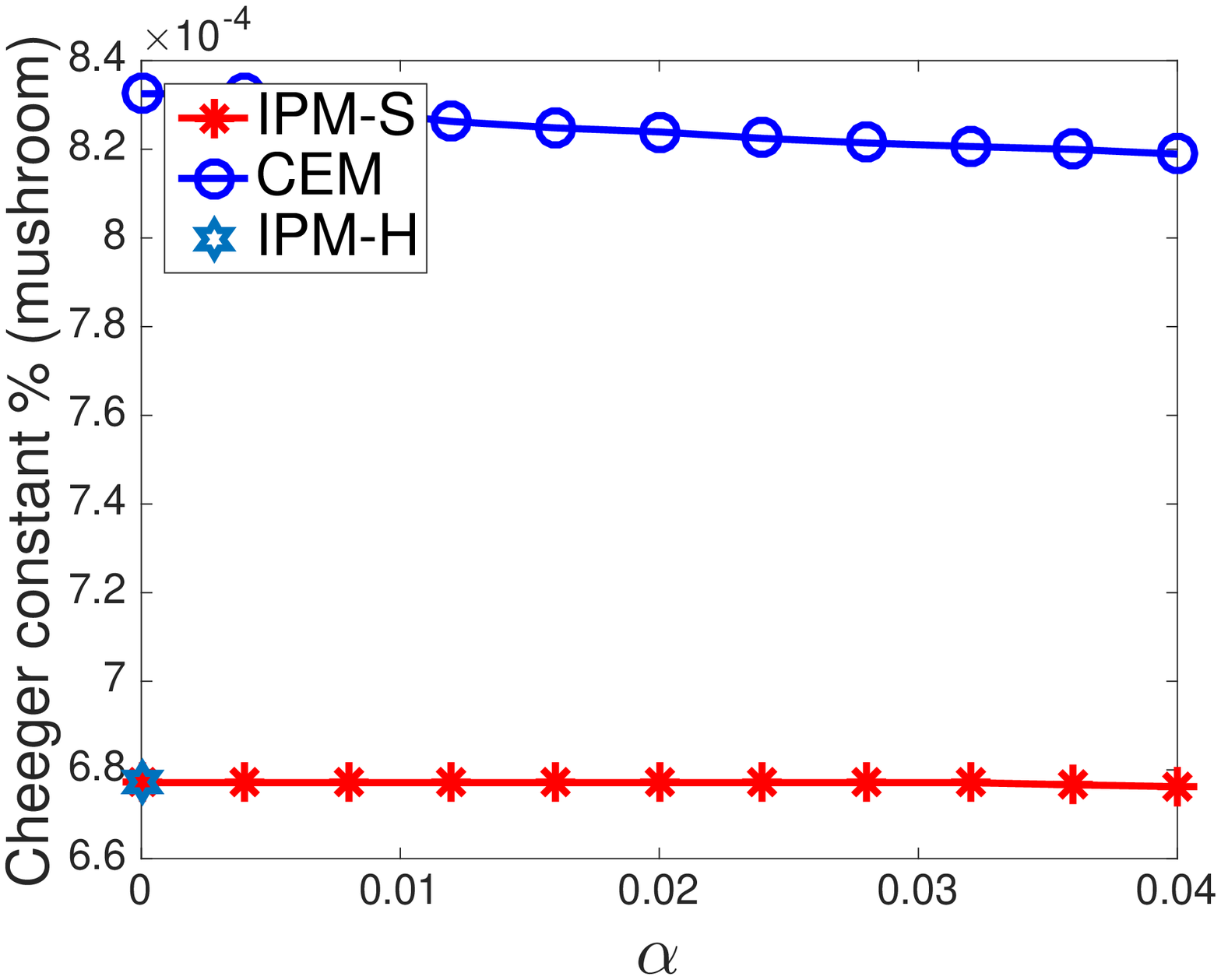}\\
\includegraphics[trim={0cm 0cm 0cm 0cm},clip,width=.24\textwidth]{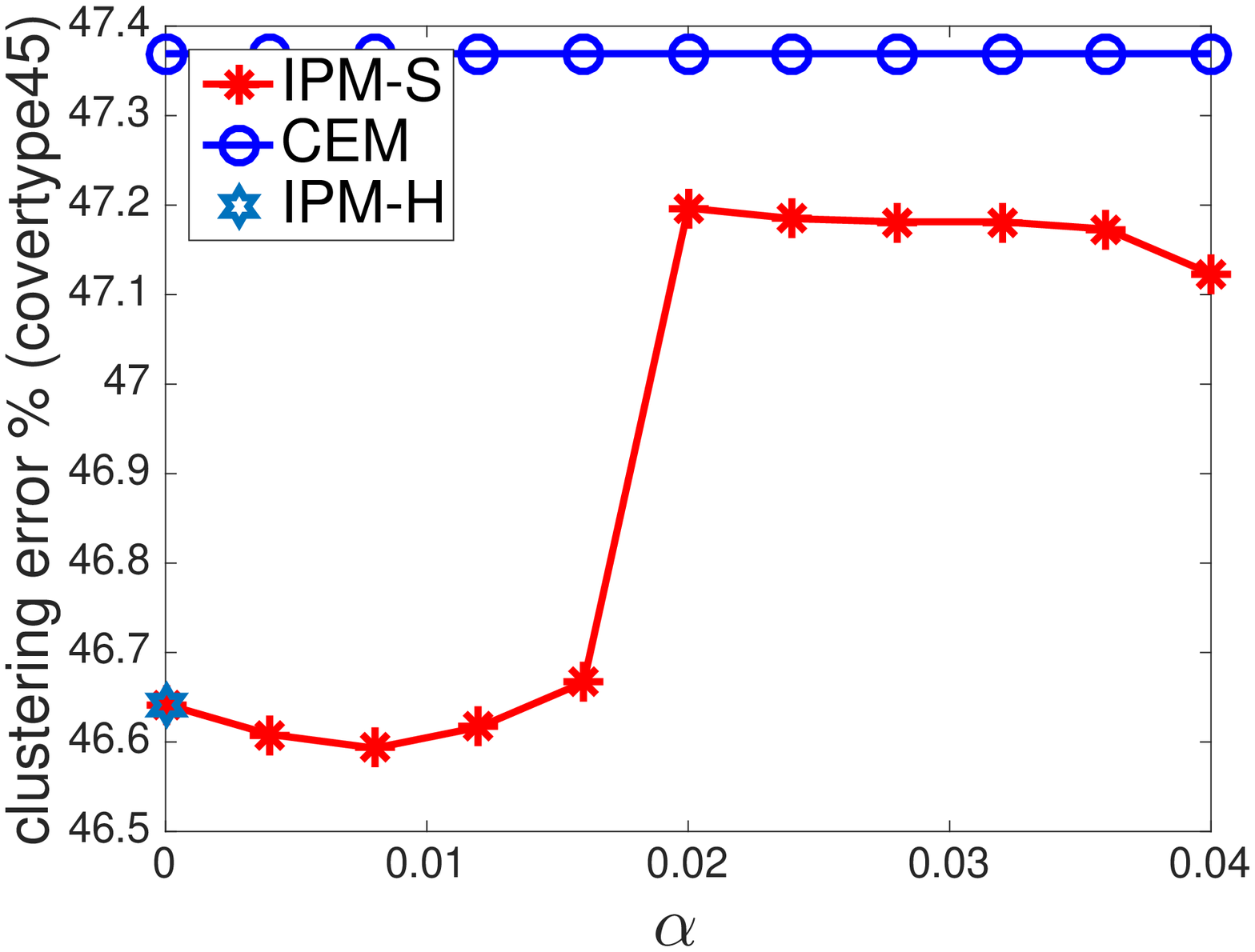}
\includegraphics[trim={0cm 0cm 0cm 0cm},clip, width=.24\textwidth]{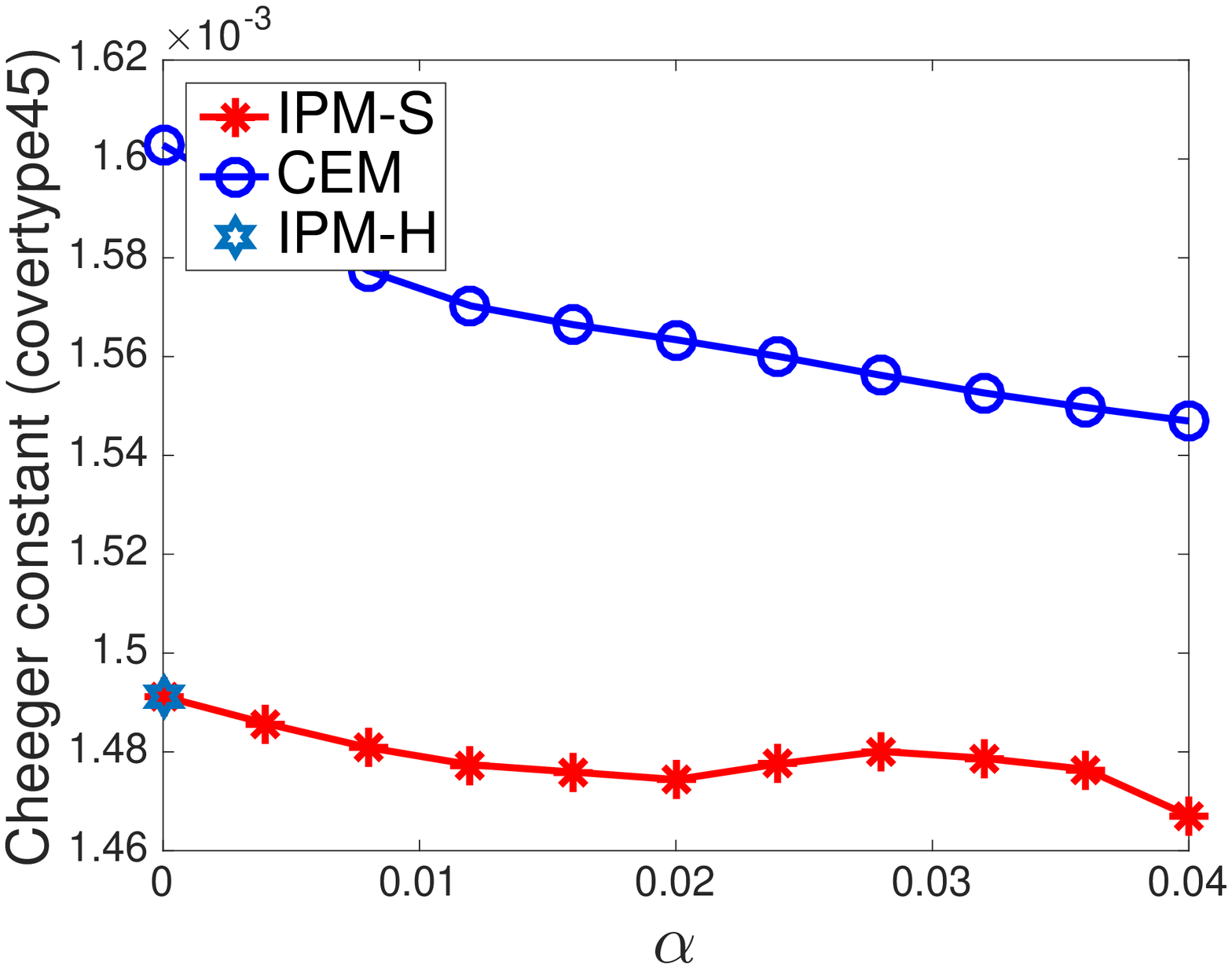}
\includegraphics[trim={0cm 0cm 0cm 0cm},clip,width=.24\textwidth]{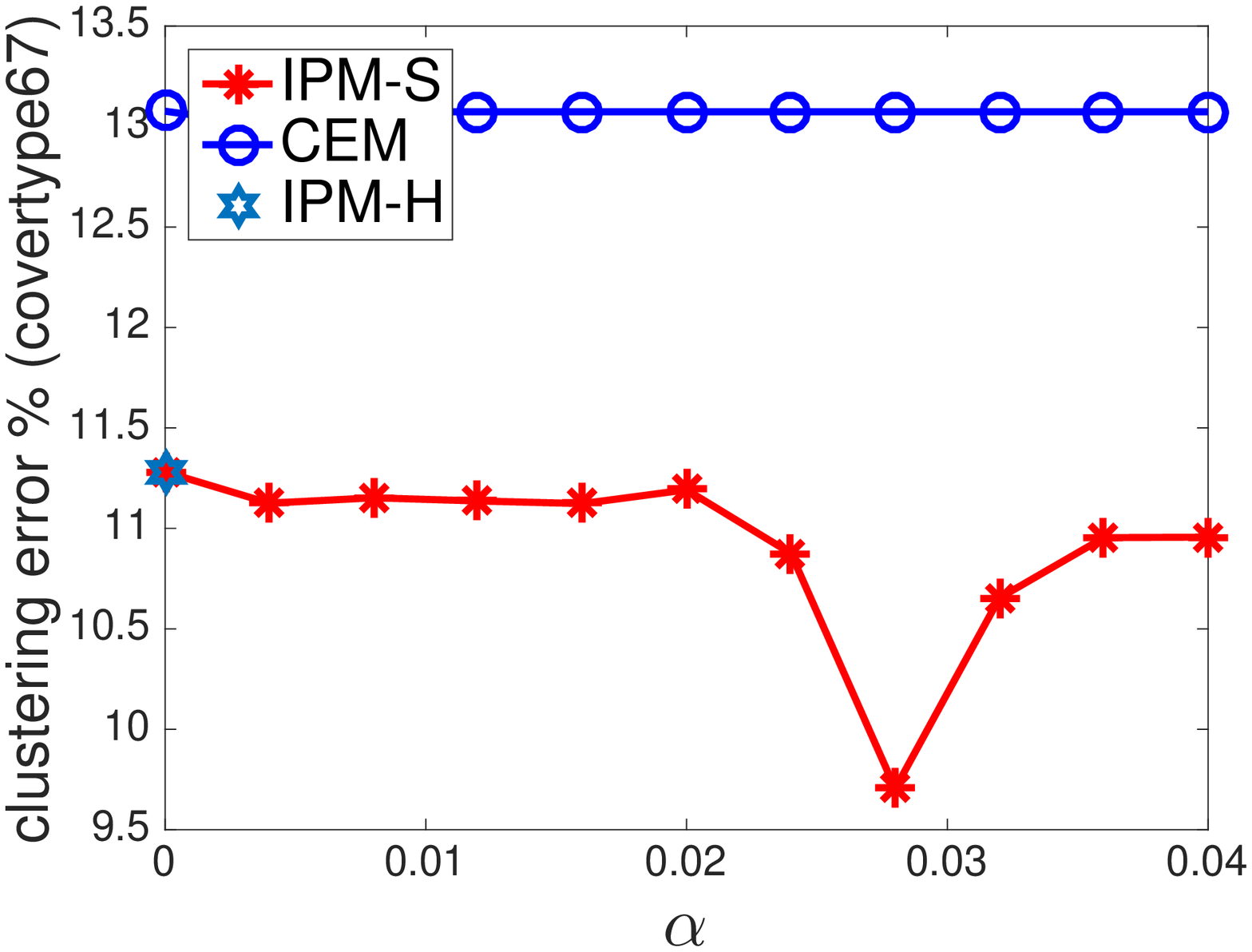}
\includegraphics[trim={0cm 0cm 0cm 0cm},clip, width=.24\textwidth]{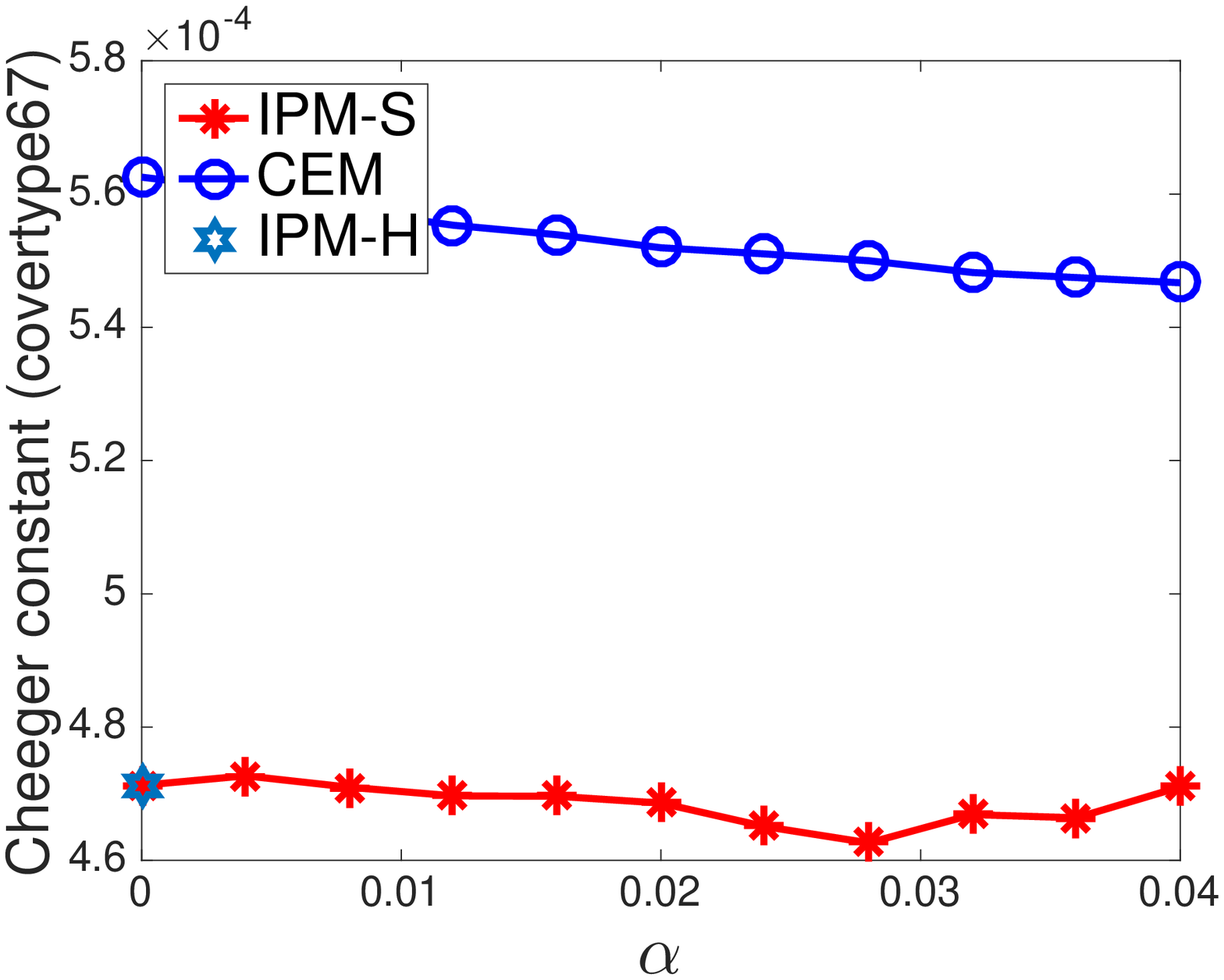}
\caption{Experimental clustering results for four UCI datasets, displayed in pairs of figures depicting the Clustering error and the Cheeger constant versus $\alpha$. Fine tuning the parameter $\alpha$ may produce significant performance improvements in several datasets - for example, on the Covertype67 dataset, choosing $\alpha=0.028$ results in visible drops of the clustering error and the Cheeger constant. \emph{Both the use of $1$-Laplacians and submodular weights may be credited for improving clustering performance.}}
\label{expresult}
\end{figure*}

For special forms of submodular weights, different algorithms for the optimization problems in Theorem~\ref{diffprob} may be used instead. For graphs and homogeneous hypergraphs with hyperedges of small size, the min-cut algorithms by Karger et al. and Chekuri et al.~\cite{karger1993global,chekuri2017computing} allow one to efficiently solve the discrete problem~\eqref{discreteDSFM}. %However, for a general SFM problem, the complexity of the discrete optimization problem may be as large as $O(N^5 \, M + N^6),$ where $M$ is the complexity of evaluating the value of a submodular function~\cite{orlin2009faster}. %For most practically relevant cases, for which the submodular weights of the hyperedges are decomposable,
%\textcolor{blue}{There are two types of assumptions for DSFM. These two assumptions are neither mutually exclusive nor equivalent to each other. However, in practice, both assumptions often occur simultaneously. Here comes the two assumptions: 1) the submodular weight is simple, an assumption adopted by DSFM papers on continuous optimization~\cite{jegelka2013reflection,nishihara2014convergence, ene2015random}. Here, ``simple'' means the weight is like graph-cut function/ only depends on cardinality and so on, with some specific structures. 2) the size of hyperedge (or effective support set of submodular weight, a terminology in DSFM papers) is morderate, an assumption adopted by DSFM papers on combinatorial optimization and the network motif clustering papers~\cite{kolmogorov2012minimizing, ene2017decomposable, benson2016higher,li2017inhomogeneous}. Neglecting assumption 2) is also one of the motivations for our DSFM paper on continuous optimization. In Lap paper, the experiment corresponds to the case that $\zeta(E)$ is not so small (about $0.1|V|$) but the weight is simple (depends on cardinality). The reason to choose not small $\zeta(E)$ here is to emphasize the superiority of IPM as opposed to clique expansion.}   
%In most practically relevant cases, one can reduce the computational cost associated with solving the above optimization problem by leveraging the decomposable structure of submodular hypergraphs. 
Continuous optimization methods such as alternating projections (AP)~\cite{nishihara2014convergence} and coordinate descend methods (CDM)~\cite{ene2015random} can be used to solve~\eqref{DualDSFM} by ``tracking'' minimum norm points of base polytopes corresponding to individual hyperedges, where for general submodular weights, Wolfe's Algorithm~\cite{wolfe1976finding} can be used. When the submodular weights have some special properties, such as that they depend only on the cardinality of the input, there exist algorithms that operate efficiently even when $|e|$ is extremely large~\cite{jegelka2013reflection}. 
%some algorithms operate efficiently for %moderately large maximum hyperedge sizes $\zeta(E)$. 

%A discrete optimization solver proposed by Ene et al.~\cite{ene2017decomposable}, based on submodular incremental breadth first search (IBFS), may also be applied to the problem stated in equation~\eqref{discreteDSFM}; the algorithm operates efficiently for moderately large maximum hyperedge sizes $\zeta(E)$. 

In our experimental evaluations, we use a random coordinate descent method (RCDM)~\cite{ene2015random}, which ensures an expected $(1+\epsilon)-$approximation by solving an expected number of $O(|V|^2|E|\log\frac{1}{\epsilon})$ min-norm-point problems. Note that when performing continuous optimization, one does not need to solve the inner-loop optimization problem exactly and is allowed to exit the loop as long as the objective function value decreases. Algorithm 3 lists the step of a RCDM algorithm in which one submodular hyperedge is sampled in one iteration, and the corresponding value of $y_e$ is updated (Clearly, multiple values of $y_e$ can be updated simultaneously if and only if the corresponding hyperedges do not intersect, and this parallelization step further improves the convergence rate of the method). 

\begin{table}[htb]
\centering
\begin{tabular}{l}
\hline
\label{RCDM}
\textbf{Algorithm 3: } \textbf{A RCDM for Solving the problem (8)} \\
\hline
\ \textbf{Input}: Submodular hypergraph $G=(V,E, \mathbf{w},\boldsymbol{\mu})$,  $\hat{\lambda}^k$, $g^k$.\\
\ 0: Initialize $y_e^{0}\in \vartheta_e\mathcal{B}_e$ for $e\in E$, $k\leftarrow 0$\\
\ 1: In iteration $k$:\\
%\ 3: \quad Generate one group $C$ by following categorical distribution with $\mathbb{P}[C = C_{i}] = w_i$, $i\in[m]$ .  \\
%\ 2: \quad Pick one group $C_k\subseteq [R]$ with size $K$ uniformly randomly  \\
\ 2: Sample one hyperedge $e\in E$ uniformly at random.\\
\ 3: $y_e^{k+1}\leftarrow \arg\min_{y_e\in \vartheta_e\mathcal{B}_e} \|y_e + \sum_{e'\in E/\{e\}} y_{e'} - \hat{\lambda}^k g^k\|_{2}^2$ \\
\ 4: Set $y_{e'}^{k+1}\leftarrow y_{e'}^{k}$ for $e'\neq e$.  \\
\ \textbf{Output}  $\frac{\hat{\lambda}^k g^k - \sum_{e\in E} y_e}{\|\hat{\lambda}^k g^k - \sum_{e\in E} y_e\|_2}$ \\
\hline
\end{tabular}
\end{table}

\section{Experiments} \label{sec:data}

%The notion of submodular hypergraph generalizes the concept of regular graphs and homogeneous hypergraphs to an even wide range. The former sections set up spectral graph theory and spectral clustering algorithms for this new range. 
%In this section, we provide some examples that demonstrates this generalization may indeed help in some practical applications for data clustering. 

In what follows, we compare the algorithms for submodular hypergraph clustering described in the previous section to two methods: The IPM for homogeneous hypergraph clustering~\cite{hein2013total} and the clique expansion method (CEM) for submodular hypergraph clustering~\cite{li2017inhomogeneous}. We focus on $2$-way graph partitioning problems related to the University of California Irvine (UCI) datasets selected for analysis in~\cite{hein2013total}, described in 
Table~\ref{tab:dataset}. The datasets include 20Newsgroups, Mushrooms, Covertype. In all datasets, $\zeta(E)$ was roughly $10^3$, and each of these datasets describes multiple clusters. Since we are interested in $2$-way partitioning, we focused on two pairs of clusters in Covertype, denoted by $(4,5)$ and $(6,7),$ and paired the four 20Newsgroups clusters, one of which includes \emph{Comp.} and \emph{Sci}, and another one which includes \emph{Rec.} and \emph{Talk}. The Mushrooms and 20Newsgroups datasets contain only categorical features, while Covertype also includes numerical features. We adopt the same approach as the one described in~\cite{hein2013total} to construct hyperedges: Each feature corresponds to one hyperedge; hence, each categorical feature is captured by one hyperedge, while numerical features are first quantized into $10$ bins of equal size, and then mapped to hyperedges. To describe the submodular weights, we fix $\vartheta_e = 1$ for all hyperedges and parametrize $w_e$ using a variable $\alpha\in (0, 0.5]$ 
\begin{align} %\label{subweight}
w_e(S;\alpha) = \frac{1}{2}+ \frac{1}{2} \min\left\{1, \frac{|S|}{\lceil \alpha |e| \rceil }, \frac{|e/S|}{\lceil \alpha |e| \rceil }\right\}, \; \forall S\subseteq e. \notag
\end{align}
The intuitive explanation behind our choice of weights is that it allows one to accommodate categorization errors and outliers: In contrast to the homogeneous case in which any partition of a hyperedge has weight one, the chosen submodular weights allow a smaller weight to be used when the hyperedge is partitioned into small parts, i.e., when $\min \{|S|, |e/S|\}< \lceil \alpha |e| \rceil$. In practice, $\alpha$ is chosen to be relatively small -- in all experiments, we set $\alpha \leq 0.04$, with $\alpha$ close to zero producing homogeneous hyperedge weights. 

\begin{table*}[h] 
\begin{tabular}{|p{1.3cm}<{\centering}|p{2.2cm}<{\centering}|p{1.6cm}<{\centering}|p{2.6cm}<{\centering}|p{2.6cm}<{\centering}|}
\hline
Dataset & 20Newsgroups & Mushroom & Covertype$45$ & Covertype$67$ \\
\hline
$|V|$ & 16242 & 8124 & 12240 & 37877 \\
$|E|$ & 100 & 112 &  127 & 136 \\
$\sum_{e\in E} |e|$ & 65451 & 170604 & 145999 & 451529 \\
\hline
\end{tabular}
\centering 
\caption{The UCI datasets used for experimental testing.}
\label{tab:dataset}
\end{table*}

The results are shown in Figure~\ref{expresult}. As may be observed, both in terms of the Clustering error (i.e., the total number of erroneously classified vertices) and the values of the  Cheeger constant, IPM-based methods outperform CEM. This is due to the fact that for large hyperedge sizes, CEM incurs a high distortion when approximating the submodular weights ($O(\zeta(E))$~\cite{li2017inhomogeneous}). Moreover, as $w_e(S)$ depends merely on $|S|$, the submodular hypergraph CEM reduces to the homogeneous hypergraph CEM~\cite{zhou2007learning}, which is an issue that the IPM-based method does not face. Comparing the performance of IPM on submodular hypergraphs (IPM-S) with that on homogeneous hypergraphs (IPM-H), we see that IPM-S achieves better clustering performance on both 20Newsgroups and Covertypes, and offers the same performance as IPM-H on the Mushrooms dataset. This indicates that it is practically useful to use submodular hyperedge weights for clustering purposes. A somewhat unexpected finding is that for certain cases, one observes that when $\alpha$ increases (and thus, when $w_e$ decreases), the corresponding Cheeger constant increases. This may be caused by the fact that the IPM algorithm can get trapped in a local optima.

\bibliography{example_paper}
\bibliographystyle{IEEETran}

\newpage
\begin{appendix}

\begin{center}

{\Large \textbf{Appendix}}
\end{center}
\end{appendix}
\def\thesection{\Alph{section}}

\section{Preliminary Proofs}
We find the following properties of the Lov{\'a}sz extension of normalized symmetric submodular functions useful in the derivations to follow.
\begin{lemma}\label{basicprop}
Consider two vectors $x, x' \in \mathbb{R}^N$. If $F$ is a symmetric submodular function with $F([N]) = 0$, and $f(x)$ is the corresponding Lov{\'a}sz extension, then for any scalar $c \in \mathbb{R}$,
\begin{itemize}
\item[1)] $f(cx) = |c|f(x)$.
\item[2)] $ \nabla f(cx) = \text{sgn}(c)\nabla f(x)$, where $\text{sgn}$ denotes the sign function defined in the main text.
%\item[3)] For any $u,v\in V$, if $x_u\leq x_v \Rightarrow x'_u\leq x'_v$, then $\langle \nabla f(x'), x\rangle = f(x)$.
\item[3)] $\langle \nabla f(x), \mathbf{1}\rangle = 0$.
\end{itemize}
\end{lemma}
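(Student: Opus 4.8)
The plan is to verify each of the three properties directly from the definition of the Lov\'asz extension in~\eqref{lovazexp} together with the variational characterizations $f(x) = \max_{y\in\mathcal{B}}\langle y,x\rangle$ and $\nabla f(x) = \arg\max_{y\in\mathcal{B}}\langle y,x\rangle$. For part 1), I would split into the cases $c>0$, $c=0$, and $c<0$. When $c>0$, scaling $x$ by $c$ preserves the ordering of the entries, so the chain $S_j=\{i_1,\dots,i_j\}$ in the definition is unchanged, and each difference $x_{i_j}-x_{i_{j+1}}$ is multiplied by $c$; hence $f(cx)=cf(x)=|c|f(x)$. The case $c=0$ is immediate since $f(\mathbf{0})=0$ (all differences vanish) and $|c|f(x)=0$. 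When $c<0$, multiplying by $c$ reverses the ordering, so the new ascending chain of sets consists of the complements $[N]\setminus S_j$; using symmetry $F([N]\setminus S_j)=F(S_j)$ together with $F([N])=0$ and re-indexing the telescoping sum, one finds $f(cx) = -c\, f(x) = |c|f(x)$. This is the only step that requires genuine bookkeeping, and it is the main (mild) obstacle: one must carefully track how the complementation of the sets $S_j$ interacts with the reversal of the gaps $x_{i_j}-x_{i_{j+1}}$, and invoke both $F(V)=0$ and symmetry at the right moment.

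For part 2), I would derive it as a consequence of part 1) via the subdifferential characterization~\eqref{subgradient}. For $c>0$, $\nabla f(cx) = \arg\max_{y\in\mathcal{B}}\langle y,cx\rangle = \arg\max_{y\in\mathcal{B}}\langle y,x\rangle = \nabla f(x) = \mathrm{sgn}(c)\nabla f(x)$, since scaling the objective by a positive constant does not change the set of maximizers. For $c<0$, $\arg\max_{y\in\mathcal{B}}\langle y,cx\rangle = \arg\min_{y\in\mathcal{B}}\langle y,x\rangle = -\arg\max_{y\in\mathcal{B}}\langle y,x\rangle = -\nabla f(x)$, where the middle equality uses the fact that $\mathcal{B}$ is symmetric about the origin — this follows because $F$ is symmetric and $F(V)=0$, which forces $y\in\mathcal{B}\iff -y\in\mathcal{B}$ (one checks $(-y)(S) = -y(S) = -(y(V)-y(V\setminus S)) = y(V\setminus S)\le F(V\setminus S)=F(S)$). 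For $c=0$ the claim is $\nabla f(\mathbf 0) = [-1,1]\,\nabla f(x)$ in the set-valued sense; here $\nabla f(\mathbf 0) = \arg\max_{y\in\mathcal{B}}\langle y,\mathbf 0\rangle = \mathcal{B}$, and one argues $\mathcal{B} = \bigcup_{t\in[-1,1]} t\,\nabla f(x)$ fails in general, so more carefully I would interpret property 2) only for $c\neq 0$ or state the $c=0$ case as $\mathrm{sgn}(0)\nabla f(x) = \bigcup_{t\in[-1,1]} t\nabla f(x)$ and verify the needed containment; since the lemma is used downstream only through $\varphi_p$ and scaling arguments, I expect the $c\neq0$ cases to suffice, and I would state the lemma accordingly.

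For part 3), I would note that $\mathbf{1} = \mathbf{1}_V$, and $\nabla f(x)\subseteq\mathcal{B}$, so for any $y\in\nabla f(x)$ the defining constraint $y(V) = F(V) = 0$ of the base polytope gives $\langle y,\mathbf 1\rangle = y(V) = 0$. Since the assumption $F([N])=0$ is in force throughout this lemma, this is immediate. Alternatively, one can see it from $f(x+c\mathbf 1) = f(x) + cF(V) = f(x)$ for all $c$, which forces every subgradient to be orthogonal to $\mathbf 1$.

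Overall the proof is short: part 1) is the computational heart, parts 2) and 3) are corollaries of 1) and of the base-polytope constraint $y(V)=0$ respectively. The only subtlety worth flagging explicitly is the symmetry of the base polytope $\mathcal{B} = -\mathcal{B}$, which I would isolate as a one-line observation since it is reused in the $c<0$ case of both parts 1) and 2).
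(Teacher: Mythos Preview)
Your proof is correct and follows essentially the same route as the paper's, with one organizational difference worth noting. For part 1) the paper takes a slightly slicker path: rather than directly tracking how the chain of sets $S_j$ is complemented when $c<0$, it first observes that positive homogeneity $f(cx)=cf(x)$ for $c>0$ is immediate, and then argues that symmetry of $F$ forces $f(-x)=f(x)$ (the Lov\'asz extension of a symmetric function is even), so the $c<0$ case follows in one line. Your explicit complementation argument is of course equivalent and perfectly fine --- it just does by hand what ``$f$ is even'' encapsulates. Part 2) in the paper likewise rides on evenness: $f$ even implies $\nabla f$ is odd, whence $\nabla f(cx)=\nabla f(-x)=-\nabla f(x)$ for $c<0$; your route via the symmetry $\mathcal{B}=-\mathcal{B}$ of the base polytope is the same fact viewed through~\eqref{subgradient}. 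Part 3) is handled identically in both.

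One point where you are actually \emph{more} careful than the paper: the $c=0$ case of part 2). The paper asserts $\nabla f(\mathbf 0)=\mathcal{B}=[-1,1]\mathcal{B}$, but the stated claim is $\nabla f(\mathbf 0)=[-1,1]\nabla f(x)$, and in general $[-1,1]\nabla f(x)\subsetneq\mathcal{B}$, so only the containment $[-1,1]\nabla f(x)\subseteq\nabla f(\mathbf 0)$ holds. You correctly flag this and note that the $c\neq 0$ cases are what is actually used downstream; that is the right call.
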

\begin{proof}
Given the definition of the Lov{\'a}sz extension and its subgradient, for any $c>0$ we have $f(cx) = cf(x)$ and $\nabla f(cx) = \nabla f(x)$. As $F$ is a symmetric submodular function, $f(x) = f(-x)$ is even, which establishes the first claim. Also, since $f(x)$ is even, $\nabla f(x)$ is odd, and thus, for some $c<0$, we have $\nabla f(cx) = \nabla f((-c)-x) = \nabla f(-x) = -\nabla f(x)$. For $c=0$,  $\nabla f(0) = \mathcal{B} = [-1,1] \mathcal{B}=\{{ab: a\in [-1,1], b\in \mathcal{B}\}},$ since $F$ is a symmetric submodular function. Hence, the second claim holds as well. The third claim follows from $\langle \nabla f(x), \mathbf{1}\rangle = F([N]) = 0$.
%as $ f(x) - f(0) = f(x) \geq \langle y, x\rangle$ for all $y\in B$, so 
\end{proof}

\begin{definition}
Let $x, x'\in \mathbb{R}^N$. If $x_u > x_v \Rightarrow x'_u > x'_v$ for all $u, v \in [N]$, we write $x\rightharpoonup x'$.
\end{definition}
\begin{lemma}\label{dotproductpre}
Assume that $F$ is a submodular function defined on $[N]$ and that $f$ is its corresponding Lov{\'a}sz extension. If $x\rightharpoonup x'$, then $\nabla f(x') \subseteq \nabla f(x)$. Furthermore, $\langle \nabla f(x'), x\rangle = f(x)$.
\end{lemma}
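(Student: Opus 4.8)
The plan is to work directly from the defining formula \eqref{lovazexp} for the Lov\'asz extension together with the base-polytope characterization of the subdifferential given in \eqref{subgradient}. Recall that for a vector $x$ with a chosen nonincreasing ordering $x_{i_1}\geq x_{i_2}\geq\cdots\geq x_{i_N}$, the set $\nabla f(x) = \arg\max_{y\in\mathcal{B}}\langle y,x\rangle$ consists exactly of those $y\in\mathcal{B}$ that are ``greedy-tight'' for that ordering, i.e.\ $y(S_j) = F(S_j)$ for every prefix $S_j = \{i_1,\dots,i_j\}$ compatible with a nonincreasing ordering of $x$. The key observation is that the hypothesis $x\rightharpoonup x'$ controls which orderings are admissible: whenever $x'_{i_a}\geq x'_{i_{a+1}}$ fails for a pair adjacent in an $x$-ordering we would need $x_{i_a} = x_{i_{a+1}}$ (else $x\rightharpoonup x'$ would be violated), so any nonincreasing ordering for $x'$ can be refined to (or is already) a nonincreasing ordering for $x$. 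More precisely, every chain of prefixes $\{S'_j\}$ arising from a nonincreasing ordering of $x'$ is also a chain of prefixes arising from \emph{some} nonincreasing ordering of $x$, because sorting $x'$ respects all the strict inequalities of $x$ and only possibly breaks ties of $x$ in a particular way.

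First I would make this ordering statement precise: fix $y\in\nabla f(x')$; then $y$ is greedy-tight along some prefix chain valid for $x'$, and by the observation above that same chain is valid for $x$, so $y$ is greedy-tight along a chain valid for $x$, hence $y\in\arg\max_{y\in\mathcal{B}}\langle y,x\rangle = \nabla f(x)$. This gives $\nabla f(x')\subseteq\nabla f(x)$. Second, for the dot-product identity: take any $y\in\nabla f(x')$. By the inclusion just proved, $y\in\nabla f(x)$, and by the remark following \eqref{subgradient} in the excerpt (``$\langle q,x\rangle = f(x)$ for all $q\in\nabla f(x)$''), we immediately get $\langle y,x\rangle = f(x)$, which is the claimed $\langle\nabla f(x'),x\rangle = f(x)$ (read as: this value is the same for every element of the set $\nabla f(x')$).

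The main obstacle is making the ``every admissible ordering of $x'$ is an admissible ordering of $x$'' claim airtight when there are ties in both $x$ and $x'$, since \eqref{lovazexp} breaks ties arbitrarily and one must check that the value $f(x) = \sum_j F(S_j)(x_{i_j}-x_{i_{j+1}})$ and the tightness set $\nabla f(x)$ are genuinely independent of the tie-breaking choice — a standard but slightly delicate fact about Lov\'asz extensions that I would invoke (or quickly verify) as a preliminary. Once that independence is in hand, the argument reduces to the combinatorial statement that the partial order ``$x_u>x_v$'' refines to a total preorder compatible with $x'$, which is exactly the content of $x\rightharpoonup x'$. I would also note in passing that this lemma is the submodular-hypergraph analogue of the elementary fact for graphs that if $x$ and $x'$ order a pair of endpoints the same way then a subgradient term for $x'$ is a valid subgradient term for $x$; here the ``edge'' is a whole hyperedge and the role of the sign pattern is played by the greedy ordering on that hyperedge's base polytope.
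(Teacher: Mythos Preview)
Your proposal is correct and follows essentially the same route as the paper. The paper's proof fixes $y'\in\nabla f(x')$, takes a nonincreasing ordering $i_1,\dots,i_N$ of $x'$, and invokes the duality characterization (Proposition~3.2 of Bach) that $y'$ is optimal for $x'$ iff $\sum_{j\le k} y'_{i_j}=F(\{i_1,\dots,i_k\})$ at every strict descent $x'_{i_k}>x'_{i_{k+1}}$; since $x\rightharpoonup x'$ forces every strict descent of $x$ in this ordering to also be a strict descent of $x'$, the same tightness conditions certify optimality of $y'$ for $x$, giving $\nabla f(x')\subseteq\nabla f(x)$ and hence $\langle\nabla f(x'),x\rangle=f(x)$.

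The only difference is packaging: the paper uses the ``tight at strict descents'' characterization directly, which cleanly sidesteps the tie-breaking issue you flag as an obstacle (the prefix sets $S_k$ at which tightness is required are exactly the upper level sets $\{v:x_v\geq t\}$, which are intrinsic to $x$ and do not depend on how ties are broken). Your phrasing in terms of ``greedy-tight along some prefix chain'' is slightly loose for non-extreme points of $\nabla f(x')$, but once you pass to the strict-descent formulation (as the paper does) the argument is identical and the tie-handling concern evaporates.
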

\begin{proof}
Consider a point $y'\in \nabla f(x')$. According to Lemma 2.1, we know that $y'\in \arg\max_{y\in B} \langle y, x'\rangle$. Suppose that a nonincreasing order of components in $x'$ reads as $x_{i_1}' \geq x_{i_2}'\geq \cdots \geq x_{i_N}'$ . By the duality result of Proposition 3.2 in~\cite{bach2013learning}, it is known that $y'$ is an optimal solution to the above optimization problem if and only if $\sum_{j=1}^k y_{i_j}' = F(\{i_1,...,i_k\})$ whenever $x_{i_k}' > x_{i_{k+1}}'$ or $k= N$. As $x\rightharpoonup x'$, $\sum_{j=1}^k y_{i_j}' = F(\{i_1,...,i_k\})$ whenever $x_{i_k} > x_{i_{k+1}}$ or $k= N,$ and thus $y'$ is also an optimal solution for $\max_{y\in\mathcal{B}} \langle y, x\rangle$, i.e.,  $\nabla f(x') \subseteq \nabla f(x)$. Hence, $\langle \nabla f(x'), x\rangle\in \langle \nabla f(x), x\rangle = f(x),$ which concludes the proof.   
\end{proof}

%We will use $t_e(x)$ to denote an element in $\nabla f_e(x)$ and $t_{e, i}(x)$ for its $i$th component, and just use $t_e$ and $t_{e,i}$ for brevity if causing no confusion. Hence, it is easy to see that $\langle t_e, x\rangle = f_e(x)$

\section{Proof for Equation~\eqref{subgradient}}
Suppose that $y'\in\arg\max_{y \in\mathcal{B}} \langle y, x \rangle$. Then, $f(x) = \langle y', x \rangle,$ and $f(x') \geq \langle y', x' \rangle$ for all $x'\in \mathbb{R}^N$. Therefore, $f(x')  - f(x)  \geq  \langle y', x'- x \rangle,$ and thus $y'$ is a subgradient of $f$ at $x$. 

Suppose next that $y'\in \nabla f(x)$, and let $S\subseteq [N]$. If $S = [N]$, we have $f(x\pm 1_{[N]}) \geq f(x) \pm \langle y', 1_{[N]} \rangle $. As $f(x\pm 1_{[N]}) = f(x)$, so $y'([N]) = 0$. When $S\neq [[N]]$, we have 
\begin{align*}
F(S) = & f(1_S)  = \max_{y\in \mathcal{B}} \langle y, 1_S \rangle =  \max_{y\in \mathcal{B}} \langle y, x+1_S - x\rangle \geq   \max_{y\in \mathcal{B}} \langle y, x+1_S \rangle -  \max_{y\in \mathcal{B}} \langle y, x \rangle \\
 = & f(x + 1_S) - f(x)  \stackrel{1)}{\geq}  \langle y',  x+1_S -x \rangle = y'(S),
\end{align*}
where $1)$ follows from the definition of the subgradient. Hence, $y'\in \mathcal{B}$. As $y'\in \nabla f(x)$, we have $f(\mathbf{0})-f(x) \geq \langle y', -x \rangle,$ which implies $\langle y', x \rangle \geq f(x)$. Hence, $y' \in \arg\max_{y \in \mathcal{B}} \langle y, x \rangle$.
%\textcolor{red}{
%\begin{definition}
%$x$ is a critical point of $R_p(x)$, if $0\in \partial R_p(x)$. Correspondingly, $R_p(x)$ is termed as a critical value. 
%\end{definition}
%\begin{definition}
%$x$ is a critical point of $\tilde{Q}_p$, if $\exists p\in \partial Q_p(x)$ such that $P(x) p = 0$, where $P(a)p$ is to project $p$ to the tangent space of $S_p$ at $a$. Let $K_c$ is the set of all critical points with critical value $\tilde{Q}_p(x)=c$.
%\end{definition}}

\section{Proof for Theorem~\ref{eigenpairthm}}
We first prove Statement 1. Note that since
$$\nabla Q_p(x) = p \triangle_p(x),$$
$y\in \nabla Q_p(x)$ is equivalent to $y/p \in \triangle_p(x)$. 

When $p>1$, $\mathcal{S}_{p,\mu}$ is a differentiable and symmetric manifold. As $(\nabla \|x\|_{\ell_p, \mu}^p)_v = p\mu_v \phi_p(x_v)$, the tangent space of $\mathcal{S}_{p,\mu}$ at $x$ is a vector space that can be described as follows
\begin{align*}
T_x(\mathcal{S}_{p,\mu}) = \left\{\sum_{v\in [N]} c_v\chi_v,\;\text{where $\{c_v\}_{v\in [N]}$ satisfies}\; \sum_{v\in [N]} c_v \mu_v\phi_p(x_v) = 0\right \},
\end{align*}
where $\{\chi_v\}_{v\in [N]}$ is a canonical basis of $\mathbb{R}^N$. For a vector $y\in \nabla Q_p(x) $, its projection onto $T_x(\mathcal{S}_{p,\mu})$, i.e., $P_p(x)(y)$, vanishes if and only if $y\perp T_x(\mathcal{S}_{p,\mu})$. More precisely, 
 \begin{align*}
P_p(x)(y)=0 \Leftrightarrow \text{there exists some $c\in \mathbb{R}$ such that}\; y_v = c\mu_v\phi_p(x_v),  
\quad \text{for all $v\in [N]$,}
\end{align*}
which implies that $y\in \nabla Q_p(x)\cap  cU\phi_p(x)\neq \emptyset $. Therefore, $x$ is a critical point of $\tilde{Q}_p(x)$ if and only if $x$ is an eigenvector of $\triangle_p$. The corresponding eigenvalue is $\lambda = \frac{\langle x, \triangle_p x\rangle }{\langle x, U \phi_p(x) \rangle} = \frac{Q_p(x)}{\|x\|_{\ell_p,\mu}^p}= \tilde{Q}_p(x)$, i.e., the critical value of $\tilde{Q}_p$ at $x$.

When $p=1$, $\mathcal{S}_{p,\mu}$ is a piecewise linear manifold, whose tangent space at $x\in \mathcal{S}_{p,\mu}$ is a cone. According to Theorem 4.2 in~\cite{chang2016spectrum}, for some vector $y\in \nabla Q_p(x)$, its projection onto the tangent space at $x$, i.e., $P_p(x)(y)$, 
vanishes if and only if there exists some $c\in \mathbb{R}$ and $\{c_u\}$, where $|c_u|\leq 1$, such that
 \begin{align*}
y = c\left[\sum_{v: x_v\neq 0} \mu_v\text{sgn}(x_v)\chi_v + \sum_{u: x_u= 0} \mu_uc_u\chi_u\right], %\quad \text{for some $c_j$, $c_j\leq 1$},
\end{align*}
which implies $y\in cU\phi_p(x) \cap \nabla Q_p(x) \neq \emptyset$. Therefore, $x$ is a critical point of $\tilde{Q}_p(x)$ if and only if $x$ is an eigenvector of $\triangle_p$. The corresponding eigenvalue is $\lambda = \frac{\langle x, \triangle_p x\rangle }{\langle x, U \phi_p(x) \rangle} = \frac{Q_p(x)}{\|x\|_{\ell_p,\mu}^p}= \tilde{Q}_p(x)$, i.e., the critical value of $\tilde{Q}_p$ at $x$.
%So $x$ is a critical point of $\tilde{Q}_p(x)$ if and only if $\exists v\in\partial Q_p(x)$ and $P_p(x)(v)$, which is equivalent to $\exists v'\in\triangle_p(x)$ and $v'\in cU\phi_p(x)$, i.e., $x$ is a eigenvector of $\triangle_p$. The corresponding eigenvalue is $\lambda = \frac{\langle x, \triangle_p x\rangle }{\langle x, U \phi_p(x) \rangle} = \frac{Q_p(x)}{\|x\|_{\ell_p,\mu}}= \tilde{Q}_p(x)$, i.e., the critical value of $\tilde{Q}_p$ at $x$.

Now we prove statements 2 and 3. For $p>1$, $\|x\|_{\ell_p, \mu}^p$ is differentiable, so
\begin{align}\label{derivativeofR}
\nabla R_p(x)  = \frac{ \|x\|_{\ell_p, \mu}^p \nabla Q_p(x)  - pQ_p(x) U \varphi_p(x) }{\|x\|_{\ell_p, \mu}^{2p}} = \frac{p}{\|x\|_{\ell_p, \mu}^{p}}\left(\triangle_p(x) - R_p(x)U \varphi_p(x)\right).
\end{align}
Hence, $0\in \nabla R_p(x)$ is equivalent to $0\in \triangle_p(x) \cap R_p(x)U \varphi_p(x)$, i.e., $(x, R_p(x))$ is an eigenpair. However, for $p=1$, we only have (See Proposition 2.3.14~\cite{clarke1990optimization})
\begin{align*}
\nabla R_p(x) \subseteq \frac{ \|x\|_{\ell_p, \mu}^p \nabla Q_p(x)  - pQ_p(x) U \varphi_p(x) }{\|x\|_{\ell_p, \mu}^{2p}}.
\end{align*}
Therefore, $0 \in$ the set on the right hand side does not necessarily imply that $0 \in \nabla R_p(x)$.

\section{Proof for Lemma \ref{reduce}}
The high level idea behind our proof is as follows: Given a hyperedge $e$, if for some nonempty $S \subset e$ we have $w_e(S) = 0$, then $e$ can be split into two hyperedges $e_1 = S$ and $e_2 = e\backslash S$ with two modified submodular weights associated with $e_1$ and $e_2$. As the size of $e$ is a constant, one can perform this procedure for all hyperedges $e$ until all nonempty subsets $S$ of $e$ satisfy $w_e(S) > 0$.
 
Consider a hyperedge $e$ with associated weight $w_e(S_1) = 0$ for some nonempty $S_1 \subset e$.  Then, for any $S\subseteq e$, it must hold that
\begin{align*}
2w_e(S) &\geq [w_e(S_1 \cup S) + w_e(S_1 \cap S)-w_e(S_1)] + [w_e(S_1\cup \bar{S})+ w_e(S_1\cap \bar{S})-w_e(S_1)] \\
              &=  [w_e(S_1 \cup S) + w_e(S_1 \cap S)] + [w_e(S\backslash S_1 )+ w_e(S_1\backslash S)] \\
              &= [w_e(S\cap S_1) + w_e(S \backslash S_1)] + [w_e(S\cup S_1) + w_e(S_1\backslash S)] \\
              &\geq 2w_e(S).
\end{align*}
Hence, all inequalities must be strict equalities so that
\begin{align*}
w_e(S) &= w_e(S_1 \cup S) + w_e(S_1 \cap S) = w_e(S_1\backslash S)+ w_e(S\backslash S_1) \\
            &= w_e(S\cap S_1) + w_e(S \backslash S_1) = w_e(S\cup S_1) + w_e(S_1\backslash S). 
\end{align*}
As a result, $w_e(S_1\backslash S) = w_e(S\cap S_1)$ and $w_e(S\backslash S_1) = w_e(S\cup S_1)$.  This implies that the hyperedge $e$ can be partitioned into two hyperedges, $e_1=S_1$ and $e_2=e\backslash S_1,$ with weights $(\vartheta_{e_i}, w_{e_i})_{i=1,2},$ such that 
\begin{align*}
\vartheta_{e_i} = \max_{S\subset e_i} w_e(S), \quad \vartheta_{e_i}w_{e_i}(S) = \vartheta_{e}w_e(S) \quad\text{for all $S\subset e_i$}.
\end{align*}
This partition ensures that $w_{e_i}$  is a normalized, symmetric submodular function and that for any $S\subseteq e$, $\vartheta_e w_e(S) = \vartheta_{e_1}w_{e_1}(S\cap e_1) + \vartheta_{e_2}w_{e_2}(S\cap e_2)$. Therefore, for any subset $S$ of $[N]$, the volume $\text{vol}(\partial S)$ remains unchanged. 

%$\omega_{e_i}w_{e_i}(S) = \omega_{e}w_e(S)$ for any $S_i\subseteq e_i$, $i=1,2$, where $w_{e_i}$ is normalized symmetric submodular and for any $S\subseteq e$, $w_e(S) = w_{e_1}(S\cap e_1) + w_{e_2}(S\cap e_2)$. Therefore, replace $e$ with weight  
%We can iteratively partition $e$ with nontrivial zero cut till when all hyperedges contains not contrivial zero cut. During the partitioning procedure, the cut of the hypergraph will not change. The procedure will end with finite partitions as the size of an hyperedge is a positive integer. Finally, we can obtain a hypergraph whose hyperedges contains no nontrivial zero components. 

\section{Proof for Lemma~\ref{nontrivial}}
Let $x$ be an eigenvector associated with the eigenvalue $0$. Then, $Q_p(x) = \langle x, \triangle_p x \rangle=0$. Therefore, for each hyperedge $e$, we have $f_e(x)=0$. Based on Lemma~\eqref{reduce} of the main text, we may assume that the weights of $G$ have been transformed so that for any $e\in E$ and any set $S\cap e \neq \{\emptyset, e\}$, one has $w_e(S)>0$. Therefore, for any $v\in e$, $x_v$ is a constant vector. As in the transformed $G$, for each pair of vertices $v, u \in [N]$, one can find a hyperedge path from $v$ to $u$, so for all $v\in [N]$, $x_v$ is a constant vector 
%Therefore, $x$ is a constant vector. 

\section{Proof for Discrete Nodal Domain Theorems}

The outline of the proof is similar to  the one given by Tudisco and Hein~\cite{tudisco2016nodal} for graph $p-$Laplacians, with one significant change that involves careful handling of submodular hyperedges. 

We start by introducing some useful notation. For a vector $x\in \mathbb{R}^N$ and a set $A\subset [N]$, define a vector $x|_A$ as
\begin{equation*}
(x|_A)_v=\left\{ \begin{array}{lc}
x_v & v\in A  \\
0 & v\not\in A   \end{array}\right.
\end{equation*}
We also define the strong (weak) nodal space $\Xi(x)$ (respectively, $\xi(x)$) induced by $x$ as the linear span of $x|_{A_1}, x|_{A_2}, \cdots, x|_{A_m}$, where $A_i, i=1,\ldots,m$ are the strong (weak) nodal domains of $x$.  
\begin{lemma}\label{weakstrongnodal}
A weak nodal space is a subspace of a strong nodal space. Hence, the number of weak nodal domains is upper bounded by the number of strong nodal domains for both $p=1$ and $p>1$ cases.
\end{lemma}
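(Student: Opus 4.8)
The plan is to show that every vector $x|_{A}$, where $A$ is a weak nodal domain of $x$, lies in the linear span $\Xi(x)$ of the vectors $x|_{B_1}, \dots, x|_{B_m}$ over the strong nodal domains $B_1, \dots, B_m$. Once this is established, $\xi(x) \subseteq \Xi(x)$ follows immediately since $\xi(x)$ is by definition the span of the $x|_A$'s, and then $\dim \xi(x) \leq \dim \Xi(x)$; because the vectors $x|_{B_j}$ have disjoint supports (hence are linearly independent whenever nonzero) and similarly for the weak domains, the dimension count gives exactly the claimed inequality on the number of nodal domains. So the whole lemma reduces to the single containment $x|_A \in \Xi(x)$.

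First I would fix a positive weak nodal domain $A$, i.e. a maximal connected subset of $\{v : x_v \geq 0\}$ (the negative case is symmetric, and the argument is identical after negating $x$). The key structural observation is: each strong nodal domain $B$ with $B \cap A \neq \emptyset$ must have $B \subseteq A$. Indeed, a positive strong domain is a maximal connected subset of $\{v : x_v > 0\} \subseteq \{v : x_v \geq 0\}$, and $A$ is connected, contains $B$ and no vertex of $A$ is ruled out; maximality of $A$ within $\{x_v \geq 0\}$ forces $B \subseteq A$ (a strong domain cannot "escape" a weak domain because to leave it one would have to pass through a vertex with $x_v < 0$, contradicting connectivity within $\{x_v > 0\}$; here I use the notion of connectivity for submodular hypergraphs from the Definition preceding Lemma~\ref{reduce}, and the reduction in Lemma~\ref{reduce} so that hyperedges genuinely connect their vertices). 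Consequently $A$ is partitioned as $A = \left(\bigsqcup_{j : B_j \subseteq A} B_j\right) \sqcup Z_A$, where $Z_A = \{v \in A : x_v = 0\}$. On $Z_A$ the vector $x$ vanishes, so $(x|_A)_v = 0$ there; on each $B_j \subseteq A$ we have $(x|_A)_v = x_v = (x|_{B_j})_v$. Therefore $x|_A = \sum_{j : B_j \subseteq A} x|_{B_j}$, which is visibly an element of $\Xi(x)$.

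The main obstacle — and the place where the submodular-hypergraph setting genuinely differs from the graph case treated in~\cite{tudisco2016nodal} — is justifying that a positive strong nodal domain cannot straddle two positive weak nodal domains, equivalently that the connectivity relation behaves well with respect to the sign pattern of $x$. In a graph two endpoints of an edge are automatically connected; here "connected" is defined via $\mathrm{vol}(\partial S) > 0$, so I would need to invoke Lemma~\ref{reduce} to pass to a reduced hypergraph in which every hyperedge does connect all of its incident vertices, and then argue connectivity of induced subgraphs on $\{x_v > 0\}$ versus $\{x_v \geq 0\}$ using that hyperedge structure. After that reduction the combinatorial argument above goes through essentially as in the graph case. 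The final dimension bookkeeping — that disjoint supports give linear independence, so $\#\{\text{weak domains}\} = \dim \xi(x) \leq \dim \Xi(x) = \#\{\text{strong domains}\}$ — is routine and I would state it in one line. This applies uniformly to $p = 1$ and $p > 1$ since nothing in the argument uses $p$.
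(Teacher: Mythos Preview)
Your proposal is correct and follows essentially the same approach as the paper's own proof. The paper argues from the other direction---it starts with the weak domains $A_i$, removes the zero set $Z$, and observes that the connected components of each $A_i\setminus Z$ are precisely the strong nodal domains---but this is just the contrapositive of your containment observation $B\subseteq A$, and the conclusion $x|_{A}=\sum_{B_j\subseteq A} x|_{B_j}$ is identical. Your explicit invocation of Lemma~\ref{reduce} to justify that hyperedges genuinely connect their vertices is a bit more careful than the paper, which leaves that implicit, but it does not constitute a different route.
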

\begin{proof}
Suppose that the weak nodal domains of a vector $x$ equal $A_1, A_2, ..., A_m$. Hence, its weak nodal space equals to $\xi(x)=\{y| y = \sum_{i\in[m]}\alpha_i x|_{A_i}, \alpha_i\in \mathbb{R}\}$. Let $Z=\{v\in [N]: x_v=0 \}$ and set $C_i= A_i\backslash Z$ for $i\in[m]$. The subgraph in $G$ induced by the vertex set $C_i$ may contain several connected components, in which case one may further partition $C_i$ into disjoint sets $C_{i,1}, C_{i,2}, ..., C_{i,i_k}$, each of which corresponds to a connected component. It is easy to check that the strong nodal domains of $x$ exactly consist of $\{C_{i,j}\}_{1\leq i\leq m,  1\leq j\leq i_k}$. Therefore, the strong nodal space equals $\Xi(x)=\{y| y= \sum_{i,j}\alpha_{ij} x|_{C_{ij}}, \alpha_{ij}\in \mathbb{R}\}$ and contains $\xi(x)$.
\end{proof}
Our subsequent analysis of nodal domains is primarily based on the following three lemmas.

Based on the deformation theorems for locally Lipschitzian even functions on $\mathcal{S}_{\mu,1}$~(Theorem 4.8~\cite{chang2016spectrum}) and $\mathcal{S}_{\mu,p}$~($p>1$, Theorem 3.1~\cite{chang1981variational}), one can guarantee that each critical value corresponds to at least one critical point, which is described in the first lemma.
\begin{lemma}[Lemma 2.2~\cite{tudisco2016nodal}]\label{minimizingset}
For $k\geq 1$ and $p\geq 1$, let $A^*\in \mathcal{F}_k(\mathcal{S}_{p,\mu})$ be a minimizing set, i.e., a set such that
\begin{align*}
\lambda_k^{(p)} = \min_{A: \mathcal{F}_k(\mathcal{S}_{p,\mu})}\max_{x\in A} R_p(x)  = \max_{x\in A^*} R_p(x). 
\end{align*}
Then $A^*$ contains at least one critical point of $R_p(x)$ with respective to the critical value $\lambda_k^{(p)}$. 
\end{lemma}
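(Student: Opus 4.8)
The plan is to run the standard Lusternik--Schnirelman deformation (minimax) argument in the locally Lipschitz setting, arguing by contradiction. First I would record the relevant reductions: on $\mathcal{S}_{p,\mu}$ one has $\|x\|_{\ell_p,\mu}^p = 1$, so $R_p$ and $\tilde Q_p$ coincide there; moreover $\mathcal{S}_{p,\mu}$ is a compact symmetric subset of $\mathbb{R}^N$, and $R_p|_{\mathcal{S}_{p,\mu}}$ is even, locally Lipschitz and bounded below, exactly as already observed for $\tilde Q_p$. Compactness makes the Palais--Smale condition automatic, so Chang's quantitative deformation lemma applies --- in the form of Theorem~3.1 of~\cite{chang1981variational} on the smooth Banach--Finsler manifold $\mathcal{S}_{p,\mu}$ when $p>1$, and Theorem~4.8 of~\cite{chang2016spectrum} on the piecewise-linear manifold $\mathcal{S}_{1,\mu}$ when $p=1$.

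Now suppose, for contradiction, that $A^*$ contains no critical point of $R_p$ with critical value $\lambda := \lambda_k^{(p)}$. Let $K_\lambda \subseteq \mathcal{S}_{p,\mu}$ be the (compact) set of all critical points at level $\lambda$; by hypothesis $K_\lambda \cap A^* = \emptyset$, and since both sets are compact and disjoint, I can fix an open set $\mathcal{N} \supseteq K_\lambda$ with $\mathcal{N} \cap A^* = \emptyset$. The deformation lemma then supplies an $\epsilon > 0$ and an odd continuous map $\eta: \mathcal{S}_{p,\mu} \to \mathcal{S}_{p,\mu}$ such that $\eta\big(\{x : R_p(x) \le \lambda + \epsilon\} \setminus \mathcal{N}\big) \subseteq \{x : R_p(x) \le \lambda - \epsilon\}$. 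Since $A^*$ is a minimizing set, $\max_{x \in A^*} R_p(x) = \lambda$, so $A^* \subseteq \{R_p \le \lambda\} \setminus \mathcal{N} \subseteq \{R_p \le \lambda + \epsilon\} \setminus \mathcal{N}$, whence $\eta(A^*) \subseteq \{R_p \le \lambda - \epsilon\}$.

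It then remains to check that $\eta(A^*)$ is an admissible competitor, i.e. $\eta(A^*) \in \mathcal{F}_k(\mathcal{S}_{p,\mu})$. It is closed (continuous image of a compact set), symmetric ($\eta$ odd and $A^* = -A^*$), and contained in $\mathcal{S}_{p,\mu}$. For the genus I would invoke the standard fact that an odd continuous map cannot decrease the Krasnoselski genus: if $\gamma(\eta(A^*)) = m$ with witness $g: \eta(A^*) \to \mathbb{R}^m \setminus \{0\}$ odd continuous, then $g \circ \eta|_{A^*}$ is an odd continuous map of $A^*$ into $\mathbb{R}^m \setminus \{0\}$, so $\gamma(A^*) \le m$; hence $\gamma(\eta(A^*)) \ge \gamma(A^*) \ge k$. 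Thus $\eta(A^*) \in \mathcal{F}_k(\mathcal{S}_{p,\mu})$, and therefore $\lambda_k^{(p)} \le \max_{x \in \eta(A^*)} R_p(x) \le \lambda - \epsilon < \lambda = \lambda_k^{(p)}$, a contradiction. Consequently $A^*$ must contain a critical point of $R_p$ with critical value $\lambda_k^{(p)}$.

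The main obstacle I anticipate is not combinatorial but lies in citing and justifying the deformation lemma at the right level of generality. For $p>1$ the set $\mathcal{S}_{p,\mu}$ is a $C^1$ Finsler manifold and one can build a pseudo-gradient flow for the locally Lipschitz functional $R_p$ via the Clarke subdifferential; for $p=1$ the ``sphere'' $\mathcal{S}_{1,\mu}$ is only piecewise linear, so one must use Chang's version of the deformation and critical-point theory on piecewise-linear manifolds, and here the evenness of $R_p$ together with the symmetry of $\mathcal{S}_{1,\mu}$ is what guarantees that $\eta$ can be chosen odd, which is exactly what makes the genus estimate go through. Everything specific to submodular hypergraphs --- that $Q_p$ is well defined, locally Lipschitz, even and bounded below --- has already been established earlier, so no further hypergraph-specific input is needed for this lemma.
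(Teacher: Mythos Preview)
Your proposal is correct and follows exactly the route the paper indicates. The paper does not actually prove this lemma itself --- it imports it as Lemma~2.2 of \cite{tudisco2016nodal}, with a one-line remark that it follows from the deformation theorems for locally Lipschitz even functionals on $\mathcal{S}_{p,\mu}$ (Theorem~3.1 of \cite{chang1981variational} for $p>1$ and Theorem~4.8 of \cite{chang2016spectrum} for $p=1$); you have simply written out the standard contradiction-via-deformation argument those theorems enable, citing the same two results for the two regimes of $p$.
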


\begin{lemma}[Lemma 3.7~\cite{tudisco2016nodal}]\label{ineqfornodal}
Let $p> 1,\,a,\,b,\,x,\,y\in \mathbb{R},$ so that $x, y\geq 0$. Then 
\begin{align*}
|ax+by|^p \leq (|a|^px+|b|^py)(x+y)^{p-1},
\end{align*}
where the equality if and only if $xy=0$ or $a=b$.
\end{lemma}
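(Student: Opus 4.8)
The statement to prove is Lemma~3.7 (\cite{tudisco2016nodal}), i.e. the elementary inequality
\[
|ax+by|^p \le (|a|^p x + |b|^p y)(x+y)^{p-1}
\]
for $p>1$, $x,y\ge 0$, with equality iff $xy=0$ or $a=b$.

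\textbf{Plan.} The natural approach is to reduce to convexity of $t\mapsto |t|^p$ on $\reals$. First I would dispose of the degenerate cases: if $x=0$ or $y=0$ the inequality is an equality (both sides collapse to $|b|^p y^p$ or $|a|^p x^p$ respectively, using $p-1\ge 0$ so $0^{p-1}=0$ when the relevant variable vanishes; the case $x=y=0$ is trivial). So assume $x,y>0$ and set $s=x+y>0$. Writing $\lambda = x/s$ and $1-\lambda = y/s$, we have $\lambda\in(0,1)$, and
\[
\frac{|ax+by|^p}{(x+y)^{p-1}} = s\,\bigl|\lambda a + (1-\lambda) b\bigr|^p .
\]
Thus the claimed inequality is equivalent to
\[
\bigl|\lambda a + (1-\lambda)b\bigr|^p \le \lambda |a|^p + (1-\lambda)|b|^p,
\]
which is exactly Jensen's inequality for the strictly convex function $\phi(t)=|t|^p$ (strictly convex for $p>1$) applied to the convex combination of the points $a,b$ with weights $\lambda, 1-\lambda$.

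\textbf{Equality analysis.} For the equality characterization, I would invoke strict convexity of $\phi(t)=|t|^p$ when $p>1$: Jensen's inequality is an equality iff $a=b$ (since the weights $\lambda,1-\lambda$ are both strictly positive once $x,y>0$). Combining with the degenerate cases already handled, equality in the original inequality holds iff $xy=0$ or $a=b$. One small technical point worth checking explicitly is the strict convexity of $|t|^p$ for $p>1$ at $t=0$ and across sign changes — this follows since $|t|^p$ is $C^1$ with derivative $p|t|^{p-1}\mathrm{sgn}(t)$ strictly increasing on all of $\reals$, hence strictly convex.

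\textbf{Main obstacle.} Honestly there is no serious obstacle here; the lemma is a one-line consequence of Jensen once the homogenization $\lambda=x/(x+y)$ is spotted and the boundary cases $x=0$, $y=0$ are separated out. The only place requiring a little care is bookkeeping in the equality case — making sure that when exactly one of $x,y$ is zero we genuinely get equality regardless of $a,b$ (so the ``iff'' correctly reads $xy=0$ \emph{or} $a=b$, not ``and''), and that when both $x,y>0$ strict convexity forces $a=b$. I would write the degenerate cases first, then the main case via Jensen, then read off the equality conditions from strict convexity.
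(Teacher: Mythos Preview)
Your argument is correct: the homogenization $\lambda = x/(x+y)$ reduces the inequality exactly to Jensen's inequality for the strictly convex function $t\mapsto |t|^p$, and your treatment of the boundary cases $x=0$ or $y=0$ and of the equality characterization via strict convexity is clean and complete.

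As for comparison with the paper: the paper does not actually give a proof of this lemma. It is quoted verbatim as Lemma~3.7 of \cite{tudisco2016nodal} and used as a black box in the proof of Lemma~\ref{nodalanalysis}. So there is no ``paper's own proof'' to compare against; your Jensen-based argument is precisely the standard derivation one would supply if asked to fill in the citation.
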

 \begin{lemma}\label{nodalanalysis}
 Let $p\geq 1$ and let $(x,\lambda)$ be an eigenpair of $\triangle_p$. Let $\Xi(x)$ ($\xi(x)$) be the strong (weak) nodal space induced by $x$. Then, for any vector $x'\in\Xi(x)$  ($\xi(x)$), it holds that $Q_p(x')\leq \lambda\|x'\|_{\ell_p,\mu}^p$, and the inequality is tight for $p=1$. 
 \end{lemma}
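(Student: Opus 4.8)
The plan is to adapt Tudisco and Hein's argument for graph $p$-Laplacians, reducing the claim to a per-hyperedge estimate. By Lemma~\ref{weakstrongnodal} every weak nodal space is contained in the corresponding strong nodal space, so it suffices to treat $x'\in\Xi(x)$; write $x'=\sum_{i=1}^m\alpha_i\,x|_{A_i}$, where $A_1,\dots,A_m$ are the strong nodal domains of $x$. Then $x$ has constant sign on each $A_i$, vanishes off $\bigcup_i A_i$, and the supports of the $x|_{A_i}$ are disjoint, so $\|x'\|_{\ell_p,\mu}^p=\sum_i|\alpha_i|^p\|x|_{A_i}\|_{\ell_p,\mu}^p$. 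Since $(x,\lambda)$ is an eigenpair, by Definitions~\ref{lapdef} and~\ref{eigendef} there are $y_e\in\nabla f_e(x)$ with $\sum_{e\in E}\vartheta_e f_e(x)^{p-1}y_e=\lambda U\varphi_p(x)$ (for $p=1$, $\sum_e\vartheta_e y_e=\lambda U w$ for some $w$ with $w_v=\sgn(x_v)$ whenever $x_v\neq 0$). Testing this identity against $x|_{A_i}$ and using $\langle y_e,x\rangle=f_e(x)$ gives $\lambda\|x|_{A_i}\|_{\ell_p,\mu}^p=\sum_e\vartheta_e f_e(x)^{p-1}s_i^e$ with $s_i^e:=\langle y_e,x|_{A_i}\rangle$, while $\sum_i s_i^e=f_e(x)$. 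Hence $\lambda\|x'\|_{\ell_p,\mu}^p=\sum_e\vartheta_e f_e(x)^{p-1}\sum_i|\alpha_i|^p s_i^e$, and it is enough to prove, for every $e$, that $f_e(x')^p\le f_e(x)^{p-1}\sum_i|\alpha_i|^p s_i^e$ (with equality when $p=1$).

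The crux — and the point where the submodular structure needs care — is to show $s_i^e=f_e(x|_{A_i})\ge 0$ together with $\sum_i f_e(x|_{A_i})=f_e(x)$. For a graph each edge meets at most two nodal domains, of opposite sign, and this is transparent; for a submodular hyperedge it is not, and we recover it via the reduction of Lemma~\ref{reduce}. Splitting $e$ at its zero cuts (as in the proof of Lemma~\ref{reduce}) into atoms $e_1,\dots,e_r$ yields $\vartheta_e f_e=\sum_k\vartheta_{e_k}f_{e_k}$ by linearity of the Lov\'asz extension, hence $\nabla f_e(x)=\sum_k(\vartheta_{e_k}/\vartheta_e)\nabla f_{e_k}(x)$. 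The key structural fact is that each atom $e_k$ meets at most one positive and at most one negative strong nodal domain of $x$: if $u,v\in e_k$ lay in distinct same-sign strong nodal domains, a separating set $S$ with $\text{vol}(\partial S)=0$ would force $w_{e_k}(S\cap e_k)=0$ for the nonempty proper subset $S\cap e_k\subsetneq e_k$, contradicting maximality of the splitting. Therefore, on the coordinates of each atom $e_k$ one has $x|_{A_i}\rightharpoonup x$ for every $i$, so Lemma~\ref{dotproductpre} gives $\langle y,x|_{A_i}\rangle=f_{e_k}(x|_{A_i})$ for all $y\in\nabla f_{e_k}(x)$; summing over atoms yields $s_i^e=f_e(x|_{A_i})\ge 0$, and summing the resulting identities over $i$ (with $x=\sum_i x|_{A_i}$) yields $\sum_i f_e(x|_{A_i})=f_e(x)$.

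Granting these, the per-hyperedge inequality is pure convexity: $f_e$ is a symmetric seminorm (convex, positively homogeneous, and even by Lemma~\ref{basicprop}), so $f_e(x')\le\sum_i|\alpha_i|\,f_e(x|_{A_i})$, and the multi-term form of Lemma~\ref{ineqfornodal} (a H\"older inequality, applicable since each $f_e(x|_{A_i})\ge 0$) bounds the $p$-th power by $\big(\sum_i|\alpha_i|^p f_e(x|_{A_i})\big)\big(\sum_i f_e(x|_{A_i})\big)^{p-1}=\big(\sum_i|\alpha_i|^p s_i^e\big)f_e(x)^{p-1}$, as needed. Summing over $e$ and inserting the identity from the first paragraph gives $Q_p(x')\le\lambda\|x'\|_{\ell_p,\mu}^p$. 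For $p=1$ the H\"older step is vacuous, and choosing a common tie-breaking in the greedy representations of the $f_{e_k}$ makes the subadditivity estimate an equality, so $Q_1(x')=\lambda\|x'\|_{\ell_1,\mu}$.

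The main obstacle is the middle step: unlike a graph edge, a single submodular hyperedge may straddle many nodal domains and its Lov\'asz extension is not a sum of pairwise differences, so one must pass through the atomic decomposition of Lemma~\ref{reduce} and the order-compatibility statement of Lemma~\ref{dotproductpre} to recover both the additivity $\sum_i f_e(x|_{A_i})=f_e(x)$ and the positivity $s_i^e\ge 0$ on which the convexity argument rests. (The weak nodal domain case is subsumed by the strong case via Lemma~\ref{weakstrongnodal}, and for $p=1$ the subdifferential computations should be performed on the relevant piecewise-linear tangent cones.)
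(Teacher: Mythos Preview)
Your proof follows the paper's strategy closely: use the reduction of Lemma~\ref{reduce}, invoke Lemma~\ref{dotproductpre} to identify $\langle\nabla f_e(x),x|_{A_i}\rangle$ with $f_e(x|_{A_i})$, and finish with a H\"older-type inequality. The paper differs in one organizational respect. It applies Lemma~\ref{reduce} as a global ``without loss of generality'' on $G$, so that every hyperedge meets at most two strong nodal domains of opposite sign; it then splits $E$ into hyperedges meeting $\le 1$ domain (trivial) and exactly $2$ domains, and on the latter applies the two-term Lemma~\ref{ineqfornodal} directly. You instead keep the original $G$, decompose each $f_e$ into atoms only as a tool to prove $s_i^e=f_e(x|_{A_i})$ and $\sum_i f_e(x|_{A_i})=f_e(x)$ for the \emph{original} $f_e$, and then close with a multi-term H\"older bound on that original $f_e$. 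Your route is arguably a touch more robust: the paper's global WLOG preserves $Q_1$ but not $Q_p$ for $p>1$, so switching hypergraphs is not entirely innocent, whereas your per-edge decomposition sidesteps this. Two small points of imprecision: your justification that an atom meets at most one positive and one negative domain is garbled --- there is no ``separating set with $\text{vol}(\partial S)=0$''; rather, any two same-sign vertices sharing an atom $e_k$ are connected in the induced subgraph because $e_k$ itself contributes positive weight to every cut separating them (this is exactly the paper's reasoning). And for $p=1$ the tightness is not a matter of tie-breaking: the subadditivity $f_e(x')\le\sum_i|\alpha_i|f_e(x|_{A_i})$ is in fact an equality, which follows from the atom-level identity just as in the paper's equation~\eqref{expansion}.
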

\begin{proof}
Due to Lemma~\ref{weakstrongnodal}, we only need to prove the claimed result for the strong nodal space. Suppose $A_1, A_2, ..., A_m$ are the strong nodal domains of $x$. Consider a vector in the strong nodal space of $x$, say $y=\sum_{i} \alpha_i x|_{A_i}$, where $\alpha_i\in\mathbb{R}$. The following observation is important when generalizing result pertaining to graphs to the case of submodular hypergraphs. As we assume that the submodular hypergraph $G$ is connected, we may without loss of generality assume that $G$ is a hypergraph obtained from the transform described in Lemma~\ref{reduce}. Then, based on the definition of nodal domains, each hyperedge $e$ intersects at most two strong nodal domains with different signs. Hence, $x|_{A_i\cap e} \rightharpoonup x|_e$ for any $i\in[m], e\in E$ and $x|_{A_i\cap e} \rightharpoonup \text{sgn}(\alpha_i) y|_{e}$ for any $i\in[m], \alpha_i\neq 0, e\in E$. From Lemma~\ref{dotproductpre}, and for any $c\in\mathbb{R}, i\in [m]$, one has
\begin{align}
\langle \nabla f_e(x), cx|_{A_i} \rangle  = c\langle \nabla f_e(x|_e), x|_{A_i\cap e}  \rangle = cf_e(x|_{A_i\cap e}) = cf_e(x|_{A_i}),  \label{expansion1}
\end{align}
and 
\begin{align}
f_e(y) = &\langle f_e(y), y \rangle =\left\langle \nabla f_e(y),  \sum_{i}\alpha_i x|_{A_i} \right\rangle= \sum_{i}\alpha_i \left\langle \nabla f_e(y),  x|_{A_i\cap e} \right\rangle \nonumber \\
 = &\sum_{i}\alpha_i\left\langle  \text{sgn}(\alpha_i) \nabla f_e(\text{sgn}(\alpha_i) y|_e),  x|_{A_i\cap e} \right\rangle = \sum_{i}|\alpha_i|  f_e(x|_{A_i\cap e})\nonumber \\
  = &  \sum_{i}|\alpha_i|  f_e(x|_{A_i}). \label{expansion}
%\sum_{j}\alpha_j \left\langle f_e( \sum_{i}\alpha_i x|_{A_i}),  x|_{A_j} \right\rangle =  \\
%\sum_{i}\langle t_e(\text{sign}(\alpha_i) x|_{A_i}), y \rangle = \sum_{i} |\alpha_i|\langle t_e(x|_{A_i}), x|_{A_i} \rangle = \sum_{i} |\alpha_i|f_e(x|_{A_i})
\end{align}
We partition the hyperedges into two sets according to how many nodal domains they intersect, 
\begin{align*}
\mathcal{I}_1 &= \{e:| \{i | e \cap A_i \neq \emptyset\} |\leq 1 \}, \\
\mathcal{I}_2 &= \{e: |\{i | e \cap A_i \neq \emptyset\}| = 2 \}.
\end{align*}
Then, we have
\begin{align*}
Q_p(y) &= \sum_{e} \vartheta_e(f_e(y))^p \stackrel{1)}{=} \sum_{e}  \vartheta_e\left(\sum_{i}|\alpha_i|f_e(x|_{A_i})\right)^p   \\
&= \sum_{e\in \mathcal{I}_1}  \vartheta_e\sum_i |\alpha_i|^p (f_e(x|_{A_i}))^p +  \sum_{e\in \mathcal{I}_2}  \vartheta_e\left(\sum_{i}|\alpha_i|f_e(x|_{A_i})\right)^p\\
& \stackrel{2)}{=} \sum_{e\in \mathcal{I}_1}  \vartheta_e\sum_i |\alpha_i|^p f_e(x|_{A_i}) (f_e(x))^{p-1} +  \sum_{e\in \mathcal{I}_2}  \vartheta_e\left(\sum_{i}|\alpha_i|f_e(x|_{A_i})\right)^p,
\end{align*}
where $1)$ follows from \eqref{expansion} and $2)$ is due to the fact that $f_e(x) = f_e(x|_{A_i})$ for those $i$ such that $A_i\cap e\neq \emptyset$, and $f_e(x)=0$ for those $i$ such that $A_i\cap e = \emptyset$.
Moreover, we have 
\begin{align*}
\lambda\|y\|_{\ell_p,\mu}^p  &= \sum_{i} |\alpha_i|^p \lambda \|x|_{A_i}\|_{\ell_p,\mu}^p \stackrel{1)}{=}  \sum_{i} |\alpha_i|^p \langle x|_{A_i}, \triangle_p x \rangle \\
&= \sum_{i} |\alpha_i|^p \sum_{e}\vartheta_e \langle \nabla f_e(x), x|_{A_i}\rangle  (f_e(x))^{p-1} \\
& \stackrel{2)}{=}  \sum_{i} |\alpha_i|^p \sum_{e}\vartheta_e f_e(x|_{A_i}) (f_e(x))^{p-1} 
\end{align*}
where $1)$ is due to 
\begin{align*}
\lambda \|x|_{A_i}\|_{\ell_p,\mu}^p= \langle x|_{A_i},\lambda\varphi_p(x|_{A_i}) \rangle =\langle x|_{A_i},\lambda\varphi_p(x) \rangle=\langle x|_{A_i}, \triangle_p x \rangle, 
\end{align*}
and $2)$ follows from \eqref{expansion1}.
Therefore, 
\begin{align*}
Q_p(y) - \lambda\|y\|_{\ell_p,\mu}^p = \sum_{e\in \mathcal{I}_2} \vartheta_e\left[\left(\sum_{i}|\alpha_i|f_e(x|_{A_i})\right)^p -  \sum_{i} |\alpha_i|^p  f_e(x|_{A_i}) (f_e(x))^{p-1} \right]  =\sum_{e\in \mathcal{I}_2} \vartheta_e \tilde{f}_e(y),
\end{align*}
where 
 \begin{align*}
 \tilde{f}_e(y) = \left\{\left[|\alpha_{i_1}|f_e(x|_{A_{i_1}}) \right.\right.& \left.+ |\alpha_{i_2}|f_e(x|_{A_{i_2}})\right]^p - \\
& \left.\left[|\alpha_{i_1}|^p  f_e(x|_{A_{i_1}})+ |\alpha_{i_2}|^p  f_e(x|_{A_{i_2}})\right]\left[f_e(x|_{A_{i_1}}) + f_e(x|_{A_{i_2}})\right]^{p-1}\right\}
 \end{align*}
and $A_{i_1}$ and $A_{i_2}$ are the two nodal domains intersecting $e$. Invoking Lemma~\ref{ineqfornodal} proves the claimed result. 
\end{proof}

Now, we are ready to prove Theorem~\ref{nodal}. The proof of the strong nodal domain result for the graph $p-$Laplacian in~\cite{tudisco2016nodal} can be easily extended to our case via Lemma~\ref{nodalanalysis}, while the proof of the weak nodal domain result requires significant modifications. 

\textbf{Case 1: Strong nodal domains.} Suppose that $\lambda_k^{(p)}$ has multiplicity $r$ and associated eigenvector $x$. Let $\Xi(x)$ be the strong nodal space induced by $x$. If $x$ supports $m$ strong nodal domains,  then $\gamma(\Xi(x)\cap \mathcal{S}_{p,\mu})\leq m$. For any $x'\in \Xi(x)\cap \mathcal{S}_{p,\mu}$, we have $R_p(x') \leq R_p(x)=\lambda_k^{(p)}$ due to Lemma~\ref{nodalanalysis}. Therefore, 
\begin{align*}
\lambda_m^{(p)}= \min_{A \in \mathcal{F}_m(\mathcal{S}_{p,\mu})}\max_{x'\in A} R_p(x') \leq \max_{x'\in  \Xi(x)\cap \mathcal{S}_{p,\mu}} R_p(x') \leq \lambda_k^{(p)},
\end{align*}
which implies $m\leq k+r-1$, where $r$ is the mulplicity of $\lambda_k$. Given this upper bound of the number of strong nodal domains and Lemma~\ref{weakstrongnodal}, one may natural bound for the number of weak nodal domains. However, for $p>1$ case, one may derive a tighter bound.

\textbf{Case 2: Weak nodal domains (for $p>1$).} Suppose that $\lambda_k^{(p)}$ has multiplicity $r$ and associated eigenvector $x$. Suppose that $A_1, A_2, ..., A_m$ are the weak nodal domains of $x$.  According to Lemma~\ref{weakstrongnodal}, we know that $m$ is upper bounded by the number of strong nodal domains which we know from Case 1 to be upper bounded by $k+r-1$. 

Let $\xi(x)$ be the weak nodal space  induced by $x$. We use proof by contradiction and assume that $\text{dim($\xi(x)$)}> k$. Consider $\xi(x)'$ satisfying $\xi(x)=\text{Span}\{x\}\oplus \xi(x)'$.Then, we have $\gamma(\xi(x)'\cap \text{Span}\{x\})\geq k$. Again, from Lemma~\ref{nodalanalysis}, it holds
\begin{align*}
\lambda_k^{(p)}\leq \min_{A \in \mathcal{F}_k( \mathcal{S}_{p,\mu})}\max_{x'\in A} \tilde{Q}_p(x') \leq \max_{x'\in \xi(x)'\cap \mathcal{S}_{p,\mu}} \tilde{Q}_p(x') \leq \tilde{Q}_p(x)=\lambda_k^{(p)},
\end{align*}
which implies that $ \xi(x)'\cap \mathcal{S}_{p,\mu}$ is a minimizing set in $\mathcal{F}_k( \mathcal{S}_{p,\mu})$. From Lemma~\ref{minimizingset}, it follows that there exists a critical point $y\in  \xi(x)'\cap \mathcal{S}_{p,\mu}$ such that $\tilde{Q}_p(y)=\lambda_k^{(p)}$. Therefore, $y$ is also an eigenvector of $\triangle_p$ with respect to the eigenvalue $\lambda_k^{(k)}$. Suppose that $y=\sum_{i}\alpha_i x|_{A_i}$. Later, we will show the contradiction by proving that $y \in \text{Span}\{x\}$, i.e., $\alpha_i =\alpha_j$ for all $i,j\in[m]$. For any two overlapping weak nodal domains, say $A_1$ and $A_2$ with $A_1\cap A_2\neq \emptyset$, consider the set of hyperedges that lie in $A_1\cup A_2$, and denote this set by $E^*$. Without loss of generality, assume that $A_1$ is positive while $A_2$ is negative, as no hyperedge can intersect two weak nodal domains with the same sign. Suppose that there exists a hyperedge $e\in E^*$ such that $e \cap (A_1 \backslash  A_2)$ and $e \cap (A_2 \backslash  A_1)$ are both nonempty. Then, both $f_e(x|_{A_{1}})$ and $f_e(x|_{A_{2}})$ are positive. According to the proof of Lemma~\ref{nodalanalysis}, as $e$ intersects two strong nodal domains $A_1 \backslash  A_2$ and $A_2\backslash A_1$, in order to have $R_p(y) =\lambda_k^{(p)}$ one must also have $\tilde{f}_e(y)=0,$ which further implies $\alpha_1=\alpha_2$.
%for all hyperedges $e\in II$, i.e., those hyperedges intersect with two nodal domains, we have $\tilde{f}_e(y)=0$ whose condition to achieve equality will be used later. 
%Consider the condition to achieve equality in Lemma~\ref{ineqfornodal}, so in this case we have $\alpha_1=\alpha_2$.
If there is no such hyperedge, then all hyperedges in $E^*$ lie either in $A_1$ or $A_2$. Note that for all $u\in A_1\cap A_2,\;\alpha_1x_u =0$, so that for $p>1$, we have 
\begin{align*}
0&=\lambda_k^{(p)}\mu_u\langle \mathbf{1}_u, \varphi_p({\alpha_1 x})\rangle=\langle \mathbf{1}_u, \triangle_p(\alpha_1 x)\rangle \\
&= \sum_{e: e\in E^*}\vartheta_e \langle \nabla f_e(\alpha_1 x), \mathbf{1}_u\rangle  (f_e(\alpha_1 x))^{p-1} \\
&\stackrel{1)}{=} \sum_{e: e\in A_1}\vartheta_e \langle \nabla f_e(\alpha_1 x|_{A_1}), \mathbf{1}_u\rangle  (f_e(\alpha_1 x|_{A_1}))^{p-1} +  \sum_{e: e\in A_2}\vartheta_e \langle \nabla f_e(\alpha_1 x|_{A_2}), \mathbf{1}_u\rangle  (f_e(\alpha_1 x|_{A_2}))^{p-1}, 
\end{align*}
where $1)$ is due to the fact that for all $e\subseteq A_1\cap A_2,$ one has $f_e(\alpha_1x)=0$. Similarly, as $y_u=0$ and $y$ is an eigenvector of $\triangle_p$, for $p>1$, we have 
\begin{align*}
0&=\lambda_k^{(p)}\mu_u\langle \mathbf{1}_u, \varphi_p({y})\rangle=\langle \mathbf{1}_u, \triangle_p(y)\rangle \\
&= \sum_{e: e\in E^*}\vartheta_e \langle \nabla f_e(y), \mathbf{1}_u\rangle  (f_e(y))^{p-1} \\
&\stackrel{1)}{=}  \sum_{e: e\subseteq A_1} \vartheta_e \langle \nabla f_e(\alpha_1 x|_{A_1}), \mathbf{1}_u\rangle (f_e(\alpha_1 x|_{A_1}))^{p-1} +  \sum_{e: e\subseteq A_2} \vartheta_e \langle \nabla f_e(\alpha_2 x|_{A_2}), \mathbf{1}_u\rangle (f_e(\alpha_2 x|_{A_2}))^{p-1},
\end{align*}
where $1)$ once again is due to for all $e\subseteq A_1\cap A_2,$ one has $f_e(\alpha_1x)=0$.
Subtracting the above two equations leads to
\begin{align*}
0&=\sum_{e: e\subseteq A_2} \vartheta_e\left[ \langle \nabla f_e(\alpha_1 x|_{A_2}), \mathbf{1}_u\rangle (f_e(\alpha_1 x|_{A_2}))^{p-1}- \langle \nabla f_e(\alpha_2 x|_{A_2}), \mathbf{1}_u\rangle (f_e(\alpha_2 x|_{A_2}))^{p-1}\right] \\
&=(\varphi_p(\alpha_1)- \varphi_p(\alpha_2)) \sum_{e: e\subseteq A_2} \vartheta_e \langle f_e(x|_{A_2\cap e}), \mathbf{1}_u\rangle (f_e(x|_{A_2}))^{p-1} \\
&\stackrel{1)}{=}(\varphi_p(\alpha_1)- \varphi_p(\alpha_2)) \sum_{e: e\subseteq A_2} \vartheta_e  f_e(\mathbf{1}_u) (f_e(x|_{A_2}))^{p-1},
\end{align*}
where $1)$ is due to $\mathbf{1}_u\rightharpoonup x|_{A_2\cap e}$. Based of the definition of a weak nodal domain, there exists at least one hyperedge $e$ intersecting both $A_1\cap A_2$ and $A_2\backslash A_1$. Therefore, for any $u\in A_1\cap A_2$ such that $\sum_{e: e\subseteq A_2} \vartheta_e  f_e(\mathbf{1}_u) f_e(x|_{A_2})^{p-1}>0$, one has $\varphi_p(\alpha_1)= \varphi_p(\alpha_2)$ and consequently $\alpha_1=\alpha_2$. Since the hypergraph is connected, it follows that $\alpha_1 = \alpha_2 =...=\alpha_n=\alpha$, which implies that $y=\alpha x$. This is a contradiction and hence when $p>1$, the number of weak nodal domains is $\leq k$. 

Note that example 10~\cite{chang2017nodal} shows that even for graphs, the number of weak nodal domains of an eigenvector of $\lambda_k^{(1)}$ can be greater than $k$. 

\subsection{Proof for Lemma~\ref{atleasttwo}}
Consider a nonconstant eigenvector $x$ and its corresponding eigenvalue $\lambda$. According to Lemma~\ref{nontrivial}, if $G$ is connected, then $\lambda\neq 0$. Moreover, when $p>1$, $U\varphi_p(x)$ is a vector and not a set. Therefore, $\langle \mathbf{1}, U\varphi_p(x)\rangle \in \langle \mathbf{1}, \triangle_p(x) \rangle/\lambda = \sum_{e} \vartheta_e \langle \mathbf{1},  \nabla f_e(x) \rangle f_e(x)^{p-1}/\lambda = 0$. This implies that $x$ contains both positive and negative components, which correspond to at least two weak (strong) nodal domains. Combining this result with that of Theorem~\ref{nodal} shows that the eigenvector corresponding to the eigenvalue $\lambda_2$ contains exactly two weak (strong) nodal domains.

For $p=1$, we only have $\langle \mathbf{1}, U\varphi_p(x)\rangle \ni \langle \mathbf{1}, \triangle_p(x) \rangle/\lambda=0$, which may allow that all components of $x$ are either nonnegative or nonpositive. An example of a graph with a single weak (strong) nodal domain may be found in Example 11 of~\cite{chang2017nodal}. 

\section{Proof for Lemma~\ref{medianproperty}}
According to the proof of Lemma~\ref{atleasttwo}, if $p>1$, we have $\langle \mathbf{1}, U\phi_p(x)\rangle = 0$ and thus $\mu_p^+(x) = \mu_p^-(x)$. Moreover, we have
 \begin{align}\label{diffc} 
\frac{\partial \|x - c\mathbf{1}\|_{\ell_p, \mu}^p}{\partial c}|_{c=0} = p \sum_{v\in [N]} \mu_v \sgn(x_v ) |x_v |^{(p-1)} = \mu_p^+(x) - \mu_p^-(x) = 0.
\end{align}
Hence, $c \in \arg\min_{c\in \mathbb{R}} \|x - c\mathbf{1}\|_{\ell_p, \mu}^p$.

If $p=1$, $0 \in \langle \mathbf{1}, U\phi_p(x)\rangle $, which implies $|\mu_1^+(x) - \mu_1^-(x)| \leq \mu^0(x)$. 
%\textcolor{red}{let us consistently use $0 \in A$ not the inverted contained in symbol...}
%\textcolor{red}{stopped here}
Furthermore, for any $c\geq 0$ we have
\begin{align*}
&\|x - c\mathbf{1}\|_{\ell_1, \mu} = \sum_{v: x_v > c} \mu_v(x_v - c) + \sum_{v:  0 \leq x_v < c} \mu_v(c- x_v) + \sum_{v:  x_v < 0} \mu_v(c- x_v) \\
= &\sum_{v: x_v > 0} \mu_v x_v - \sum_{v: x_v < 0} \mu_v x_v + 2\sum_{v:  0 < x_v < c} \mu_v(c- x_v) + c(\mu^0(x) +  \mu_1^-(x) -  \mu_1^+(x)) \\
\geq & \sum_{v: x_v > 0} \mu_v x_v - \sum_{v: x_v < 0} \mu_v x_v  = \|x\|_{\ell_1, \mu}.
\end{align*}
Therefore, $0\in \arg\min_{c\in \mathbb{R}} \|x - c\mathbf{1}\|_{\ell_1, \mu}$.

\section{Proof of Theorem~\ref{cheeger}}
Let us first prove the second part of the theorem. Suppose that $\{S_1^*, S_2^*, ..., S_k^*\}\in P_k$ is one $k$-way partition such that $h_k = \max_{i\in[k]} c(S_i^*)$. Let $\mathcal{A}=\text{Span}(\mathbf{1}_{S_1^*},\mathbf{1}_{S_2^*},...,\mathbf{1}_{S_k^*})$. Choose a vector $x\in \mathcal{A}\cap \mathcal{S}_{p,\mu}$ and suppose that it can be written as $x=\sum_{i\in[k]}\alpha_i \mathbf{1}_{S_i^*}$.
\begin{lemma}~\label{normalization}
If $x=\sum_{i\in[k]}\alpha_i \mathbf{1}_{S_i^*}$ and $x\in  \mathcal{S}_{p,\mu}$, then 
\begin{align*}
\sum_{i\in [k]} |\alpha_i|^p \text{vol}(S_i^*) = 1. 
\end{align*}
\end{lemma}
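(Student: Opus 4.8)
The plan is to exploit the fact that the sets $S_1^*,S_2^*,\ldots,S_k^*$ forming the partition are pairwise disjoint. Since $x=\sum_{i\in[k]}\alpha_i\mathbf{1}_{S_i^*}$, for each vertex $v$ either $v\in S_i^*$ for exactly one index $i$, in which case $x_v=\alpha_i$, or $v$ lies in no $S_i^*$, in which case $x_v=0$ and the vertex contributes nothing to any weighted norm. Thus the computation of $\|x\|_{\ell_p,\mu}^p$ decouples over the blocks of the partition.

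Concretely, I would write
\begin{align*}
\|x\|_{\ell_p,\mu}^p=\sum_{v\in V}\mu_v|x_v|^p=\sum_{i\in[k]}\sum_{v\in S_i^*}\mu_v|\alpha_i|^p=\sum_{i\in[k]}|\alpha_i|^p\sum_{v\in S_i^*}\mu_v=\sum_{i\in[k]}|\alpha_i|^p\,\text{vol}(S_i^*),
\end{align*}
where the second equality uses disjointness (each $v$ appears in at most one inner sum and the omitted vertices have $x_v=0$) and the last equality is the definition $\text{vol}(S_i^*)=\sum_{v\in S_i^*}\mu_v$. The hypothesis $x\in\mathcal{S}_{p,\mu}$ means precisely $\|x\|_{\ell_p,\mu}=1$, hence $\|x\|_{\ell_p,\mu}^p=1$, and combining this with the displayed identity gives $\sum_{i\in[k]}|\alpha_i|^p\,\text{vol}(S_i^*)=1$, as claimed.

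There is no real obstacle here; the only point requiring a moment's care is justifying the interchange/splitting of the sum over $V$ into a sum over the blocks $S_i^*$, which is immediate from pairwise disjointness together with the observation that vertices outside $\bigcup_i S_i^*$ carry zero weight in the norm. This lemma will then be used inside the proof of Theorem~\ref{cheeger} to normalize the coefficients $\alpha_i$ when estimating $R_p$ on the subspace $\mathcal{A}=\text{Span}(\mathbf{1}_{S_1^*},\ldots,\mathbf{1}_{S_k^*})$.
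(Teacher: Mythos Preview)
Your proof is correct and follows essentially the same approach as the paper: both arguments use the pairwise disjointness $S_i^*\cap S_j^*=\emptyset$ to split $\|x\|_{\ell_p,\mu}^p$ as $\sum_{i\in[k]}\|\alpha_i\mathbf{1}_{S_i^*}\|_{\ell_p,\mu}^p=\sum_{i\in[k]}|\alpha_i|^p\,\text{vol}(S_i^*)$, and then invoke $x\in\mathcal{S}_{p,\mu}$ to conclude this sum equals $1$. Your version merely spells out the vertex-level summation a bit more explicitly.
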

 \begin{proof}
 As $S_{i}^* \cap S_{j}^* = \emptyset$,  we have $1 = \|x\|_{\ell_p, \mu}^p  = \sum_{i\in [k]} \|\alpha_i\mathbf{1}_{S_{i}^*}\|_{\ell_p, \mu}^p = \sum_{i\in [k]} |\alpha_i|^p \text{vol}(S_i^*). $
 \end{proof}
\subsection{Arbitrary Submodular Weights}
First, consider the following chain of inequalities that leads to an upper bound for $\tilde{Q}_p(x)$:
\begin{align}
\tilde{Q}_p(x) & = \sum_{e} \vartheta_e (f_e(x))^p = \sum_{e} \vartheta_e  \langle \nabla f_e(x), x\rangle ^p = \sum_{e} \vartheta_e  \left[\sum_{i\in [k]}\alpha_i \langle \nabla f_e(x), \mathbf{1}_{S_i^*}\rangle \right]^p \nonumber\\
&\stackrel{1)}{\leq} \sum_{e} \vartheta_e \left(\min\{|e|, k\}\right)^{p-1} \sum_{i\in[k]}|\alpha_i|^p|\langle  \nabla f_e(x), \mathbf{1}_{S_i^*}\rangle|^p  \label{cheegercare1}\\
&\stackrel{2)}{\leq}  \left(\min\{\max |e|, k\}\right)^{p-1} \sum_{i\in[k]} |\alpha_i|^p\sum_e \vartheta_e (f_e(\mathbf{1}_{S_i^*}))^p\nonumber \\
&\stackrel{3)}{\leq} \left(\min\{\max |e|, k\}\right)^{p-1} \sum_{i\in[k]} |\alpha_i|^p \text{vol}(\partial S_i^*) \nonumber\\
& \stackrel{4)}{\leq} \left(\min\{\max |e|, k\}\right)^{p-1} \frac{\sum_{i\in[k]} |\alpha_i|^p \text{vol}(\partial S_i^*)}{\sum_{i\in [k]} |\alpha_i|^p \text{vol}(S_i^*)} \nonumber \\
& \leq \left(\min\{\max |e|, k\}\right)^{p-1}h_k. \nonumber
\end{align}
Here, 1) follows from $|\{i\in [k]|\langle  \nabla f_e(x), \mathbf{1}_{S_i^*\cap e}\rangle>0\}|\leq \min\{|e|, k\}$ and H\"{o}lder's inequality; 2) follows from the definition of $f_e$; 3) is a consequence of the inequality $\sum_{e}\vartheta_e (f_e(\mathbf{1}_{S_i^*}))^p \leq \sum_e \vartheta_e w_e(S_i)^p \leq  \sum_e \vartheta_e w_e(S_i)=\text{vol}(\partial S_i^*)$; and 4) follows from Lemma~\ref{normalization}. 

Before establishing the lower bound, we first prove the following lemma.
\begin{lemma}\label{cheegeracc}
For any vector $x \in \mathbb{R}_{\geq 0}^N \backslash \{\mathbf{0}\}$ and $p\geq 1$, there exists some $\theta \geq 0$ such that $\Theta(x,\theta) = \{u: x(u)>\theta\}$ satisfies
\begin{align*}
R_p(x) \geq \left(\frac{1}{\tau}\right)^{p-1} \left(\frac{c(S)}{p}\right)^p,
\end{align*}
where $\tau=\max_{v\in [N]}\frac{d_v}{\mu_v}$.
\end{lemma}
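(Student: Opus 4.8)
The plan is to prove the inequality via the standard "coarea"/layer-cake argument used for Cheeger inequalities, adapted to the submodular setting. First I would normalize: since $R_p$ is invariant under positive scaling of $x$, I may assume $\|x\|_{\ell_p,\mu} = 1$, so that $Z_{p,\mu}(x) = 1$ (using $x \geq 0$, the minimizing constant is $0$) and $R_p(x) = Q_p(x) = \sum_{e} \vartheta_e f_e(x)^p$. The goal then becomes producing a threshold $\theta$ with $\mathrm{vol}(\partial \Theta(x,\theta))^p \leq (p^p \tau^{p-1}) Q_p(x) \cdot \min\{\mathrm{vol}(\Theta), \mathrm{vol}(V\setminus\Theta)\}^p$, after taking $p$-th roots and rearranging. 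I would also reduce to the case where the support structure is convenient: by rescaling or shifting I can assume without loss of generality that $x_{\min}$ has small volume mass on the low side, e.g. that $\mathrm{vol}(\{v : x_v > 0\}) \le \mathrm{vol}(V)/2$ or handle the level sets so that $\min\{\mathrm{vol}(\Theta), \mathrm{vol}(\bar\Theta)\} = \mathrm{vol}(\Theta)$ for all relevant thresholds; this is the usual "truncation at the median" trick, and here it is justified precisely because $0 \in \arg\min_c \|x - c\mathbf 1\|_{\ell_p,\mu}$ (Lemma~\ref{medianproperty}), which is the hypothesis built into $Z_{p,\mu}(x) = Z_{p,\mu}(x,0)$.

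The core estimate is a lower bound on $f_e(x)$ in terms of an integral of cut weights of level sets. For each hyperedge $e$, writing $S_\theta = \Theta(x,\theta) = \{v : x_v > \theta\}$, I claim
\[
f_e(x) \;=\; \int_0^\infty w_e(S_\theta)\, d\theta,
\]
which is exactly the integral representation of the Lov\'asz extension (equation~\eqref{lovazexp} rewritten as an integral over level sets); this uses $w_e(S) = w_e(S \cap e)$ and $w_e(\emptyset)=0$. Then, with $D(\theta) := \mathrm{vol}(\partial S_\theta) = \sum_e \vartheta_e w_e(S_\theta)$, summing over $e$ with weights $\vartheta_e$ gives $\sum_e \vartheta_e f_e(x) = \int_0^\infty D(\theta)\, d\theta$. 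To get the $p$-th power version I would instead bound $\sum_e \vartheta_e f_e(x)^p$ from below. Here the trick is: for $p \ge 1$, $f_e(x)^p = \left(\int_0^\infty w_e(S_\theta)\,d\theta\right)^p$, and I want a pointwise-in-$\theta$ comparison. Following the graph case, introduce the "$p$-th power coordinate": let $y_v = x_v^p$ (monotone since $x \ge 0$), so level sets of $y$ at height $s$ equal level sets of $x$ at height $s^{1/p}$. Then by a Hölder / Jensen argument one shows $p\, x_v^{p-1}\,$ appears as the density, and after the change of variables $s = \theta^p$,
\[
\sum_e \vartheta_e f_e(x)^p \;\ge\; \frac{1}{(\,p\,)^{?}}\cdots
\]
— more precisely I expect the clean route is: apply Cauchy–Schwarz/Hölder to $\int w_e(S_\theta)\,d\theta = \int w_e(S_\theta)^{1/p} w_e(S_\theta)^{1-1/p}\,d\theta$ is not quite it; rather, bound $D(\theta)$ against $\left(\sum_e \vartheta_e w_e(S_\theta)^p\right)^{1/p} \cdot (\#\text{active }e)^{1-1/p}$ is also not tight. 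The approach I would actually take is the one from Amghibech/Bühler–Hein: work with the function $g = \varphi_p(x) = x^{p-1}$ componentwise (so $\|g\|$ relates to $\|x\|_{\ell_p,\mu}$ via exponent $p/(p-1)$), integrate $\int_0^\infty \mathrm{vol}(\partial\{g > t\})\,dt = \sum_e \vartheta_e f_e(g)$, relate $f_e(g)$ to $f_e(x)^{p-1}$-type quantities, and invoke convexity/submodularity to compare $f_e(x^{p-1})$ with $f_e(x)$; a Hölder inequality $|a - b|^p \le \tau^{p-1} \cdot (\text{something})$ with the factor $\tau = \max_v d_v/\mu_v$ entering when one converts $\sum_v \mu_v |x_v|^p$-type normalizations into $\sum_v d_v |x_v|^p$-type quantities appearing in the edge sums.

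Concretely the endgame: by an averaging argument over $\theta$ (pigeonhole on the integral identity), there exists $\theta^\star$ with
\[
\frac{D(\theta^\star)}{\min\{\mathrm{vol}(S_{\theta^\star}), \mathrm{vol}(\bar S_{\theta^\star})\}} \;\le\; \frac{\int_0^\infty D(\theta)\,d\theta}{\int_0^\infty \min\{\mathrm{vol}(S_\theta),\mathrm{vol}(\bar S_\theta)\}\,d\theta},
\]
the denominator on the right being $\ge$ (a constant times) $\sum_v \mu_v x_v = $ (after the truncation normalization) comparable to $1$, while the numerator $\int_0^\infty D(\theta)\,d\theta = \sum_e \vartheta_e f_e(x)$ is controlled by $Q_p(x)^{1/p}$ via Hölder together with the factor $\tau^{(p-1)/p}$ coming from $\vartheta_e \le$ (degree-weighted) terms; raising to the $p$-th power and collecting the constant $p$ (which enters from the $x^{p-1}$ change of variables, $d(x^p) = p x^{p-1} dx$) yields $R_p(x) \ge \tau^{-(p-1)} (c(S_{\theta^\star})/p)^p$. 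The main obstacle, and the step I would be most careful about, is getting the $p$-th-power coarea comparison right with the exact constants $p$ and $\tau^{p-1}$ — i.e. correctly tracking how the exponent $p$ and the degree/vertex-weight ratio $\tau$ thread through the change of variables $s = \theta^p$ and the Hölder step that converts $\sum_e \vartheta_e f_e(x)$ (an $\ell_1$-type quantity) into $Q_p(x)^{1/p}$ (the $\ell_p$-type quantity), since a naive bound loses the sharp power of $p$ and one must instead differentiate/integrate the $p$-th power directly as in the graph $p$-Laplacian Cheeger proof.
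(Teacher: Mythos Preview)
Your overall strategy (layer-cake / coarea on level sets, combined with a H\"older step, in the spirit of Amghibech and B\"uhler--Hein) is the same as the paper's. But the execution has a genuine gap at the central estimate, and a couple of red herrings.

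\medskip
\textbf{The gap.} The clean route is to apply the layer-cake not to $x$ but to the coordinatewise power $x^p$. From the integral form of the Lov\'asz extension one gets
\[
Q_1(x^p)=\int_0^\infty \mathrm{vol}\bigl(\partial\{v:x_v^p>s\}\bigr)\,ds,\qquad \|x\|_{\ell_p,\mu}^p=\|x^p\|_{\ell_1,\mu}=\int_0^\infty \mathrm{vol}\bigl(\{v:x_v^p>s\}\bigr)\,ds,
\]
so the averaging argument yields $Q_1(x^p)/\|x\|_{\ell_p,\mu}^p \ge \inf_\theta \mathrm{vol}(\partial S_\theta)/\mathrm{vol}(S_\theta)$. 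Your displayed ``endgame'' applies the layer-cake to $x$ itself, which produces $Q_1(x)/\|x\|_{\ell_1,\mu}$ and does not connect to $R_p$. The other half of the argument is an \emph{upper} bound on $Q_1(x^p)$: expand $f_e(x^p)$ as a telescoping sum over the ordered coordinates of $e$, use the elementary inequality $a^p-b^p\le p(a-b)a^{p-1}$ termwise, and then apply H\"older (with exponents $p$ and $p/(p-1)$) to obtain
\[
Q_1(x^p)\;\le\;p\,\tau^{(p-1)/p}\,Q_p(x)^{1/p}\,\|x\|_{\ell_p,\mu}^{p-1}.
\]
This is where both the factor $p$ and the factor $\tau^{(p-1)/p}$ enter with the sharp exponents; the $\tau$ appears because $\sum_e \vartheta_e \sum_{k}(x_{i_k(e)})^p \le \sum_v d_v x_v^p \le \tau\|x\|_{\ell_p,\mu}^p$. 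You never isolate this inequality; your own candidate routes (H\"older on $w_e(S_\theta)^{1/p}w_e(S_\theta)^{1-1/p}$, or counting ``active $e$'') are, as you note, not tight, and the suggestion to work with $g=x^{p-1}$ has the wrong exponent.

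\medskip
\textbf{Red herrings.} The lemma concerns $R_p(x)=Q_p(x)/\|x\|_{\ell_p,\mu}^p$, not $\mathcal{R}_p$, and $x\ge 0$ is \emph{given}; there is no median-truncation step and Lemma~\ref{medianproperty} plays no role here. Likewise, the denominator in the layer-cake is simply $\mathrm{vol}(S_\theta)$ (not the minimum of the two sides); replacing the ratio by $c(S_\theta)$ is handled downstream when the lemma is applied to vectors supported on a single nodal domain.
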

\begin{proof}
Let us consider the case $p>1$ first. For a vector $x$, we use $(x)^p$ to denote the coordinatewise $p$-th power operation. Furthermore, let $q = \frac{p}{p-1}$.  

For a vector $x'\in \mathbb{R}^N$, we write the Lov{\'a}sz extension $f_e(x')$ by only including arguments that lie in $e$, i.e.,  
\begin{align*}
f_e(x') = \sum_{k=1}^{|e|-1} w_e(S^{k,e})(x_{i_k(e)}'-x_{i_{k+1}(e)}')
\end{align*}
where $e = \{i_k(e)\}_{1\leq k\leq |e|}$, $x_{i_1(e)}'\geq x_{i_2(e)}' \geq \cdots \geq x_{i_{|e|}(e)}'$ and $S^{k,e} = \{i_j(e)\}_{1\leq j \leq k}$. Then,
\begin{align}
& \, Q_1(x^p) = \sum_{e} \vartheta_ef_e(x^p) = \sum_{e}\vartheta_e \sum_{k=1}^{|e|-1} w_e(S^{k,e})(x_{i_k(e)}^p-x_{i_{k+1}(e)}^p) \nonumber\\
\stackrel{1)}{\leq} &  \sum_{e}\vartheta_e \sum_{k=1}^{|e|-1} w_e(S^{k,e})p(x_{i_k(e)} -x_{i_{k+1}(e)}) \left(x_{i_k(e)}\right)^{p -1}  \label{cheegercare2}\\
=& \, p\sum_{e} \sum_{k=1}^{|e|-1} \vartheta_e^{1/p} w_e(S^{k,e})(x_{i_k(e)}-x_{i_{k+1}(e)})\, \vartheta_e^{1/q}\left(x_{i_k(e)}\right)^{p/q} \nonumber\\
\stackrel{2)}{\leq} & \, p \left\{\sum_{e}\sum_{k=1}^{|e|-1} \vartheta_e\left[w_e(S^{k,e})(x_{i_k(e)}-x_{i_{k+1}(e)})\right]^{p} \right\}^{\frac{1}{p}}\left\{\sum_{e}\vartheta_e\sum_{k=1}^{|e|-1}\left(x_{i_k(e)}\right)^{p} \right\}^{\frac{1}{q}} \nonumber\\
\leq& \, p (Q_p(x))^{\frac{1}{p}}\left(\sum_vd_vx_v^{p} \right)^{\frac{1}{q}}\nonumber \\ 
\leq & \,  p\tau^{1-\frac{1}{p}} \left(Q_{p}(x)\right)^{\frac{1}{p}}\|x\|_{\ell_p,\mu}^{p-1}, \nonumber
\end{align} 
where 1) follows from the fact that $a \geq b \geq 0$ implies $a^p - b^p \leq p(a - b)a^{p-1}$ and 2) is a consequence of H\"{o}lder's inequality. 
As when $p=1$, we naturally have $Q_1(x)\leq Q_1(x)$. For any $p\geq 1$, we have
\begin{align}
\frac{Q_1(x^p)}{\|x\|_{\ell_p,\mu}^{p}}\leq \, p\tau^{1-\frac{1}{p}} \frac{\left(Q_{p}(x)\right)^{\frac{1}{p}}}{\|x\|_{\ell_p,\mu}}. \label{cheegercare3}
\end{align}
Moreover, by representing Lov$\acute{\text{a}}$sz extension by its integral form~\cite{bach2013learning}, we obtain
\begin{align*}
Q_1(x^p) = \sum_e \vartheta_e\int_{0}^{+\infty} w_e(\{v: x_v^p> \theta\}\cap e) \, d\theta = \int_{0}^{+\infty} \vartheta_e\sum_e w_e(\{v: x_v^p> \theta\}\cap e) d\theta.
\end{align*} 
Then,
\begin{align*}
\frac{Q_1(x^p)}{\|x\|_{\ell_p,\mu}^{p}} &=\frac{ \int_{0}^{+\infty}\vartheta_e \sum_e w_e(\{v: x_v^p> \theta\}\cap e) d\theta}{ \int_{0}^{+\infty} \mu(\{v: x_v^p> \theta\}) d\theta}  \geq \inf_{\theta\geq 0} \frac{\sum_e \vartheta_ew_e(\{v: x_v^p> \theta\}\cap e)}{  \mu(\{v: x_v^p> \theta\})} \\
&=  \inf_{\theta\geq 0} \frac{\text{vol}(\partial \{v: x_v^p> \theta\})}{  \text{vol}( \{v: x_v^p> \theta\})} = \inf_{\theta\geq 0} c(\{v: x_v^p>\theta\})
\end{align*} 
Therefore, the minimizer $\theta^*$ induces a set $\Theta^*=\{v: x_v^p> \theta^*\}\subseteq A$, for which the following inequality holds
\begin{align*}
R_p(x) = \frac{Q_{p}(x)}{\|x\|_{\ell_p,\mu}^p} \geq \left(\frac{Q_1(x^p)}{\|x\|_{\ell_p,\mu}^{p}} \right)^p \frac{1}{p^p \tau^{p-1}} = \left(\frac{1}{\tau}\right)^{p-1} \left(\frac{c(\Theta^*)}{p}\right)^p.
\end{align*}
This proves Lemma~\ref{cheegeracc}. 

Next, we turn our attention to the first inequality of Theorem~\ref{cheeger}. Suppose $\lambda_k^{(p)}$ has a corresponding eigenvector $x$ that induces the strong nodal domains $A_1, A_2, ..., A_m$. According to Lemma~\ref{nodalanalysis}, we know that $\lambda_k^{(p)}\geq R_p(\mathbf{1}_{A_i})$. Moreover, due to Lemma~\ref{cheegeracc}, for any $i\in[m]$, there exists a $B_i\subseteq A_i$ such that
\begin{align*}
R_p(\mathbf{1}_{A_i}) \geq  \left(\frac{1}{\tau}\right)^{p-1} \left(\frac{c(B_i)}{p}\right)^p.
\end{align*}
Therefore,
\begin{align*}
&\lambda_k^{(p)}\geq \max_{i\in[m]}R_p(\mathbf{1}_{A_i}) \geq \max_{i\in[m]}\left(\frac{1}{\tau}\right)^{p-1} \left(\frac{c(B_i)}{p}\right)^p \\
&\geq \min_{(B_1,B_2,...,B_m)\in P_m}\max_{i\in[m]}\left(\frac{1}{\tau}\right)^{p-1} \left(\frac{c(B_i)}{p}\right)^p  \geq \left(\frac{1}{\tau}\right)^{p-1} \left(\frac{h_m}{p}\right)^p. 
\end{align*}
\end{proof}

\subsection{Homogeneous Weights}
We can use a similar approach to prove the previous result for homogeneous weights, i.e., weights such that $w_e(S)=1$ for all $S\in 2^e\backslash \{\emptyset, e\}$. Only several steps have to be changed. 

First, the inequality~\eqref{cheegercare1} may be tightened. Again, consider the partition $\{S_1^*, S_2^*, ..., S_k^*\}\in P_k$ such that $h_k = \max_{i\in[k]} c(S_i^*)$. For a given hyperedge $e$, choose a pair of vertices $(u^*,v^*) \in \arg\max_{u,v\in e} |x_{u} - x_{v}|^p $. If both $u, v \in  S_i^*$, then $f_e(x) = 0$. If not, assume that $u\in S_i^*$ and $v\in S_j^*$. Then,
\begin{align*}
(f_e(x))^p & =  |x_{u^*} - x_{v^*}|^p \leq 2^{p-1}(|x_{u^*}|^p + |x_{v^*}|^p) \\
 &\leq 2^{p-1} (|\alpha_i|^pf_e(1_{S_i^*})^p +  |\alpha_j|^pf_e(1_{S_j^*})^p) = 2^{p-1}\sum_{i\in[k]} |\alpha_i|^pf_e(\mathbf{1}_{S_i^*})^p. 
%\left[\sum_{i\in[k]}\alpha_i\langle \nabla f_e(x), \mathbf{1}_{S_i^*}\rangle\right]^p \leq 2^{p-1}\sum_{i\in[k]} |\alpha_i|^pf_e(\mathbf{1}_{S_i^*})^p.
\end{align*}
%As in homogeneous case, there exists a $t_e(x)\in\nabla f_e(x)$ such that $|\{i\in [k]|\langle t_e(x), \mathbf{1}_{S_i^*}\rangle>0\}|\leq 2$, i.e, when $t_e(x)(i_1(e)) = - t_e(x)(i_{|e|}(e)) =1, t_e(x)(i_k(e)) = 0$ where $1<k<|e|$. 
Therefore, in the homogeneous case, we have  
\begin{align*}
R_p(x) \leq 2^{p-1}h_k.
\end{align*}
Second, we will use the following lemma to prove the lower bound:
\begin{lemma}[\cite{amghibech2003eigenvalues}]
If $a, b \geq 0$, $p > 1$, then 
\begin{align*}
a^p - b^p \leq \frac{p}{2^{1-\frac{1}{p}}}(a-b) \left(a^p+b^p\right)^{1-\frac{1}{p}}
\end{align*}
\end{lemma}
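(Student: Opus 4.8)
The plan is to establish the inequality in the case $a\ge b\ge 0$, which is the one relevant to the application. First I would dispose of the degenerate situations: if $a=b$ both sides are zero, and if $b=0$ the claim reduces to $a^{p}\le\frac{p}{2^{1-1/p}}a^{p}$, which holds because $2^{1-1/p}\le p$ for every $p\ge 1$ (taking logarithms, $(1-1/p)\ln 2\le\ln p$ holds with equality at $p=1$ and the right-hand side has the larger derivative on $[1,\infty)$).

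For the main case $a>b>0$ the crucial step is a change of variables that symmetrizes the two sides. Put $x=a^{p}$ and $y=b^{p}$; the inequality becomes $x-y\le\frac{p}{2^{1-1/p}}\bigl(x^{1/p}-y^{1/p}\bigr)(x+y)^{1-1/p}$. Dividing through by $x+y>0$ and setting $u=\frac{x}{x+y}\in[\tfrac12,1)$ reduces everything to the one-variable inequality
\[
2u-1\;\le\;\frac{p}{2^{1-1/p}}\,\psi(u),\qquad \psi(u):=u^{1/p}-(1-u)^{1/p},
\]
where $\psi(\tfrac12)=0$, so both sides vanish at $u=\tfrac12$ and it suffices to compare their growth for $u\in(\tfrac12,1)$.

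The heart of the argument is a lower bound on $\psi'$. We have $\psi'(u)=\frac1p\bigl(u^{1/p-1}+(1-u)^{1/p-1}\bigr)$, and since $s\mapsto s^{1/p-1}$ is convex on $(0,\infty)$ when $p>1$ (its second derivative $(\tfrac1p-1)(\tfrac1p-2)s^{1/p-3}$ is a product of two negative factors, hence positive), Jensen's inequality gives $\tfrac12\bigl(u^{1/p-1}+(1-u)^{1/p-1}\bigr)\ge\bigl(\tfrac12\bigr)^{1/p-1}=2^{1-1/p}$. Hence $\psi'(u)\ge\frac{2}{p}\,2^{1-1/p}$ on $(\tfrac12,1)$, and integrating this bound from $\tfrac12$ to $u$ — legitimate since the integrand is integrable up to $u=1$, as $1/p-1>-1$ — yields $\psi(u)\ge\frac{2^{1-1/p}}{p}(2u-1)$, which is exactly the claimed one-variable inequality. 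Unwinding the substitution $u\mapsto(x,y)\mapsto(a,b)$ then recovers $a^{p}-b^{p}\le\frac{p}{2^{1-1/p}}(a-b)(a^{p}+b^{p})^{1-1/p}$.

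I do not expect a genuine obstacle here: the only points requiring a little care are applying Jensen's inequality to the correct function on the correct domain, the elementary estimate $2^{1-1/p}\le p$, and the integrability of $\psi'$ near $u=1$, all of which are routine one-variable calculus. The single real idea is the substitution $x=a^{p},\ y=b^{p}$ followed by normalization, which turns an awkward-looking two-variable estimate into a one-line consequence of the convexity of $s\mapsto s^{1/p-1}$.
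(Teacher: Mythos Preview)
Your argument is correct. The substitution $x=a^{p}$, $y=b^{p}$, $u=x/(x+y)$ reduces the claim to $2u-1\le \frac{p}{2^{1-1/p}}\psi(u)$ on $[\tfrac12,1)$, and the convexity of $s\mapsto s^{1/p-1}$ (for $p>1$) gives $\psi'(u)\ge \tfrac{2}{p}\cdot 2^{1-1/p}$, which after integration from $\tfrac12$ yields exactly the desired bound. The boundary cases $a=b$ and $b=0$ are handled correctly, and your derivative comparison for $2^{1-1/p}\le p$ is fine. One minor remark: as you implicitly observe, the inequality as literally stated fails when $a<b$ (both sides are negative and the sign reverses), so the hypothesis $a\ge b$ is needed; this is indeed the only case used in the paper.

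As for comparison with the paper: there is nothing to compare against. The paper does not prove this lemma; it is quoted verbatim from Amghibech~\cite{amghibech2003eigenvalues} and invoked as a black box to sharpen the constant in the Cheeger inequality for homogeneous hypergraphs. Your self-contained proof via the one-variable convexity reduction is a perfectly clean way to supply the missing argument.
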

So the inequality~\eqref{cheegercare2} may be tightened as
\begin{align*}
 f_e(x^p) = x_{i_0(e)}^p-x_{i_{|e|}(e)}^p \leq  \frac{p}{2^{1-\frac{1}{p}}}(x_{i_0(e)} -x_{i_{|e|}(e)}) \left(x_{i_0(e)}^p+x_{i_{|e|}(e)}^p\right)^{1-\frac{1}{p}}.
\end{align*}
With these two modifications, we can rewrite inequality~\eqref{cheegercare3} as
\begin{align*}
\frac{Q_1(x^p)}{\|x\|_{\ell_p,\mu}^{p}}\leq \frac{p}{2^{1-\frac{1}{p}}}\tau^{1-\frac{1}{p}} \frac{\left(Q_{p}(x)\right)^{\frac{1}{p}}}{\|x\|_{\ell_p,\mu}},
\end{align*}
which leads to
\begin{align*}
\lambda_k^{(p)} \geq \left(\frac{2}{\tau}\right)^{p-1} \left(\frac{h_m}{p}\right)^p.
\end{align*}

\section{Proof of Theorem~\ref{algoform}}

First, we prove that $\lambda_2^{(p)} \geq \inf_x \mathcal{R}_p(x)$. Suppose that $x'$ is a nonconstant eigenvector corresponding to $\lambda_2^{(p)}$. If $\lambda_2^{(p)}=0$. If $\lambda_2^{(p)}=0$, then $\langle x', \triangle_p(x')\rangle = \langle x', \lambda_2^{(p)} U x'\rangle  = 0$, which implies that $Q_p(x') = 0$. Moreover, as $x'$ is nonconstant,  $\min_{c\in \mathbb{R}}\|x' - c\mathbf{1}\|_{\ell_p,\mu}^p>0$, and thus $\mathcal{R}_p(x') = 0 \leq  \lambda_2^{(p)}$. This proves the claim of the theorem for the case that $\lambda_2^{(p)}=0$. Next, suppose that $\lambda_2^{(p)} \not = 0$. First, we observe that Lemma~\ref{medianproperty} implies $0 \in \nabla_c Z_{p,\mu}(x', c)|_{c=0} $. As $Z_{p,\mu}(x', c)$ is convex in $c$, $c=0$ is a minimizer of $Z_{p,\mu}(x', c)$, $i.e., Z_{p,\mu}(x', 0) = Z_{p,\mu}(x')$. Moreover, $\lambda_2^{(p)} = R_p(x') = \frac{Q_p(x')}{Z_{p,\mu}(x', 0)} = \frac{Q_p(x')}{Z_{p,\mu}(x')} =\mathcal{R}_p(x')$. Therefore, $\lambda_2^{(p)}\geq \inf_x \mathcal{R}_p(x)$. 

Second, we prove that $ \inf_x \mathcal{R}_p(x) \geq \lambda_2^{(p)}$ . First, we focus on the case $p>1$. For any $t_1\in\mathbb{R}\backslash \{0\}$ and $t_2 \in\mathbb{R}$, it is easy to show that $\mathcal{R}_p(t_1 x + t_2\mathbf{1}) = \mathcal{R}_p(x)$. Therefore, to characterize the infimum of $\mathcal{R}_p(x)$, it suffices to consider $x\in \mathcal{S}_{p,\mu}\cap \mathcal{A}$, where $\mathcal{A} =  \{x\in \mathbb{R}^N| 0\in \arg\min_c Z_{p,\mu}(x,c)\}$. For $p>1$, $Z_{p,\mu}(x,c)$ is differentiably convex in $c$. By using formula~\eqref{diffc} once again, we know that $\mathcal{A}=\{x\in \mathbb{R}^N| \mu_p^+(x) - \mu_p^-(x) =0\}$. Furthermore, $\mathcal{A}$ is closed, since the functions $\mu_p^+, \mu_p^-$ are continuous. By recalling that $\mathcal{S}_{p,\mu}$ is a compact space we know that there exists a point $x_*\in \mathcal{S}_{p,\mu}\cap \mathcal{A}$ such that $x_* \in \arg\inf_x \mathcal{R}_p(x)$. 

Consider next the subspace $\mathcal{A}' = \{t_1x_*+ t_2 \mathbf{1}: t_1, t_2\in \mathbb{R}\}$. As $x_*$ being nonconstant reduces to $x_*\neq c\textbf{1}$ for any scalar $c\in\mathbb{R}$, we have $\gamma(\mathcal{A}\cap\mathcal{S}_p) = 2$. According to the definition of $\lambda_2^{(p)}$~\eqref{defcrit}, it follows that 
\begin{align*}
\lambda_2^{(p)}& \leq \max_{x\in \mathcal{A}' \cap \mathcal{S}_{p,\mu}} Q_p(x) =  \max_{t_1, t_2\in \mathbb{R}} Q_p(\frac{t_1x_* + t_2 \mathbf{1}}{\|t_1x_* + t_2 \mathbf{1}\|_{\ell_p,\mu}}) =  \max_{t_1, t_2\in \mathbb{R}} \frac{Q_p(t_1x_*) }{\|t_1x_* + t_2 \mathbf{1}\|_{\ell_p,\mu}^p}  \\
&= \max_{t_1\in \mathbb{R}}  \frac{Q_p(t_1x_*)}{Z_{p,\mu}(t_1x_*)} = \mathcal{R}_{p}(x_*).
\end{align*}
For any $a,b\in \mathbb{R}$, we can write $Q_p(ax + b\mathbf{1}) = |a|^pQ_p(x)$ and  $Z_{p,\mu}(ax + b\mathbf{1}) = |a|^pZ_{p,\mu}(x)$. Combining these expressions with $\lambda_2^{(p)} \geq \inf_x \mathcal{R}_p(x)$ shows that $\lambda_2^{(p)}= \inf_x \mathcal{R}_p(x)$. This settles the case $p>1$.

Next, we turn our attention to proving that $ \min_x \mathcal{R}_1(x) = h_2$ for $p=1$. This result, combined with the inequality $h_2\geq \lambda_2^{(1)}$ from Theorem~\ref{cheeger} proves that $ \inf_x \mathcal{R}_1(x) = h_2 =\lambda_2^{(1)}$. 

Recall that the $2$-way Cheeger constant can be written as $ \min_{S \subset [N]}\frac{|\partial S|}{\min\{\text{vol}(S), \text{vol}([N]\backslash S)\}}$.
This expression, along with the fact that $\inf_x \mathcal{R}_1(x) = h_2$ (which is a special case of Theorem 1 in~\cite{buhler2013constrained}), allows one to reduce the proof to showing that the Lov$\acute{\text{a}}$sz extensions of $\text{vol}(\partial S)$ and $\min\{\text{vol}(S), \text{vol}([N]\backslash S)\}$ are equal to $Q_1(x)$ and $Z_{1,\mu}(x)$, respectively. The claim regarding $Q_1$ naturally follows from the Definition~\ref{lapdef}. We hence only need to show that the Lov$\acute{\text{a}}$sz extension of $\min\{\text{vol}(S), \text{vol}([N]\backslash S)\}$ equals $Z_{1,\mu}(x)$.

For a given $x\in \mathbb{R}^N$, suppose that $x_{i_1} \geq x_{i_2} \geq \cdots \geq x_{i_N} $. Then, the Lov$\acute{\text{a}}$sz extension of $\min\{\text{vol}(S), \text{vol}([N]\backslash S)\}$ can be written as
\begin{align}\label{balanceset}
\sum_{k = 1}^{N} \min\{{\sum_{j=1}^k \mu_{i_j}, \sum_{j=k+1}^N \mu_{i_j} \}}\, (x_{i_j} - x_{i_{j+1}}).
\end{align}
Let $k^*$ be equal to $\min\left\{k\in\{1, 2,...,N\}: \sum_{j=1}^k \mu_{i_j} \geq \sum_{j=k+1}^N \mu_{i_j}\right\}$. In this case, \eqref{balanceset} is equivalent to 
\begin{align*}
\sum_{k = 1}^{k^*-1} \mu_{i_k}(x_{i_k} - x_{i_{k^*}}) + \sum_{k = k^*+1}^{N} \mu_{i_k}(x_{i_{k^*}} - x_{i_{k}})  = \|x - x_{i_{k^*}} \mathbf{1}\|_{\ell_1, \mu} = Z_{1,\mu}(x), 
\end{align*}
which establishes the claimed result. 

\section{Proof for Theorem~\ref{thresholding}}
For a vector $x \in \mathbb{R}^N$, define two vector $x^+, x^- \in \mathbb{R}^N$ according to $(x^+)_v = \max \{x_v, 0\}$ and $(x^-)_v = \max \{-x_v, 0\}$. Hence, $x = x^+ - x^-$ and $x^+, -x^- \rightharpoonup  x$. Then,
\begin{align*}
Q_p(x) &= \sum_{e} \vartheta_e f_e(x)^p = \sum_{e} \vartheta_e [\langle \nabla f_e(x), x^+ \rangle + \langle \nabla f_e(x), -x^- \rangle ]^p \stackrel{1)}{=} \sum_{e}  \vartheta_e  [f_e(x^+) + f_e(-x^-)]^p \\
&\stackrel{2)}{=}  \sum_{e}  \vartheta_e  [f_e(x^+)^p + f_e(x^-)^p] = Q_p(x^+) + Q_p(x^-),
\end{align*}
where in $1)$ we used Lemma~\ref{dotproductpre}, and in $2)$ we used the fact that $f_e(x) = f_e(-x)$ and $(a+b)^p \geq a^p + b^p$ for $a,b \geq 0,\, p\geq 1$. 
Moreover, as $Z_{p,\mu}(x) = \|x\|_{\ell_p, \mu}^p = \|x^+\|_{\ell_p, \mu}^p + \|x^-\|_{\ell_p, \mu}^p$, we have 
\begin{align*}
\mathcal{R}_p(x) \geq \min\{R_p(x^+), R_p(x^-)\}.
\end{align*} 
By applying Lemma~\ref{cheegeracc} on $x^+$ and $x^-$, and by observing that $c(x^+), c(x^-) \geq c(x)$, we have 
\begin{align*}
\mathcal{R}_p(x) \geq \min\{R_p(x^+), R_p(x^-)\} \geq \left(\frac{1}{\tau}\right)^{p-1} \left(\frac{\min\{c(x^+), c(x^-)\}}{p}\right)^p \geq \left(\frac{1}{\tau}\right)^{p-1} \left(\frac{c(x)}{p}\right)^p,
\end{align*} 
which concludes the proof. 

\section{Proof for Lemma~\ref{SDPapprox}}
First, it can be easily shown that $Ux\perp \mathbf{1}$, since
\begin{align*}
\sum_{v\in [N]} \mu_v x_v =\sum_{v\in [N]} \mu_v (x'_v)^T g = (\sum_{v\in [N]} \mu_v x'_v)^T g= 0. 
\end{align*}
Next, we establish a lower bound for $\|x\|_{\ell_2, \mu}^2$. For this purpose, we find the following lemma useful. 
\begin{lemma}[Lemma 7.7~\cite{louis2015hypergraph}]~\label{boundnorm}
Let $Y_1$, $Y_2$, ..., $Y_k$ be zero-mean normal random variables that are not necessarily independent, such that $\mathbb{E}[\sum_{i} Y_i^2]= 1.$ Then, 
\begin{align*}
\mathbb{P}\left[\sum_{i} Y_i^2 \geq \frac{1}{2}\right] \geq \frac{1}{12}. 
\end{align*}
\end{lemma}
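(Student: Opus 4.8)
\noindent\textbf{Proof sketch / plan for Lemma~\ref{boundnorm}.}
This is the anti-concentration estimate quoted from \cite{louis2015hypergraph}, used right afterwards to lower bound $\|x\|_{\ell_2,\mu}^2$ with constant probability; a short self-contained argument is available, and the plan is as follows. Write $Z=\sum_{i=1}^k Y_i^2\ge 0$, so the hypothesis reads $\mathbb{E}[Z]=1$, and let $C=(c_{ij})$, $c_{ij}=\mathbb{E}[Y_iY_j]$, be the covariance matrix of the centered (jointly) Gaussian vector $(Y_1,\dots,Y_k)$; note $\mathrm{tr}(C)=\sum_i c_{ii}=\mathbb{E}[Z]=1$ and $C\succeq 0$. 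I would bound the second moment $\mathbb{E}[Z^2]$ and then combine it with a one-line Paley--Zygmund-type argument; crucially, no independence among the $Y_i$ is used.

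First I would compute $\mathbb{E}[Z^2]$. Using Isserlis' (Wick's) formula for fourth moments of a centered Gaussian, $\mathbb{E}[Y_i^2Y_j^2]=c_{ii}c_{jj}+2c_{ij}^2$, one gets
\begin{align*}
\mathbb{E}[Z^2]=\sum_{i,j}\mathbb{E}[Y_i^2Y_j^2]=\Big(\sum_i c_{ii}\Big)^2+2\sum_{i,j}c_{ij}^2=\mathrm{tr}(C)^2+2\,\|C\|_F^2 .
\end{align*}
(Equivalently, diagonalize $C$ and write $Z=\sum_\ell\lambda_\ell g_\ell^2$ with $g_\ell$ i.i.d.\ standard normal, giving $\mathbb{E}[Z^2]=(\sum_\ell\lambda_\ell)^2+2\sum_\ell\lambda_\ell^2$, the same quantity.) Since the eigenvalues $\lambda_\ell\ge 0$ of $C$ sum to $\mathrm{tr}(C)=1$, we have $\|C\|_F^2=\sum_\ell\lambda_\ell^2\le(\sum_\ell\lambda_\ell)^2=1$, hence $\mathbb{E}[Z^2]\le 1+2=3$.

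Finally I would split the mean as $1=\mathbb{E}[Z]=\mathbb{E}[Z\,\mathbf{1}_{\{Z<1/2\}}]+\mathbb{E}[Z\,\mathbf{1}_{\{Z\ge 1/2\}}]$, bound the first term by $\tfrac12$ and the second by $\sqrt{\mathbb{E}[Z^2]}\,\sqrt{\mathbb{P}[Z\ge 1/2]}$ via Cauchy--Schwarz, to obtain $\tfrac12\le\sqrt{\mathbb{E}[Z^2]}\,\sqrt{\mathbb{P}[Z\ge 1/2]}$, i.e.\ $\mathbb{P}[Z\ge 1/2]\ge\tfrac{1}{4\,\mathbb{E}[Z^2]}\ge\tfrac{1}{12}$, which is exactly the assertion. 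The only mildly delicate points are the fourth-moment identity for \emph{dependent} Gaussians (supplied by Isserlis, or equivalently by diagonalization) and the bound $\|C\|_F^2\le\mathrm{tr}(C)^2$ for a positive semidefinite matrix; neither is a genuine obstacle, so the main work is essentially just these two elementary facts together with the Cauchy--Schwarz splitting.
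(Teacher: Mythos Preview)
Your argument is correct and self-contained. The paper itself does not prove this lemma at all: it simply quotes it from \cite{louis2015hypergraph} and immediately applies it to the random variables $\sqrt{\mu_v}\,(x'_v)^Tg$, so there is nothing to compare against on the paper's side.

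One small point worth flagging: the lemma as stated only says ``zero-mean normal random variables that are not necessarily independent,'' which taken literally need not mean \emph{jointly} Gaussian, and Isserlis' identity requires joint Gaussianity. You implicitly (and correctly, given the application) assume joint Gaussianity when you write the covariance matrix $C$ and invoke Wick's formula; in the paper's setting the $Y_v=\sqrt{\mu_v}\,(x'_v)^Tg$ are linear images of a standard Gaussian vector $g$, hence jointly Gaussian, so your assumption is exactly what is needed. If you want your write-up to match the stated hypothesis verbatim, just add the phrase ``jointly Gaussian'' to the hypothesis or note explicitly that this is how the lemma is used. With that clarification, your second-moment bound $\mathbb{E}[Z^2]=\mathrm{tr}(C)^2+2\|C\|_F^2\le 3$ and the Cauchy--Schwarz/Paley--Zygmund split giving $\mathbb{P}[Z\ge 1/2]\ge 1/(4\,\mathbb{E}[Z^2])\ge 1/12$ are both clean and correct.
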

%\begin{proof}
%\begin{align*}
%\mathbb{E}\left[\left(\sum_{i} a_i^2\right)^2 \right] = \sum_{i,j}  \mathbb{E}\left[a_i^2 a_j^2 \right] \leq  \sum_{i,j} \left(\mathbb{E}\left[a_i^4\right]\right)^{\frac{1}{2}}  \left(\mathbb{E}\left[a_j^4 \right]\right)^{\frac{1}{2}}  \stackrel{a)}{=}3 \sum_{i,j} \mathbb{E}\left[a_i^2\right]\mathbb{E}\left[a_j^2 \right] = 3 \left(\sum_{i} \mathbb{E}\left[a_i^2\right]\right)^2 = 3.
%\end{align*}
%where $a)$ is from $\mathbb{E}(a^4) = 3\mathbb{E}(a^2)$ if $a$ is zero-mean normal random variable.  Moreover,
%\begin{align*}
%\mathbb{P}\left[\sum_{i} a_i^2 \geq \frac{1}{2} \mathbb{E}\right]
%\end{align*}
%\end{proof}
We start by observing that
\begin{align*}
\mathbb{E}[\|x\|_{\ell_2, \mu}^2] = \mathbb{E}[\|X^Tg\|_{\ell_2, \mu}^2]  = \sum_{v\in [N]} \mu_v \|x'_v\|_2^2  = 1.
\end{align*}
From Lemma~\ref{boundnorm}, it follows that
\begin{align}\label{demoninator}
\mathbb{P}\left[\|x\|_{\ell_2, \mu}^2 \geq \frac{1}{2}\right]\geq \frac{1}{12}.
\end{align}
Next, we prove an upper bound for $Q_2(x)$. For any $e\in E$, $w\in \mathcal{E}(\mathcal{B}_e)$, we have
\begin{align}
\mathbb{E}\left[\left(\max_{y\in \mathcal{E}(\mathcal{B}_e)}\langle y, x' \rangle\right)^2\right] &= \mathbb{E}\left[\left(\max_{y\in \mathcal{E}(\mathcal{B}_e)}\left\langle g, \frac{Xy}{\|Xy\|_2} \right\rangle\right)^2 \|Xy\|_2^2 \right] \nonumber \\
&\leq \mathbb{E}\left[\left(\max_{y\in \mathcal{E}(\mathcal{B}_e)}\left\langle g, \frac{Xy}{\|Xy\|_2} \right\rangle\right)^2\right] \max_{y'\in \mathcal{E}(\mathcal{B}_e)} \|Xy'\|_2^2. \label{firststep}
\end{align}
Suppose that the hyperedge $e$ contains the following vertices $e = \{v_1, v_2, \dots, v_{|e|}\}$. Let $\mathbb{A} = \text{Span}(x'_{v_1} - x'_{v_{|e|}}, x'_{v_2} -x'_{v_{|e|}}, ...,  x'_{v_{|e|-1}} - x'_{v_{|e|}})$ and let $
\mathbb{S}^n$ stand for the unit ball in $\mathbb{R}^n$. Recall $n$ is the dimension of the space to embed the vectors for SDP relaxation which is no less than  $\zeta(E)$. Then, given that $\sum_{v\in e} y_v= 0$ and $y_u= 0$ for $u\notin e$, $\frac{Xy}{\|Xy\|_2}$ always lies in $\mathbb{A}\cap \mathbb{S}^n$. Therefore,
\begin{align}\label{secondstep}
\mathbb{E}\left[\left(\max_{y\in \mathcal{E}(\mathcal{B}_e)}\left\langle g, \frac{Xy}{\|Xy\|_2} \right\rangle \right)^2 \right] \leq \mathbb{E}\left[\left(\max_{x' \in \mathbb{A}\cap \mathbb{S}^n}\left\langle g, x'  \right\rangle \right)^2 \right] = \text{dim}(\mathbb{A}) = |e| -1. 
\end{align}
Combining \eqref{firststep} with \eqref{secondstep}, we have
\begin{align}\nonumber
\mathbb{E}[(\max_{y\in \mathcal{E}(\mathcal{B}_e)}\langle y, x' \rangle)^2] \leq (|e| -1) \max_{y\in \mathcal{E}(\mathcal{B}_e)} \|Xy\|_2^2.
\end{align}
As $Q_2(x) = \sum_{e\in E}w_e(\max_{y\in \mathcal{E}(\mathcal{B}_e)}\langle y, x \rangle)^2$, using Markov's inequality, we have 
\begin{align}\label{numerator}
\mathbb{P}\left( Q_2(x)  \geq 13 \, \zeta(E) \, \sum_{e\in E} \max_{w'\in \mathcal{E}(\mathcal{B}_e)} \|Yw'\|_2^2 \right) \leq \frac{1}{13}.
\end{align}
In addition, applying the union bound to \eqref{numerator} and using \eqref{demoninator}, we have 
\begin{align}
\mathbb{P}\left( \mathcal{R}_2(x) \leq 26\;\text{SDPopt} \right) \geq \frac{1}{13}.
\end{align}
which concludes the proof. 

Note that the distortion term $O(\zeta(E))$ is introduced through the inequalities~\eqref{firststep} and \eqref{secondstep}, which are tight for this case. This may be shown as follows. Suppose the solution of the SDP produces a collection of vectors $\{x'_{v_i}\}_{1\leq i\leq|e|}$ that have the same $\ell_2-$norm, i.e. $\|x'_{v_i}\|_2 = a$,  and are orthogonal in $\mathbb{R}^{n}$. Let $\mathcal{B}_e$ denote the base polytope corresponding to a submodular function satisfying $w_e(S) = \frac{2}{|e|}\min\{|S\cap e|, |e| -|S\cap e|\}$. Define a subset of $\mathcal{B}_e$, $\mathcal{B}_{e, \text{s}}$, as follows
\begin{align*}
%\mathcal{B}_e' \triangleq\left\{w\in\mathbb{R}^[N]| |w(\{v_i\})| \leq 1 \;\text{for $1\leq i\leq |e|/2$,}\; w(\{v\}) = 0 \;\text{for $v\not\in e$, }\; \text{and}\; \sum_{v\in e}w(\{v\}) = 0\right \}.
\mathcal{B}_{e, \text{s}} \triangleq\left\{y\in\mathbb{R}^N| |y(\{v_i\})| \leq \frac{2}{|e|}, y(\{v_{i+|e|/2}\}) = - y(\{v_i\}),\;\text{for $1\leq i\leq |e|/2$,}\; , y(\{v\}) = 0, \;\text{for $v \notin e$} \right \}.
\end{align*}
Then, choosing a $y'$ in $B_{e, \text{s}}$ such that $y'(\{v_i\}) =\frac{2}{|e|} \frac{\langle g, x'_{v_i} - x'_{v_{i+|e|/2}}\rangle}{|\langle g, x'_{v_i} - x'_{v_{i+|e|/2}}\rangle|}$ for $1\leq i\leq|e|/2$, we obtain 
\begin{align*}
\mathbb{E}\left[\left(\max_{y\in \mathcal{E}(\mathcal{B}_e)}\left\langle g, Xy \right\rangle\right)^2\right] & \geq \mathbb{E}\left[\left\langle g, Xy' \right\rangle^2\right]\\
&=\mathbb{E}\left[\left\langle g, \sum_{1\leq i \leq |e|/2} \frac{2}{|e|} \frac{\langle g, x'_{v_i} - x'_{v_{i+|e|/2}}\rangle}{|\langle g, x'_{v_i} - x'_{v_{i+|e|/2}}\rangle|}(x'_{v_i} - x'_{v_{i+|e|/2}}) \right\rangle^2\right]  \\
&\geq \frac{4}{|e|^2} \mathbb{E}\left[\sum_{1\leq i \leq |e|/2} |\langle g, x'_{v_i} - x'_{v_{i+|e|/2}}\rangle|\right]^2\\
&\geq \frac{4}{|e|^2} \left[\sum_{1\leq i\leq |e|/2} \mathbb{E}|\langle g, x'_{v_i} - x'_{v_{i+ |e|/2}}\rangle|\right]^2 \\
& = \frac{4}{|e|^2} \left(\frac{|e|}{2} \times \sqrt{2}a \sqrt{\frac{2}{\pi}}\right)^2=  \frac{|e|}{\pi} \frac{4}{|e|}a^2  \\
&\geq \frac{|e|}{\pi}a^2\max_{y\in \mathcal{E}(\mathcal{B}_e)} \|y\|^2 \geq \frac{|e|}{\pi}\max_{y\in \mathcal{E}(\mathcal{B}_e)} \sum_{1\leq i\leq |e|} \|y(\{v_i\}) x_{v_i}\|_2^2 \\
& = \frac{1}{\pi}|e|\max_{y\in \mathcal{E}(\mathcal{B}_e)} \|Xy\|^2,
\end{align*}
where the last equality is using the assumption that $\{x_{v_i}\}_{v_i \in e}$ are mutually orthogonal. Therefore, the Gaussian projection $X$ causes distortion $\Theta(|e|)$. 

\section{Proof of Theorem~\ref{SDPfinal}}

By combining Theorem~\ref{algoform}, Theorem~\ref{thresholding}, Lemma~\ref{SDPapprox} and Theorem~\eqref{cheeger}, we obtain 
\begin{align*}
c(x) &\leq O(\sqrt{\tau}) \mathcal{R}_2(x)^{1/2} \leq  O(\sqrt{\zeta(E)\, \tau}) \left(\inf_{x} \mathcal{R}_2(x)\right)^{1/2} \\
&= O(\sqrt{\zeta(E) \, \tau}) \left(\lambda_2^{(2)}\right)^{1/2} \leq O(\sqrt{\zeta(E)\, \tau h_2}) \,\text{w.h.p.}
\end{align*}

\section{Proof of Theorem~\ref{convergence}}
 First, according to Step 3, we have 
 \begin{align*}
 Q_1(z^{k+1}) - \hat{\lambda}^k \langle z^{k+1}, g^k\rangle \leq Q_1(z^{k}) -  \hat{\lambda}^k \langle z^{k}, g^k\rangle.
 \end{align*}
 It is also straightforward to check that $g^{k}$ satisfies 
 \begin{align*}
 \langle \mathbf{1}, g^k \rangle = 0,\quad \quad \langle x^{k}, g^k\rangle = \|x^{k}\|_{\ell_1,\mu}.
 \end{align*}
 Therefore,
  \begin{align*}
 Q_1(x^{k+1}) -  \hat{\lambda}^k \langle x^{k+1}, g^k\rangle &= Q_1(z^{k+1}) -\hat{\lambda}^k \langle z^{k+1}, g^k\rangle \leq Q_1(z^{k}) - \hat{\lambda}^k \langle z^{k}, g^k\rangle \\
 &= Q_1(x^{k}) - \hat{\lambda}^k \langle x^{k}, g^k\rangle  = Q_1(x^{k}) - \hat{\lambda}^k \|x^{k}\|_{\ell_1,\mu} = 0,
 \end{align*}
 which implies 
  \begin{align*}
\mathcal{R}_1(x^{k+1}) \leq \hat{\lambda}^k \frac{\langle x^{k+1}, g^k\rangle}{Z_{1,\mu}(x^{k+1})} = \hat{\lambda}^k \frac{\langle x^{k+1}, g^k\rangle}{\|x^{k+1}\|_{\ell_1,\mu}} \leq \hat{\lambda}^k \frac{\|x^{k+1}\|_{\ell_1,\mu} \|g^{k}\|_{\ell_{\infty},\mu^{-1}}}{\|x^{k+1}\|_{\ell_1,\mu}} \stackrel{1)}{\leq} \hat{\lambda}^k. 
 \end{align*}
 Here, $1)$ follows from Lemma 3.11 which implies $\|g^{k}\|_{\ell_{\infty},\mu^{-1}}\leq 1$. This proves the claimed result. 

\section{Proof of Theorem~\ref{diffprob}}
If the norm $\|z\|$ stands for $\|z\|_2$, the duality result holds since
\begin{align*}
\min_{z:  \|z\|_2\leq 1} Q_1(z) - \hat{\lambda}^k \langle z, g^k\rangle &= \min_{z}\max_{\lambda\geq 0} \max_{y_e\in \vartheta_e\mathcal{B}_e} \sum_{e} \langle y_e, z\rangle -   \hat{\lambda}^k \langle z, g^k\rangle + \frac{\lambda}{2} (\|z\|_2^2 -1) \\
&=\max_{y_e\in \vartheta_e \mathcal{B}_e}\max_{\lambda\geq 0}\min_{z}\sum_{e} \langle y_e, z\rangle -  \hat{\lambda}^k \langle z, g^k\rangle + \frac{\lambda}{2} (\|z\|_2^2 -1) \\
& = \max_{y_e\in \vartheta_e\mathcal{B}_e} \max_{\lambda\geq 0} -\frac{ \|\sum_{e\in E} y_e - \hat{\lambda}^k g^k\|_2^2}{2\lambda} - \frac{\lambda}{2} \\
& = \max_{y_e\in \vartheta_e \mathcal{B}_e} -\|\sum_{e\in E} y_e - \hat{\lambda}^k g^k\|_2.
\end{align*}
The relationships between the primal and dual variables read as $z =\frac{ \hat{\lambda}^k g^k- \sum_{e\in E} y_e }{\lambda}$ and $\lambda =  \|\sum_{e\in E} y_e - \hat{\lambda}^k g^k\|_2$. 

If the norm $\|z\|$ stands for $\|z\|_{\infty}$, let $z' = (z + \mathbf{1})/2$. As $Q_1(z') = Q(z)/2$ and $\langle g^k, z' \rangle = \langle g^k, z \rangle/2$, we have 
\begin{align*}
\min_{z:  \|z\|_{\infty}\leq 1} \frac{1}{2}[Q_1(z) - \hat{\lambda}^k \langle z, g^k\rangle] \quad  \iff \quad  \min_{z': z' \leq [0, 1]^N}  Q_1(z') - \hat{\lambda}^k\langle z', g^k\rangle
\end{align*}
The right hand side essentially reduces to the following discrete optimization problem (Proposition 3.7~\cite{bach2013learning})
\begin{align*}
\min_{S\subseteq [N]} \sum_{e} \vartheta_e w_e(S) - \hat{\lambda}^k  g^k(S),
\end{align*}
where the primal and dual variables satisfy $z'_v = 1,$ if $v\in S,$ or $0$ if $v\not\in S$. 

\end{document}